\definecolor{bleu}{HTML}{0077BB}
\pgfplotsset{
    compat = 1.17,
}
\DeclareMathOperator{\magnitude}{Mag}
\newcommand{\ones} {\mathds{1}}
\newcommand{\reals}{\mathds{R}}
\theoremstyle{plain}
\newtheorem{theorem}{Theorem}[section]
\newtheorem{corollary}[theorem]{Corollary}
\theoremstyle{definition}
\newtheorem{definition}[theorem]{Definition}
\theoremstyle{remark}
\title{Metric Space Magnitude for Evaluating the Diversity of Latent Representations}
\author[1,2]{Katharina Limbeck}
\author[3]{Rayna Andreeva}
\author[3]{Rik Sarkar}
\author[1,2,4]{Bastian Rieck}
\affil[1]{Helmholtz Munich}
\affil[2]{Technical University of Munich}
\affil[3]{University of Edinburgh}
\affil[4]{University of Fribourg}
\begin{document}

\maketitle

\begin{abstract}
The \emph{magnitude} of a metric space is a novel
invariant that provides a measure of the `effective size' of a space across
multiple scales, while also capturing numerous geometrical properties, such as curvature, density, or entropy.
We develop a family of magnitude-based measures of the intrinsic
diversity of latent representations, formalising a novel notion of
dissimilarity between magnitude functions of finite metric spaces.
Our measures are provably stable under perturbations of the data, can be
efficiently calculated, and enable a rigorous multi-scale characterisation and comparison of
latent representations. 
We show their utility and superior performance across different domains and tasks, including
\begin{inparaenum}[(i)]
\item the automated estimation of diversity,
\item the detection of mode collapse, and
\item the  evaluation of generative models for text, image, and graph data.
\end{inparaenum}

\end{abstract}

\section{Introduction}

\label{intro}

Diversity is a key concept in representation learning, referring to the relative abundance and distinctiveness of model outputs.
Given the inherent complexity of deep learning models,
the evaluation of diversity is thus crucial, enabling
\begin{inparaenum}[(i)]
    \item the assessment of the \emph{intrinsic richness} of latent representations, and
    \item the evaluation of the extent to which models are capable of \emph{preserving} the properties of an input distribution.
\end{inparaenum}
While the quantitative evaluation of generative models in particular relies on assessing trade-offs 
between fidelity and diversity with regards to a known reference
distribution, reference-free diversity measures are becoming 
increasingly relevant when a ground-truth distribution is unknown or intractable.
However, reference-based diversity metrics such as \emph{recall} are
notoriously fallible, sensitive to parameter choices and therefore prone to incorrectly approximate the true data manifold, whereas reference-free diversity measures 
often rely on simple mean summaries that fail to pass basic consistency checks~\citep{friedman2022vendi}. Thus, existing methods lack expressivity to fully capture what it means for a space to be diverse, resulting in a critical need for novel measures that are
\begin{inparaenum}[(i)]
    \item theoretically motivated,
    \item robust to noise, and
    \item capable of encoding the intrinsic 
diversity of data across varying levels of similarity rather than at a single fixed threshold.
\end{inparaenum}

\textbf{Our contributions.} Addressing this need, we propose a novel family of diversity 
measures based on \emph{metric space magnitude}, a 
mathematical invariant that captures numerous important multi-scale
geometric characteristics of metric spaces,
including curvature, density, and entropy of an input space.
Metric space magnitude merely requires a notion of dissimilarity between data points, permitting it to operate on both \emph{local} and \emph{global} scales.
Hence, magnitude is poised to compare latent spaces, yielding a compact holistic summary of diversity that satisfies relevant theoretical requirements. 
Our work is the first to
\begin{inparaenum}[(i)]
    \item introduce magnitude as a general tool for evaluating the diversity of latent representations, and 
    \item  formalise a notion of difference between the magnitude of two spaces across multiple scales of similarity. 
\end{inparaenum}
We demonstrate that magnitude is stable and can detect curvature, highlighting its use as a multi-scale summary of the local and global geometry of data. 
Moreover, we empirically showcase the utility of our magnitude-based diversity measure across different modalities, namely text, image, and graph embeddings, for which we observe that our measure outperforms alternative embedding-based measures of intrinsic diversity.
Finally, when a reference distribution is known, our magnitude-based notion of
difference reliably detects \emph{mode collapse} and \emph{mode dropping}, thus
assisting practitioners in model evaluation and selection.

\begin{tcolorbox}[
  boxsep     = 0.0mm,
  colframe   = bleu,
  colback    = bleu!10,
  left       = 2.0mm,
  right      = 2.0mm,
  sharp corners,
]
  \textbf{In a nutshell:} We propose novel \textit{multi-scale diversity measures} based on the \textit{magnitude} of latent representations and show their theoretical and empirical advantages for  \textit{evaluating} the diversity of text, image, and graph \textit{embeddings arising from generative models}.
\end{tcolorbox}

\section{Related Work}

Latent representations and
embeddings have become indispensable tools for analysing data types such as images, text, and graphs.
As evidenced by LLMs, understanding semantic relationships in data requires meaningful embeddings.
Our work focuses on improving representation-based diversity evaluation and we thus consider the role diversity plays in this context.

\paragraph{Diversity measures.} 

Assessing generative model diversity remains a challenge irrespective of the domain~\citep{theis2015note}, as ground truth reference distributions or labelled data are often unavailable, and human evaluation remains costly. Thus, there exists a need for interpretable, automated and unsupervised measures of intrinsic diversity. 
\emph{Reference-free evaluation} is of particular importance for assessing generated text given the black-box-nature of LLMs \citep{celikyilmaz2020evaluation}, but also applicable across modalities. 
Motivated by this, a varied collection of diversity measures has been proposed,
many of which are \mbox{task-}, domain- or model-specific~\citep{friedman2022vendi}; 
only a fraction of them 
are applicable to analysing latent representations specifically. 
The most
flexible methods 
summarise intrinsic diversity using average pairwise dissimilarities like $L^p$ distances or BERT-scores~\citep{tevet2021evaluating}. 
More recently, \citet{friedman2022vendi} proposed the Vendi Score,
inspired by principles from theoretical ecology.
Other diversity measures are computed directly on embedding spaces, using e.g.\ the geometric mean of the standard deviation across each embedding dimension~\citep{lai2020diversity} or cluster-based measures~\citep{du2019boosting}.
However, as we explore in \cref{sec:desid}, none of these measures satisfy all theoretical guarantees required by an axiomatic approach to diversity, and they are limited in expressivity, providing only snapshots of diversity at a single fixed resolution.
\emph{Reference-based metrics} define diversity as the extent to which generated samples cover the full variability of the real data~\citep{naeem2020reliable}. Examples include the Fréchet  Inception Distance~(FID) or the Inception score~(IS). 
However, they do not exclusively measure diversity but are also concerned with evaluating fidelity, i.e.\ the assessment of similarity between generated data and real data, making it unclear how single-number summaries such as FID and IS account for each aspect in the trade-off between diversity and quality. Thus, \emph{precision} and \emph{recall} have been suggested as more informative summary metrics~\citep{sajjadi2018assessing} and seen various improvements~\citep{kynkaanniemi2019improved,simon2019revisiting,naeem2020reliable}. 
Unfortunately, as \citet{naeem2020reliable} show, even the improved versions of precision and recall fail to satisfy the useful conditions for strong evaluation metrics, such as
\begin{inparaenum}[(i)]
    \item detecting identical reference and generated distributions,
    \item capturing mode dropping, and
    \item simplicity in selecting hyperparameters.
\end{inparaenum}
To address these concerns, \emph{density} and \emph{coverage} have been proposed~\citep{naeem2020reliable}.
Nevertheless, these metrics still rely on fixed-scale manifold approximations to assess diversity making them sensitive to parameter choices. 
By contrast, our magnitude-based measures have less stringent assumptions and can be defined in a parameter-free fashion.

\paragraph{Magnitude in machine learning.}

Since its introduction to measure biological diversity~\citep{solow1994measuring}, 
magnitude was formalised by \citet{leinster2013magnitude}.
Nevertheless, despite strong geometric properties~\citep{leinster2021entropy}, magnitude has only rarely been applied in a machine learning context. Recent publications started to bridge this gap, linking magnitude to boundary detection~\citep{bunch2021weighting}, edge detection in images~\citep{adamer2024magnitude}, and the generalisation error of neural networks~\citep{andreeva2023metric}, as well as demonstrating its utility for multi-objective optimisation~\citep{huntsman2023diversity}.
However, the full potential of magnitude for measuring diversity remains largely unexplored since existing works ignore the nature of magnitude as an intrinsic multi-scale summary, which captures both local and global geometry and diversity of the data manifold. 
Our work is thus the first to leverage magnitude as a flexible, multi-scale measure of diversity in latent representations.

\section{Methods}\label{sec:Methods}

We first discuss the theoretical properties a suitable diversity measure should satisfy and then introduce metric space magnitude.
Based on this, we outline our proposed method using magnitude for measuring the diversity of latent representation and its practical implementation.

\subsection{Desiderata for Diversity Measures}
\label{sec:desid}

Given the difficulty in defining diversity, diversity metrics never measure diversity itself, but rather quantify related ideas. 
Entropy-based approaches, including magnitude, in particular share close links to diversity, often favoured in ecology for their computational benefits and agreement with fundamental axioms of diversity~\citep{daly2018ecological}. 
Following this axiomatic approach, we highlight the following key requirements~\citep{leinster2021entropy}: 

\begin{compactitem}
    \item \emph{Effective size:} 
    A dataset with a fixed number of points is more diverse when points are separated e.g. distributed uniformly or maximally disordered and becomes less diverse as observations cluster together. 
    Diversity is maximised when points are completely distinct and minimised when all observations are identical. 
  \item \emph{Monotonicity in observations}: Including a new observation does not decrease diversity.
  \item \emph{Twin property}: Including a duplicate observation does not change diversity. 
\item \emph{Multi-scale}: Diversity is summarised across multiple scales of (dis)similarity and thus captures both local and global trends in the data manifold.
\end{compactitem}

This list is not conclusive; \cref{app:axioms} provides a more rigorous discussion of desirable properties and their definitions. 
We observe that many diversity measures for evaluating representations in ML do not satisfy these requirements as shown via counterexamples in \cref{app:counter}. 
For example, average similarity~(\textsc{AvgSim}), the most frequently-used diversity measure in ML, 
cannot capture nuances in diversity and fails even in simple toy scenarios~\citep{friedman2022vendi}. Specifically, it does not give a measure of effective size and does not encode the entropy or disorder of a space, which is a key aspect of diversity. 
Consequently, \textsc{AvgSim} fails to distinguish that a more clustered representation is less diverse than a more uniformly sampled space as illustrated in \cref{app:sim_toy}.  
Likewise, the geometric mean of the standard deviations across each embedding dimension~\citep[\textsc{GMStds}]{lai2020diversity} 
does \emph{not} 
 measure effective size, and even worse it equals zero whenever an embedding feature is constant. 
Even the Vendi Score~\citep[VS]{friedman2022vendi}, a more purpose-built diversity measure, calculated as the exponential of the Shannon entropy of the eigenvalues of a normalised similarity matrix, shows undesirable behaviour under the inclusion of observations. 
Moreover, neither one of the aforementioned diversity measures fulfil the twin property nor monotonicity in observations~\citep{leinster2021entropy}, leading to counter-intuitive behaviour when capturing changes in diversity. 
For example, an exact repetition of the reference data could be wrongly judged to be more diverse than a model that generates more samples with small but relevant deviations from the reference. 
Further, we argue that diversity is a multi-scale trend that should describe the effective size of a space across multiple  
levels of (dis)similarity rather than rely on fixed-scale snapshots. 
Indeed, summarising both the coarse and more granular geometry of the data manifold is necessary to get a complete picture of both local and global differences in entropy, clusterability and diversity. 

This discussion 
thus points out a glaring need for more principled diversity measures.  
Addressing this, 
\emph{magnitude functions} are particularly promising candidates for improved diversity measures that inherently satisfy all desiderata listed above, 
as shown in \cref{app:axioms}. 
Many alternative summaries trivially fulfil a number of basic properties of diversity.
However, it is non-trivial 
to satisfy \emph{all} the desired axioms, making magnitude functions unique in their formulation. 
This axiomatic justification as well as our multi-resolution approach to diversity are the distinguishing characteristics and main advantage of our proposed diversity evaluation methods.

\subsection{The Magnitude of a Metric Space}
\label{sec:magnitude}
\emph{Magnitude} is an invariant that measures diversity by
describing the `effective number of points' of a metric space
as a function of its scaled distances~\citep{leinster2013magnitude}. 
\begin{definition}[Magnitude of a metric space]
    \label{def:Magnitude}%
    Let $X = \{x_1, \dots, x_n\} \subseteq \reals^D$ be a finite metric space with an associated distance  metric $d$.
    For $1 \leq i, j \leq n$, 
     the \emph{similarity matrix} of $X$ is calculated as $\zeta_X(i,j)~:=~\exp(-d(x_i,x_j))$.
    If $\zeta_X$ is invertible,
    the \emph{magnitude} of $X$ is defined as 
    \begin{equation}
        \magnitude(X) := \sum_{ij}(\zeta_X^{-1})_{ij}.
    \end{equation}%
\end{definition}
The existence of magnitude is thus contingent on the existence of $\zeta_X^{-1}$.
For negative definite metrics~$d$ like the $L^1$ and $L^2$ distance, $\zeta_X$ is invertible~\citep{Feragen15a}.
Subsequently, we assume that $(X, d)$ permits the calculation of magnitude; in particular, $X$ must \emph{not} have any duplicate points. 
While the magnitude of a metric space is expressive even at a \emph{single} scale~\citep{leinster2013magnitude, leinster2021magnitude,bunch2021weighting},
magnitude unleashes its full potential in a \emph{multi-scale} setting, assigning to a metric space not just a scalar but a function.
To this end, we scale the distances in~$X$, effectively viewing the same space through different lenses, or at different `zoom factors,' for example by converting distances from centimetres to metres.
Computing the magnitude for all such scales then yields the \emph{magnitude function}.
\begin{definition}[Magnitude function]
    Let $(X, d)$ be a metric space and $tX := (X, d_t)$ its scaled version with $d_t(x,y) := t \cdot d(x,y)$ for a scaling factor $t \in \reals_+$.
    The \emph{magnitude function} of $(X, d)$ is the function $\magnitude_X\colon t \mapsto \magnitude(tX)$.   
    \label{def:Magnitude function}
\end{definition}
For $t \in (0, \infty)$, the magnitude function is defined for all but finitely many values of $t$~\citep{leinster2013magnitude}. 
The magnitude function is also \emph{continuous}~\citep[Corollary~5.5]{meckes2015magnitude}
for negative definite metrics.\footnote{%
$\magnitude_X$ is continuous for $t > t_{\text{crit}}$, where $t_\text{crit}$ is the supremum of its finitely many singularities.
}
For finite metric spaces, 
we have $\lim_{t \to \infty}\magnitude(tX) = |X| = n$, i.e.\ the \emph{cardinality} of $X$~\citep[Proposition~2.2.6]{leinster2013magnitude}.
This limit behaviour exemplifies to what extent the magnitude function describes the diversity of a space as `the effective number of points at scale~$t$.' 
Here, we extend magnitude functions to the domain $[0, \infty)$ by defining $\magnitude_X(0) :=1$.\footnote{This assumes the so-called \emph{one-point property}, i.e.\ $\lim_{t \to \infty}\magnitude_X(0)=1$, which was shown to hold generically for almost all finite metric spaces~\citep{roff2023small}.} 
Intuitively,
this extension means that any metric space, when viewed from far away, looks like a single point. 
Notice that neither \cref{def:Magnitude} nor \cref{def:Magnitude function} explicitly require specific properties of a metric~(like the triangle inequality)
and we find magnitude computable for generalised distance functions, including cosine distances, provided the similarity matrix $\zeta_X$ is invertible.
\cref{fig:magnitude_visualisation_5} illustrates how magnitude functions measure the effective number of distinct points for toy data, thus describing their diversity. Moreover, it provides an overview of our diversity evaluation framework, which we will now introduce.

\begin{figure}[tbp]
    \centering
    \includegraphics[clip, trim=0cm 11.6cm 13.2cm 0cm, width=1\textwidth]{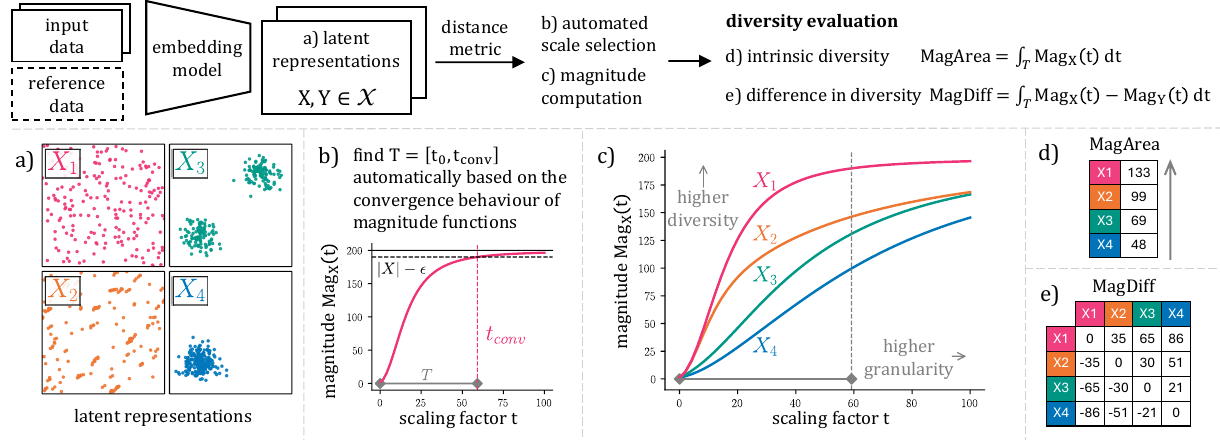}
    \caption{%
    \textbf{Overview of our diversity evaluation pipeline.} (a) We start with an example of four latent 
    spaces with $200$ points, varying in diversity. 
    (b) The magnitude function measures 
    the effective number of points at $t$, a scale of distance between observations. 
    When the scale factor $t$ almost equals zero, 
    magnitude is close to 1,
    and a space effectively looks like one point. For
    large $t$, the number of effective points is noticeably higher and
    magnitude converges towards the cardinality. We find 
    the approximate convergence scale, $t_{\text{conv}}$, at which magnitude almost equals the cardinality, and use it to define the evaluation interval $T$ across which diversity changes most notably. 
    (c) The more diverse the space, the higher the value of its magnitude function. By construction, $X_1$ is more diverse than $X_2$, $X_3$, and $X_4$, respectively, as we can see from the effective size of each space. We leverage this behaviour to define novel multi-scale indicators of diversity. 
    (d) Our proposed measure of intrinsic diversity, \textsc{MagArea}, summarises the area under each magnitude function for \emph{reference-free} diversity evaluation. 
    (e) In a \emph{reference-based} setting, we assess the difference in diversity using \textsc{MagDiff}, the area between two magnitude functions. 
    }
    \label{fig:magnitude_visualisation_5}
\end{figure}

\subsection{Magnitude for Evaluating Diversity}

As a multi-scale geometric invariant, magnitude can be extended to  evaluate the diversity of latent representations.
Here, we are studying a set of latent representations $\mathcal{X}=\{X_1, X_2, \dots \}$, where each $X_i \in \mathcal{X}$ 
is a finite subset of some latent space sharing the same notion of distance, e.g.\ $X_i \subseteq \reals^D$.
Given a latent representation~$X \in \mathcal{X}$, e.g.\ a text, image, or graph embedding, we can use the $L^1$ or $L^2$ distance as a metric or  semi-metrics like the cosine distance.
Based on the choice of metric, we can interpret  $\magnitude_X(t)$ as the effective number of points at scale $t$. In practice, this summarises how diverse points in the space are when observed at said scale factor. 
This multi-scale behaviour motivates us to propose a simple but expressive summary of 
a representation's magnitude function.
\begin{definition}[Area under the magnitude function, \textsc{MagArea}]
    Let $X$ be a metric space whose magnitude function $\magnitude_{X}(t)$ has been evaluated across the interval $T =[t_0, t_{\text{cut}}]$.
    We define the area under the magnitude function to be 
    \mbox{$\textsc{MagArea} := \int_{t_0}^{t_{\text{cut}}}\magnitude_{X}(t)dt$}.
    \label{def:area under the magnitude function}
\end{definition}
Moreover, we extend this proposed summary 
to measure the difference in diversity 
between two representations generated by the \emph{same}~(embedding) model. 
Notice that distances in these spaces are directly comparable 
and the respective magnitude functions can be compared across the same domain.
\begin{definition}[Magnitude function difference, \mbox{\textsc{MagDiff}}]
    Let $X$ and $Y$ be two metric spaces that share the same notion of distance. Assume the associated magnitude functions $\magnitude_{X}(t)$ and $\magnitude_{Y}(t)$ have been evaluated across the same 
    interval $T =[t_0, t_{\text{cut}}]$.
    We define the magnitude function difference to be 
    \mbox{$\textsc{MagDiff} := \int_{t_0}^{t_{\text{cut}}}\left(\magnitude_{X}(t)-\magnitude_{Y}(t)\right)dt$}.
    \label{def:Magnitude function difference}
\end{definition}
\cref{def:area under the magnitude function} and \cref{def:Magnitude function difference} constitute novel multi-scale approaches for summarising and comparing magnitude functions, leading to theoretically well-founded diversity measures.  
\textsc{MagArea} measures the cumulative value of magnitude summarising a space's intrinsic diversity while \textsc{MagDiff} measures the accumulated difference in diversity between two spaces. 
As we will later demonstrate in our experiments, integrating the changes in magnitude across a \emph{range} of scale factors retains the desirable properties of single-scale magnitude, but yields more robust multi-scale summaries of diversity~(see \cref{app:stability} for an investigation of stability to perturbations).
Furthermore, this comparison in terms of the effective number of points across scales remains directly interpretable.

\subsection{Practical Usage and Implementation}
\label{sec:practical}

In order to use our magnitude metric for reference-free and reference-based diversity evaluation,  
we obviate the choice of evaluation interval using knowledge about the convergence behaviour of magnitude functions. As a consequence, our magnitude-based diversity measures do not require manual parameter selection. 
First, we define a magnitude function's convergence scale.
\begin{definition}[Convergence scale, $t_\text{conv}$]
    Given a magnitude function $\magnitude_{X}(t)$, we define its approximate convergence scale as $t_\text{conv}\in \reals$, with $\magnitude_{X}(t_\text{conv})=|X|-\epsilon$ for some small $\epsilon >0$. We set $\epsilon \leq 0.05|X|$ in this work.
    \label{def:conv_scale}
\end{definition}
This convergence scale thus indicates the resolution at which at least 
$95\%$ of observations are recognised by magnitude as being distinct.  
After reaching this convergence scale, we know that magnitude functions and hence diversity can increase by at most $\epsilon$ 
based on the convergence 
of 
magnitude towards the cardinality as illustrated in \cref{fig:magnitude_visualisation_5}. 
In practice, however, we find that 
\emph{all} relevant changes in diversity happen at smaller scales of distance when individual points are not yet clearly separated. 
We thus choose the convergence scale defined in \cref{def:conv_scale} to be the upper bound of the evaluation interval $T$
to determine the most informative range of scales. 
We then find the convergence scale using numeric root-finding procedures as illustrated in \cref{app:scales}. 
When comparing the intrinsic diversity of multiple embeddings \emph{without} a reference dataset, we compute \textsc{MagArea} across $T=[0, t_{\text{cut}}]$ and choose 
$t_\text{cut}$ to equal the median of the convergence scales of the embeddings. 
Taking the median here provides a stable compromise between the convergence behaviour of all functions.
For \emph{reference-based comparisons}, we simply calculate  \textsc{MagDiff}, the difference between the magnitude functions,
across $T=[0, t_{\text{ref}}]$ where $t_{\text{ref}}$ is the convergence scale of the reference embedding. 
In practice, we \emph{approximate} the integrals in \cref{def:area under the magnitude function} and \cref{def:Magnitude function difference} via numerical integration across evenly-spaced scales sampled from the evaluation interval $T$. 
Choosing the number of scales is a trade-off between \emph{accuracy} and \emph{computational performance} as computational costs increase linearly with the number of times magnitude is evaluated. 
In terms of implementations, we also improve the efficiency of magnitude computations using a Cholesky decomposition~(see \cref{app:computation} for more details).
Together with our automated scale-selection procedure, we thus overcome the main algorithmic hurdles that hitherto prevented the wider use of magnitude functions.
Finally, we implement our methods in a Python package.\footnote{The code for computing magnitude is available at \url{https://github.com/aidos-lab/magnipy}.}

\subsection{Limitations}
\label{sec:limitations}
\textsc{MagDiff} is a reference-free measure of intrinsic diversity,
but does not measure \emph{fidelity}.  It should therefore not be interpreted in isolation, but jointly with coverage-based metrics, for instance.
Moreover, while we improve the efficiency of magnitude computations~(see \cref{app:computation}) compared to previous implementations \citep{bunch2021weighting}, thus making magnitude calculations feasible for practical analyses, novel  approximation methods would be required to enable scaling to hundreds of thousands of observations. 
Finally, we focus on evaluating representation-based diversity 
and show that, given a latent representation, magnitude yields a better notion of diversity than current embedding-based methods.
We do not   
investigate whether embedding-based similarities are outperformed by alternative task- or domain-specific similarities.
Instead, our evaluation relies on the utility of embedding models and assumes that latent spaces encode useful/realistic relationships between samples.

\section{Experiments}
\label{sec:experiments}

Our experiments demonstrate how magnitude leads to a better understanding of representational diversity. We show the following results:
\begin{inparaenum}[(i)]
    \item Magnitude functions capture the curvature of a space.
    \item Magnitude functions are interpretable measures of the intrinsic 
      diversity of embeddings, yielding superior results than other
      diversity measures when predicting the diversity of sentence
      embeddings across different text-generation tasks.
    \item Magnitude functions characterise and distinguish latent representations of large language models.
    \item Magnitude functions successfully detect mode dropping in distributions of image, and graph embeddings, while also reliably detecting mode collapse in graph embeddings.
\end{inparaenum}
We subsequently use  \textsc{MagArea} in reference-free settings to characterise intrinsic diversity~(i, ii), while using \textsc{MagDiff} for reference-based comparisons~(iii, iv).

\subsection{Magnitude Functions Summarise Geometry}

\begin{figure}[tbp]
    \centering
\begin{minipage}{0.475\textwidth} 
    \centering
    \includegraphics[width = \textwidth]{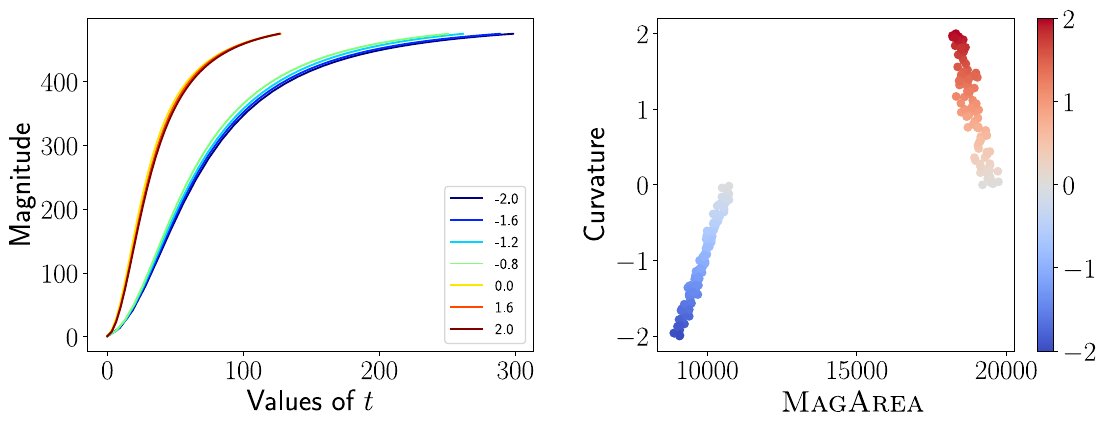} 
    \caption{\textbf{Magnitude detects curvature.}
          Left: Magnitude functions for unit disks with varying curvature between $[-2, 2]$. Right: \textsc{MagArea} exhibits a linear relationship with curvature, indicating that it serves as a expressive predictor.}
    \label{fig:functions_curvature}
\end{minipage}\hfill%
\begin{minipage}{0.475\linewidth}
  \centering
  \sisetup{
      detect-all = true,
      detect-weight = true,
      separate-uncertainty = true,
  }
  \captionof{table}{%
    \textbf{Magnitude estimates curvature.} \textsc{MagArea}
    outperforms more complex
    methods~\citep{turkes2022effectiveness} using a \emph{single feature}.
  }
   \label{tab:Magnitude curvature}
  \small
  \let\b\bfseries
  \resizebox{0.90\textwidth}{!}{
  \begin{tabular}{lS[table-format=3.2(2)]}
  \toprule
  \let\b\bfseries
  Method                        & {MSE~($\downarrow$)}\\
  \midrule
  SVR~(selected PH features)    & 0.27(0.07) \\
  SVR~(PH vectorisation)        & 0.17(0.05) \\
  SVR~(all PH features)         & 0.16(0.03) \\
  \midrule    
  SVR (distance matrices)       & 0.24(0.04) \\
  MLP~(shallow)                 & 1.15(0.52) \\
  MLP~(deep)                    & 1.56(0.68) \\
  \midrule
  \textsc{MagArea} (quantile)      
  & \b0.10 \pm 0.05\\
  \textsc{MagArea} (piecewise linear)       & \b0.05 \pm 0.03\\
  \bottomrule
  \end{tabular}%
}
\end{minipage}
\end{figure}

Magnitude functions encode the `shape,' i.e.\  the geometry that is
characteristic of the intrinsic data manifold, by capturing curvature
and diversity. Curvature estimation is an important task in numerous
domains like computer vision, computational geometry, and computer-aided
design. The notion of curvature is inherently linked to diversity: The more positively curved a space is, the lower its diversity as points on the more curved surface move closer and closer together, thus decreasing its diversity. 
For specific examples of manifolds, magnitude can be expressed in terms of volume and total scalar curvature \citep{willerton2014magnitude}, a theoretical connection that we are the first to investigate empirically for a broader class of spaces. 
Previous works have shown that alternative multi-scale methods,
such as \emph{persistent homology}, are able to detect
curvature~\citep{turkes2022effectiveness, bubenik2020persistent}.
Here, we demonstrate that the magnitude function is capable of achieving comparable performance, using simpler methods and only a single feature, namely \textsc{MagArea}. To this end, we generate a balanced dataset of point clouds of different curvature~(following \citet{turkes2022effectiveness} and detailed in \cref{sec:curvature_exp_app}).
We first assess to what extent the magnitude function can detect whether a unit disk has positive or negative curvature.
Our main observation from plotting the functions for both groups 
in \cref{fig:functions_curvature}  
is that there is a clear separation between spaces of negative and positive curvature.
We further test if we can predict curvature as a regression task.
To this end, we try both piecewise linear and quantile regression,\footnote{Both models were chosen after explanatory analysis to offer multiple proposals on how to interpolate between the \textsc{MagArea} scores for surfaces of negative and positive curvature.
The piecewise linear model better fits the trend in  \cref{fig:functions_curvature}, which is why it outperforms the quadratic relationship modelled via quantile regression. } using the area under the magnitude curve, \textsc{MagArea}, as a single feature.
With $5$-fold cross validation, we achieve an MSE of $0.05 \pm 0.03$ with the piecewise linear model and
$0.10 \pm 0.05$ using quantile regression. Both scores substantially improve on previous methods ~\citep{turkes2022effectiveness}
that made use of highly-sophisticated topology-based features and more heavily-parametrised deep learning models~(see
\cref{tab:Magnitude curvature}).
These results underscore the expressivity and power of magnitude-based metrics, which enable us to solve the \emph{same} task with a highly-simplified model.
Moreover, this also demonstrates how magnitude describes the data manifold across multiple resolutions, motivating the use of magnitude functions as flexible, geometry-aware descriptors of diversity.

\subsection{Magnitude Measures the Intrinsic Diversity of Text Embeddings}
\label{sec:text_diversity}

\begin{figure*}[tbp]
    \centering
      \includegraphics[width=1\linewidth]{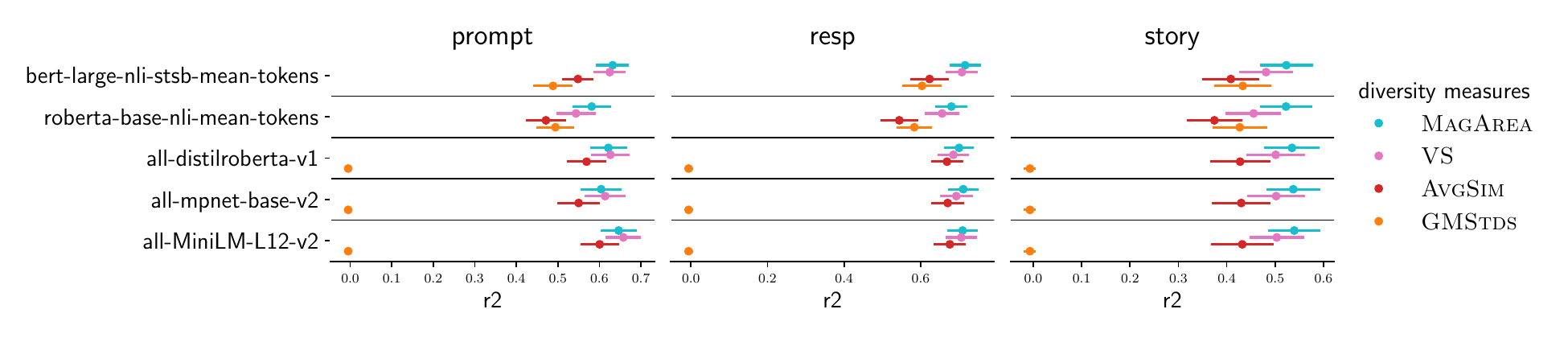}
    \caption{\textbf{\textsc{MagArea} outperforms alternative diversity measures} at predicting the ground truth-diversity of generated sentences, controlled by the softmax-temperature across 3 tasks and 5 embedding models. Baseline measures, \textsc{AvgSim} and \textsc{GMStds}, perform worse in terms of the $R^2$ scores. Points show the mean of the $R^2$ scores, while lines represent the standard deviations across $5$-fold cross-validation~(repeated $10$ times).
    }
    \label{fig:dec_results}
\end{figure*}

\begin{wrapfigure}[14]{r}{7.5cm}
 \vspace{-\baselineskip}
    \centering
\includegraphics[clip, trim=0cm 0.4cm 0cm 1.25cm, width=1\linewidth]{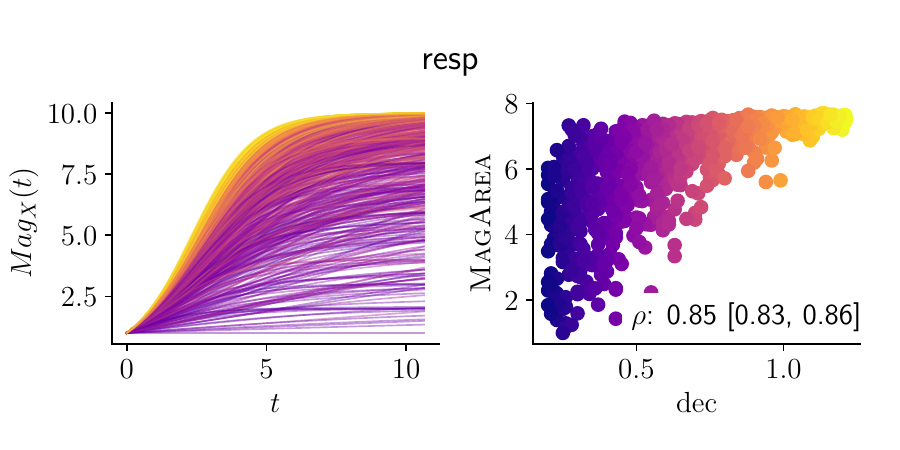}
\caption{\textbf{\textsc{MagArea} correlates well with $\mathrm{dec}$ indicating the true diversity.} Here, we use \texttt{mpnet} embeddings for the \texttt{resp} dataset. $\rho$ denotes the rank correlation between \textsc{MagArea} and $\mathrm{dec}$~(95\% bootstrap interval, $1000$ resamples).}
    \label{fig:mpnet}
\end{wrapfigure}

Next, we demonstrate the utility of using magnitude for intrinsic diversity evaluation and study its correspondence to known ground-truth diversity of text data. 
We analyse data from \citet{tevet2021evaluating}, consisting of 1K sets of $10$ sentences each, generated for unique input prompts for $3$ different sentence generation tasks, namely story completion~(\texttt{story}), dialogue response generation~(\texttt{resp}), and 3-word prompt completion~(\texttt{prompt}).
Per task, $10$ response sets have been generated using the same decoding parameter, the softmax-temperature $\mathrm{dec}$, which 
controls the diversity and randomness of the generated text. 
As $\mathrm{dec}$ decreases, 
models are skewed towards avoiding low-probability tokens. This leads to potentially higher quality and fidelity but lower diversity and creativity in generated text. 
We embed each set of responses using $5$ pre-trained sentence transformer models \citep{reimers-2019-sentence-bert}, i.e.\
\begin{inparaenum}[(1)]
\item \texttt{bert-large-nli-stsb-mean-tokens}, 
\item \texttt{roberta-base-nli-mean-tokens},
\item \texttt{all-mpnet-base-v2},
\item \texttt{all-distilroberta-v1}, and
\item \texttt{all-MiniLM-L12-v2}. 
\end{inparaenum} 
For each dataset and model, we compute 
 the area under the magnitude function \textsc{MagArea}, evaluated until the median convergence scale across all embeddings as detailed in \cref{sec:practical} 
 using cosine distances. We compare this to the Vendi Score~(VS), 
\textsc{AvgSim}, 
and \textsc{GMStds}, calculated using cosine similarities. 
Moreover, we analyse the performance of each diversity metric at predicting the ground-truth diversity scores, $\mathrm{dec}$, 
using $5$-fold cross-validation repeated $20$ times,
trained via 
isotonic regression models;\footnote{We use these models to capture the non-linear monotonic relationship between $\mathrm{dec}$ and diversity.}
and report their performance in terms of the coefficient of determination, $R^2$.
\cref{fig:mpnet} depicts the positive rank correlation between magnitude and the softmax-temperature for one example setting, while
\cref{fig:dec_results} shows results concerning the predictive performance of different diversity measures. 

We observe that 
$\textsc{MagArea}$ consistently outperforms alternative diversity measures computed from the same representations.  
\textsc{MagArea} achieves a median rank of~$1$ across experiments in terms of $R^2$ scores, followed by \textsc{VS}, \textsc{AvgSim} and \textsc{GMStds}. 
Indeed, \textsc{MagArea} is most frequently the best-performing diversity measure for 
77\% of resamples 
when predicting decoding parameters, ranking second in the remaining cases.
Meanwhile, \textsc{VS} most often achieves second place.  
This demonstrates the strength of \textsc{MagArea} as a theoretically-motivated and entropy-based measure of intrinsic diversity. 
By contrast, the baseline measure \textsc{GMStds} fails for any embedding that has at least one constant dimension, even reaching 
negative $R^2$ values for three of the five embedding models.  
This is followed by \textsc{AvgSim}, which, while being less fallible than \textsc{GMStds},
simply measures average similarity 
and even ranks last across $27\%$ of resamples. A further comparison of performance scores shows that 
\textsc{MagArea} outperforms \textsc{AvgSim} by $0.12$ higher mean $R^2$ scores on \texttt{story} and $0.07$ on \texttt{resp} or \texttt{prompt} across embedding models. 
We find no dataset 
for which either \textsc{AvgSim} or \textsc{GMStds} can be considered preferable predictors of the ground-truth diversity of text. %
Our results thus show the 
benefits of replacing simple summaries as the current standard for automated diversity evaluation 
with more 
sophisticated diversity measures like \textsc{MagArea}. 
%

\subsection{Magnitude Distinguishes and Characterises Embedding Models}

Motivated by the capability of magnitude functions to encode representations, we now check
whether the embedding spaces of different large language models can be 
distinguished via their intrinsic structure. 
To this end, we analyse $16384$ documents of four different HuggingFace datasets, as embedded by \citet{wayland2024mapping} using six different models~(see \cref{apx:Text Experiments2} for more details). 
We then either use PCA and normalisation to reduce each embedding space to $384$ dimensions~(to obtain a comparable dimensionality) or use the original embeddings without preprocessing. 
Further we subsample $300$ documents at random from each space, repeating this procedure $200$ times. 
Finally we use a $5$-NN classifier to predict the embedding model based on the values of each diversity measure. This task is chosen to assess whether a simple classifier can distinguish embedding spaces solely based on their intrinsic diversity estimates. 
\cref{tab:LLM} reports the results of $5$-fold cross-validation with $20$ repetitions for both prepossessing choices. 
We either use Euclidean distances between single number summaries or, in the case of magnitude, use \textsc{MagDiff} directly as precomputed input distances for $k$-NN classification.
We first observe that \textsc{MagDiff} best predicts the embedding model~(with accuracies typically above $90\%$). 
Supplementary results in \cref{tab:knn} verify that these performance scores are almost identical for varying hyperparameter choices of $k$ neighbours. 
Surprisingly, the results further remain consistent for both pre-processing choices.
This indicates that there are inherent differences in the structure and diversity of embedding spaces, which are preserved throughout dimensionality reduction and captured by magnitude.
By using the difference between magnitude functions as a  holistic summary, we once again surpass other summary statistics~(which we observe to fail in distinguishing the smaller embedding models).
Our results thus demonstrate that using 
\textsc{MagDiff} for comparing latent spaces across multiple scales is considerably more expressive than using single-number summaries of diversity. 

\begin{table}[tbp]
\centering%
\sisetup{
detect-all = true,
table-format = 1.2(2),
separate-uncertainty = true,
}

\caption{%
\textbf{Magnitude characterises text embedding models.}
We show the accuracy~($\uparrow$) of different diversity scores for distinguishing between six embedding models, using a $5$-NN classifier.%
}
\smallskip
\label{tab:LLM}
\setlength{\tabcolsep}{2pt}
\let\b\bfseries
\resizebox{1\linewidth}{!}{%
\begin{tabular}{lSSSSSSSS}
\toprule
& \multicolumn{4}{c}{No pre-processing} & \multicolumn{4}{c}{PCA pre-processing}\\
\midrule
\diagbox{Dataset}{Method} & \textsc{MagDiff} & \textsc{AvgSim}& {VS} & \textsc{GMStds} & \textsc{MagDiff} & \textsc{AvgSim} & {VS} & \textsc{GMStds} \\
\midrule
\texttt{cnn} & \b0.94\pm0.02 & 0.87\pm0.01 & 0.63\pm0.01 & 0.66\pm0.02 & \b0.90\pm0.02 & 0.88\pm0.02 & 0.67\pm0.03 & 0.66\pm0.03\\
\texttt{patents} & \b0.99\pm0.01 & 0.92\pm0.01 & 0.63\pm0.02 & 0.66\pm0.02 & \b0.96\pm0.01 & 0.91\pm0.02 & 0.64\pm0.03 & 0.66\pm0.03\\
\texttt{arXiv} & \b0.99\pm0.01 & 0.89\pm0.01 & 0.78\pm0.01 & 0.66\pm0.02 & \b0.99\pm0.01 & 0.88\pm0.02 & 0.78\pm0.02 & 0.66\pm0.03\\
\texttt{bbc} & \b0.98\pm0.01 & 0.74\pm0.01 & 0.84\pm0.02 & 0.66\pm0.02 & \b0.95\pm0.01 & 0.73\pm0.03 & 0.84\pm0.02 & 0.66\pm0.03\\
\bottomrule
\end{tabular}
}
\end{table}

\subsection{Magnitude Evaluates Image Embeddings}

\emph{Mode dropping} is a common issue in generative modelling,
referring to the inability of a model to capture all parts of an input
distribution~(for instance, a model trained to generate images of
animals suffers from mode dropping if it can only generate images of
dogs). To simulate this, we randomly sample $100$ images from each of
the $10$ classes in CIFAR10 and embed them using a pre-trained
Inception~V3 model~\citep{szegedy2016rethinking}. 
Subsequently, we re-sample increasingly more observations from
\emph{one} preferred image class. We either drop modes sequentially, or
we move the same number of observations simultaneously from all other
classes. Thus, diversity decreases gradually with the same
`speed' across both procedures, but fidelity should not change. We treat
each class as the preferred image class 
twice, leading to 20 re-samples per mode
dropping scenario~\citep{naeem2020reliable}. 
Our analysis compares the changes in recall and coverage, 
setting the number of nearest neighbours to
$k=10$. Further, we calculate the relative change in $\magnitude(0.5 t_{\text{ref}})$, i.e.\
magnitude computed at half the convergence scale of the reference 
using Euclidean distances. 
Similarly, \textsc{MagDiff} is the difference
between the magnitude functions relative to the area under the reference
magnitude function. 

\begin{wrapfigure}[14]{r}{8cm}
 \vspace{-\baselineskip}
\centering
\includegraphics[width=1.0\linewidth]{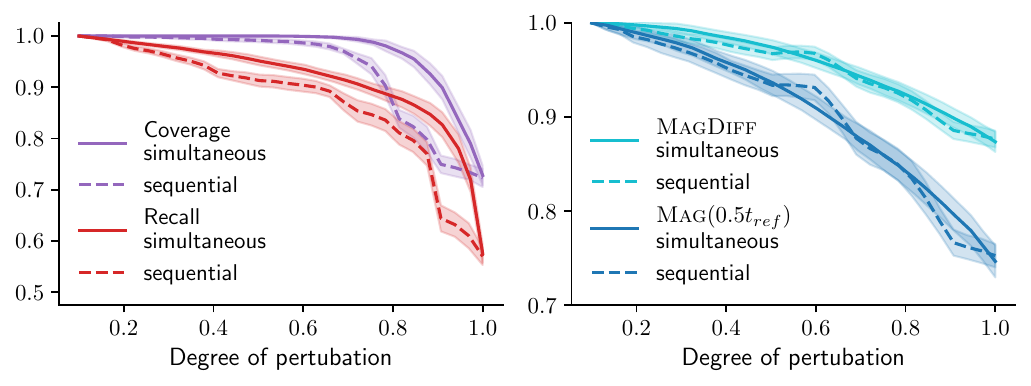}
    \captionof{figure}{\textbf{Magnitude correctly detects that diversity decreases in the same manner 
      across simultaneous and sequential mode dropping} outperforming recall and coverage. Lines show the mean values of each metric across $20$ resamples, shaded areas the standard deviations.}%
    \label{fig:image_dropping}%
\end{wrapfigure}


\cref{fig:image_dropping} shows the changes in diversity as modes are
being dropped.
Ideally, every diversity measure should show the \emph{same} decrease in diversity, irrespective of resampling strategy. 
However, we observe that 
both recall and coverage wrongly assess that diversity decreases faster during sequential resampling. Even worse, 
coverage only detects simultaneous mode dropping after around $70\%$
of all points have shifted to one mode. 
This undesirable behaviour of both metrics is caused by their reliance on a fixed neighbourhood size for approximating the underlying  manifold, thus overestimating the extent to which the perturbed samples reflect the diversity of the reference distribution. 
In comparison, \textsc{MagDiff} as well as 
magnitude evaluated at a single scale both successfully measure the gradual decrease in diversity across both mode dropping scenarios.

\subsection{Magnitude Evaluates Graph Generative Models}

Diversity evaluation in graph learning is fraught with difficulties, in particular when aiming to detect common problems like \emph{mode collapse} or \emph{mode dropping}~\citep{thompson2022evaluation, OBray22a}.
In the following, we will study graph generative models~(GGMs), which take a set of input graphs and generate new samples that should follow the \emph{same} distribution.
The question that we aim to answer here is whether our proposed magnitude-based metric is more expressive in capturing the diversity of the generated graphs than classical metrics like \emph{maximum mean discrepancy}~(MMD) and measures inspired from evaluating image generative models~(precision, recall, coverage, density).
To this end, we analyse 3 synthetic~(Lobster, Grid, and Community) and 2 real-world~(Proteins and Ego) graph datasets, and compute commonly-used evaluation metrics~\citep{thompson2022evaluation, OBray22a} as detailed in \cref{apx:Graph Embedding Experiments}.
To test the diversity of generated samples, we replicate the experimental setup of \citet{thompson2022evaluation} and add our own measure, \textsc{MagDiff} computed 
using $L^2$ distances from Graph Isomorphism Network~\citep[GIN]{xu2018powerful} embeddings with varying hyperparameters. 
For the \emph{mode collapse} experiments, we substitute each embedded graph with its cluster centre. Thus, the degree of perturbation $p$ equals the proportion of clusters collapsed in this manner. The larger the value of $p$, the more clusters have been perturbed decreasing the diversity. 
For the \emph{mode dropping} experiments, we remove clusters, and keep the size of the generated dataset the same as the reference by randomly resampling from the remaining classes.

\begin{figure}[tbp]
    \centering
    \includegraphics[width=\linewidth]{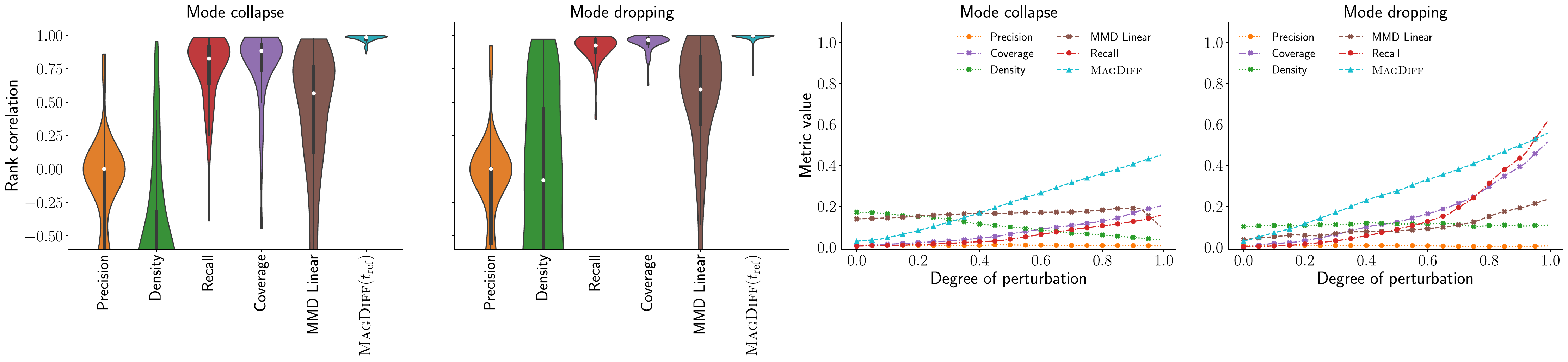}
    \caption{%
    \textbf{\textsc{MagDiff} outperforms existing graph diversity metrics at detecting mode collapse and mode dropping.} 
    We report the Spearman correlation between each metric and the degree of perturbation $p$ for the Lobster dataset~(the same pattern holds for Proteins, Community, Ego, Grid, see \cref{apx:Graph Embedding Experiments}). Violin and box plots show the distributions across different hyperparameter choices. 
    Measures that capture the decrease in diversity accurately should increase as a function of~$p$. Rank correlation of $1$ corresponds to an ideal metric.
    Our metric best captures the changes in diversity for both mode dropping and collapse. 
    }
\label{fig:graph_lobster_paper_results}
\end{figure}

\cref{fig:graph_lobster_paper_results} shows the results of both \emph{mode collapse} and \emph{mode dropping} for the Lobster dataset. 
We observe similar trends across all datasets, but have chosen this dataset as a running example.
Ideal measures should exhibit high rank correlation to the degree of perturbation, indicating that they are capable of capturing the decrease in diversity properly, i.e.\ as a function of~$p$.
We note that in contrast to our magnitude-based metric, \emph{recall} and \emph{coverage}
exhibit worse results, as evidenced by their lower mean correlation coefficient.
Despite being specifically designed to measure the diversity of a dataset~\citep{thompson2022evaluation}, they only catch up to our magnitude metric when the degree of perturbation $p$ is around~$0.9$~(see \cref{fig:graph_lobster_paper_results}, right-hand plots).
Magnitude dominates in the majority of the values of $p$ best showing the steady decrease in diversity, while recall and coverage become more sensitive for exceedingly large values of~$p$, i.e.\ in unrealistic situations where most of the modes have been dropped. 
Moreover, their performance is highly contingent on~$k$, the parameter used to construct a $k$-NN graph for computing these neighbourhood-based metrics. Magnitude functions meanwhile give more holistic summaries of both local and global patterns in diversity. 
Please refer to \cref{fig:graph_mode_collapse_mode_dropping_all} for the aggregated results over all datasets, which exhibit a similar pattern~(in that our metric outperforms both \emph{recall} and \emph{coverage}).

\section{Discussion}

We have proposed novel diversity measures for evaluating latent representations. Our measures are based on \emph{metric space magnitude}, a multi-scale invariant summarising geometrical characteristics of the input data.
We have demonstrated axiomatically and empirically that our magnitude-based measures are superior to current baseline measures of intrinsic diversity.
In  a reference-free scenario, we observe that magnitude outperforms alternative measures when predicting the ground truth diversity for text embeddings. 
Given a reference dataset, we find that magnitude captures mode collapse and mode dropping better than existing metrics for evaluating generative models for both image and graph modalities.
Furthermore, we have shown that magnitude can measure the intrinsic curvature of input data, outperforming previous methods. 
Magnitude thus gives a provably stable, unsupervised diversity metric that can be computed efficiently and allows users to flexibly choose a notion of dissimilarity. 
For future work, we believe that magnitude exhibits a strong potential for applications to unaligned spaces with varying notions of distances. 
Moreover, we believe that integrating magnitude into deep learning models would be beneficial for obtaining novel diversity- and geometry-based regularisation strategies.

\clearpage

\section*{Acknowledgements}
The authors are grateful for the stimulating discussions with the anonymous reviewers and the area chair, who believed in the merits of this work. 
We further want to thank Dr.\ rer.\ nat.\ Dr.\ iur.\ Corinna Coupette, Emily Simons, and Jeremy Wayland for their help in proofreading the paper.
K.L.\ is supported by the Helmholtz Association under the joint research school `Munich School for
Data Science~(MUDS).'

\section*{Funding Disclosure}
K.L.\ gratefully acknowledges support from the 2024 NeurIPS Financial Assistance Program.
R.A.\ was supported by
\begin{inparaenum}[(i)]
    \item the United Kingdom Research and Innovation~(grant EP/S02431X/1),
UKRI Centre for Doctoral Training in Biomedical AI at the
University of Edinburgh, School of Informatics, 
    \item  the International Helmholtz--Edinburgh Research School for Epigenetics~(EpiCrossBorders), and
    \item a Helmholtz Visiting Researcher Grant.
\end{inparaenum}
B.R.\ was partially supported by the Bavarian state government with
funds from the \emph{Hightech Agenda Bavaria}.
This work has received funding from the Swiss State
Secretariat for Education, Research, and Innovation~(SERI).
The authors declare no competing interests.
The funders had
no role in the preparation of the manuscript or the decision to publish.

\section*{Impact Statement}

This paper presents work whose goal is to advance the evaluation diversity in representation learning, leading to increased fairness and trustworthiness in model evaluation.
While representational diversity in terms of model outputs may have potential negative impacts, depending on the task at hand, we feel there are none that need to be specifically highlighted here.
However, we acknowledge the potential for societal harm if our notion of representational diversity is confused with the meaning of diversity in the colloquial or societal context, which is admittedly even harder to measure and requires a larger discussion involving all affected communities.

\bibliography{neurips_2024}
\bibliographystyle{abbrvnat}

\clearpage
\appendix

\counterwithin*{figure}{part}
\stepcounter{part}
\renewcommand{\thefigure}{S.\arabic{figure}}

\counterwithin*{table}{part}
\stepcounter{part}
\renewcommand{\thetable}{S.\arabic{table}}

\startcontents
\printcontents{}{1}{{%
    \vskip10pt\hrule
    \large\textbf{Appendix~(Supplementary Materials)}\vskip3pt\hrule\vskip5pt}
}
\clearpage

\section{Extended Theory and Empirical Validation}\label{app:extended_theory}

\subsection{Illustration of Magnitude and Magnitude Weights}

While we did not use magnitude weights, which are the individual contribution of each point in a space to its overall magnitude, throughout our experiments, they play a more central role in some of the later proofs and the computation of magnitude in practice. Further, magnitude weights give an intuitive explanation on how each individual observation influences magnitude and the magnitude function as illustrated in \cref{fig:magnitude_visualisation_app}.
\begin{definition}[Magnitude weights]
    Let $X = \{x_1, \dots, x_n\}$ be a finite metric space with an associated distance  metric $d$.
    The \emph{similarity matrix} of $X$ is defined as $\zeta_X(i,j) = \exp(-d(x_i,x_j))$ for $1 \leq i,j \leq n$.
    If $\zeta_X$ is invertible,
    the \emph{magnitude weighting vector}~$w_X$ is defined as
    $w_X := \zeta_X^{-1} \ones =  \ones^\top \zeta_X^{-1}$.
    Denoting the $i$th element of $w_X$ by $w_{x_i}$, we obtain an equivalent characterisation of magnitude as $\magnitude(X) = \sum_i w_{x_i}$
\end{definition}


\begin{figure}[tbh]
    \centering
    \includegraphics[clip, trim=0cm 7.4cm 11cm 0cm, width=0.8\textwidth]{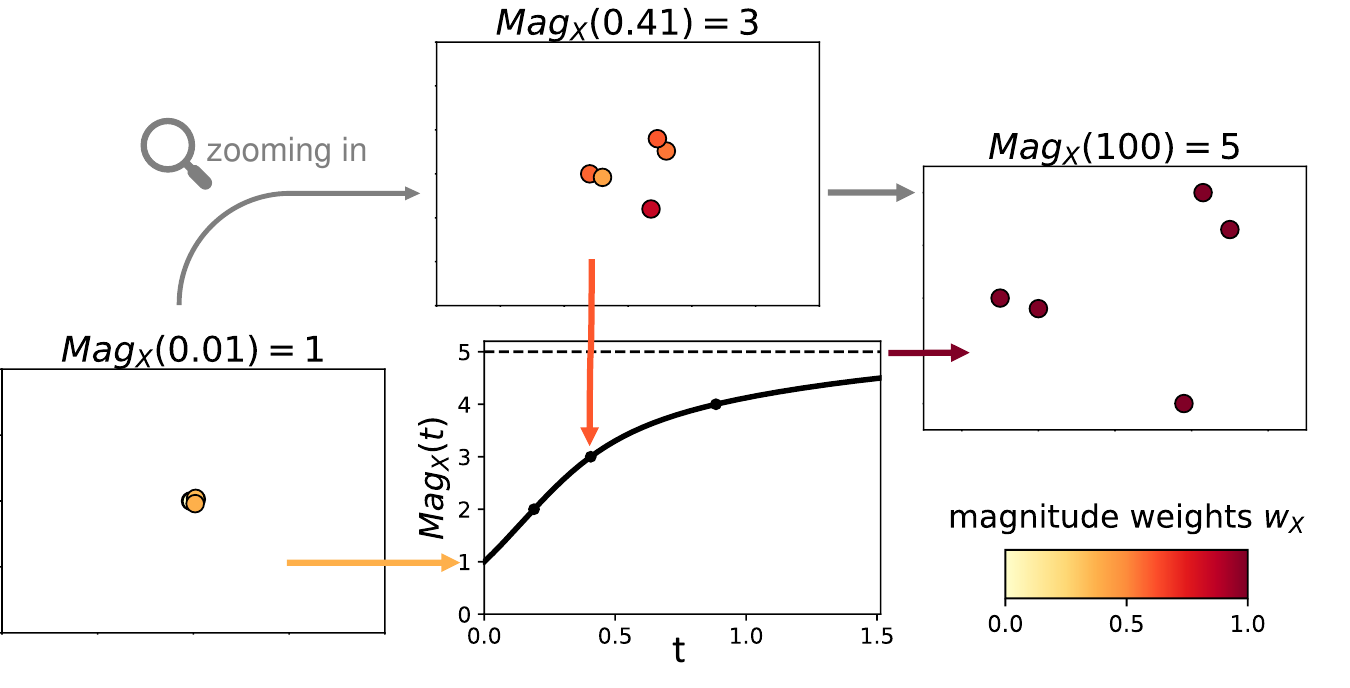}
    \caption{%
      \textbf{Example of magnitude weights and the magnitude function for a metric space with 5 points.}
    When the scaling factor $t$ is very small, e.g.\ $t=0.01$, the magnitude weights of all points
    sum up to approximately 1, so that magnitude is very close to 1,
    and the space effectively looks like one point.
    Following this, as we zoom in further, magnitude grows and at $t=0.41$, 3 distinct clusters or points are visible.
    Finally, for $t = 100$, all the points are clearly separated, their magnitude weights converge to one, and the value of magnitude approaches $5$, i.e.\ the cardinality of the space. 
    }
    \label{fig:magnitude_visualisation_app}
\end{figure}

\subsection{Stability Proof}
\label{app:stability}
Next to the theoretical properties linking magnitude to geometrical properties of a space, which we previously outlined, we further prove that magnitude, as a metric space invariant, also satisfies properties that are advantageous in the setting of analysing latent representations. 
Specifically, we prove that magnitude and thus the proposed magnitude differences satisfy certain \emph{stability properties} in light of perturbations of metric space.
By this, we mean that if two metric spaces $X, Y$ are \emph{close}, we want to obtain bounds on the differences between their magnitude values.
The canonical choice to measure closeness would be the Gromov--Hausdorff distance, but in the absence of strong results concerning the behaviour of magnitude under this distance~\citep{leinster2013magnitude}, we resort to a more general---but also weaker---notion of similarity in terms of \emph{continuity}.
More precisely, we will show that the similarity matrices used in the calculation of magnitude are well-behaved in the sense that closeness of metric spaces~(under some matrix norm) translates to a continuous bound on the variation of the similarity matrices. 
We first prove a general result about matrices and their associated transformations.
\begin{restatable}{lemma}{lemspectralbound}
    Let $\|A\|_2 := \sup{\{\|Ax\|_2 : x \in \reals^n \text{ with } \|x\|_2 = 1\}}$ refer to the \emph{induced $2$-norm for matrices}, and let  $A, B$ be two $n \times n$ matrices with $\|A - B\|_2 \leq \epsilon$.
    Moreover, let $f(M) := \ones^\top M \ones$. 
    Then $\|f(A) - f(B)\|_2 \leq n\epsilon$.
    \label{lem:Magnitude matrix spectral bound}
\end{restatable}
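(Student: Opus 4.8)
The plan is to unwind the definition of $f$ and bound the scalar $f(A)-f(B) = \ones^\top A \ones - \ones^\top B \ones = \ones^\top (A-B)\ones$ directly by a chain of standard inequalities. First I would note that since $f(A), f(B) \in \reals$, the quantity $\|f(A)-f(B)\|_2$ is just $|f(A)-f(B)| = |\ones^\top(A-B)\ones|$. Then I would apply the Cauchy--Schwarz inequality twice (or, more compactly, the submultiplicativity of the induced $2$-norm together with the compatibility of the vector and matrix norms): $|\ones^\top(A-B)\ones| \le \|\ones\|_2 \,\|(A-B)\ones\|_2 \le \|\ones\|_2 \,\|A-B\|_2\,\|\ones\|_2 = \|A-B\|_2\,\|\ones\|_2^2$.

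The final step is to observe that $\ones \in \reals^n$ is the all-ones vector, so $\|\ones\|_2^2 = n$, and by hypothesis $\|A-B\|_2 \le \epsilon$. Combining these gives $|f(A)-f(B)| \le n\epsilon$, which is exactly the claimed bound. I would also remark that $\ones^\top M \ones = \sum_{ij} M_{ij}$, so $f$ really is the (unnormalised) sum-of-entries functional that appears in the magnitude formula $\magnitude(X) = \ones^\top \zeta_X^{-1}\ones$; this is what makes the lemma directly applicable to bounding variations of magnitude once one controls $\|\zeta_X^{-1} - \zeta_Y^{-1}\|_2$.

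There is essentially no obstacle here: the statement is a one-line consequence of Cauchy--Schwarz and the normalisation $\|\ones\|_2 = \sqrt n$. The only thing to be slightly careful about is the (mild) abuse of notation in writing $\|f(A)-f(B)\|_2$ for a scalar — I would simply note that for a real number $r$, $\|r\|_2 = |r|$, so the statement is unambiguous. If one prefers to avoid invoking Cauchy--Schwarz, an equally short alternative is to write $\ones^\top(A-B)\ones \le \|\ones\|_1 \|(A-B)\ones\|_\infty$ and then relate $\|\cdot\|_\infty$ to $\|\cdot\|_2$, but the Cauchy--Schwarz route keeps all norms as $2$-norms and matches the statement most cleanly.
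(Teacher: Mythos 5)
Your proof is correct, and it rests on the same core inequality as the paper's --- the compatibility bound $|\ones^\top M \ones| \le \|\ones\|_2\,\|M\|_2\,\|\ones\|_2 = n\|M\|_2$ --- but you deploy it more cleanly. The decisive move in your version is to use linearity of $f$ first, writing $f(A) - f(B) = f(A-B) = \ones^\top(A-B)\ones$, and only then apply Cauchy--Schwarz to the single matrix $A - B$; this gives the bound on $|f(A) - f(B)|$ in one line. The paper instead bounds $\|f(M)\|_2 \le n\|M\|_2$ for a generic $M$ and then tries to pass to the \emph{difference of norms} $\|f(A)\|_2 - \|f(B)\|_2$ via a ``without loss of generality'' assumption that $\|f(A)\|_2 \ge \|f(B)\|_2$ and $\|A\|_2 \ge \|B\|_2$ hold simultaneously --- an assumption that is not actually harmless, and which in any case controls $\bigl|\,|f(A)| - |f(B)|\,\bigr|$ rather than $|f(A) - f(B)|$ directly. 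Your route via linearity sidesteps this entirely and is the more rigorous of the two; your side remarks (that $\|r\|_2 = |r|$ for a scalar, and that $f$ is the sum-of-entries functional appearing in the magnitude formula) are accurate and harmless.
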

\begin{proof}
    Because $\|\cdot\|_2$ is a \emph{consistent} norm, we have $\|f(M)\|_2 \leq \|\ones^\top\|_2 \|M\|_2 \|\ones\|_2 = n \|M\|_2$ for all $n \times n$ matrices~$M$.
    Without loss of generality, assume that $\|f(A)\|_2 \geq \|f(B)\|_2$ and $\|A\|_2 \geq \|B\|_2$.
    Thus, $\|f(A)\|_2 - \|f(B)\|_2 \leq d(\|A\|_2 - \|B\|_2) \leq d(\|A - B\|_2) = n\epsilon$.
\end{proof}
Treating $A, B$ as inverse similarity matrices, the preceding statement shows that if the two inverse similarity matrices are close with respect to their spectral radius, the difference between their magnitude can be bounded. 
The following lemma shows that the similarity matrices satisfy a general continuity condition.\footnote{
  It is clear that the mapping itself is continuous because of the functions involved in its calculation.
  However, we find it important to remark on the bound obtained with respect to the \emph{spectral norm} of the two similarity matrices.
}
\begin{restatable}{lemma}{lemsimilaritymatrixspectralbound}
    Let $(X, d_X)$ and $(Y, d_Y)$ be two metric spaces with corresponding distance matrices $D_X, D_Y$ and cardinality~$n$.
    For all $\epsilon > 0$, there exists $\delta > 0$ such that if $|D_X - D_Y| < \delta$ holds elementwise, then $\|\zeta_X - \zeta_Y\|_2 \leq \epsilon$.
    \label{lem:Similarity matrix spectral bound}
\end{restatable}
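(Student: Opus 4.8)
The plan is to derive the spectral‑norm bound from an elementwise bound, exploiting the fact that the entries of $\zeta_X$ depend continuously (indeed smoothly) on the entries of $D_X$ via the map $t \mapsto e^{-t}$, together with the standard inequality relating the induced $2$‑norm of a matrix to its entrywise magnitudes. Concretely, write $\zeta_X(i,j) = e^{-D_X(i,j)}$ and $\zeta_Y(i,j) = e^{-D_Y(i,j)}$, so that the $(i,j)$ entry of the difference is $e^{-D_X(i,j)} - e^{-D_Y(i,j)}$.

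First I would bound each entry of $\zeta_X - \zeta_Y$. Since the exponential map $g(u) = e^{-u}$ is $1$‑Lipschitz on $[0,\infty)$ (its derivative $-e^{-u}$ has absolute value at most $1$ there), the mean value theorem gives $|e^{-D_X(i,j)} - e^{-D_Y(i,j)}| \leq |D_X(i,j) - D_Y(i,j)| < \delta$ for every pair $(i,j)$, whenever $|D_X - D_Y| < \delta$ elementwise. Thus every entry of $\zeta_X - \zeta_Y$ is bounded in absolute value by $\delta$.

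Next I would pass from the entrywise bound to the spectral‑norm bound. The cleanest route is the Frobenius norm: $\|\zeta_X - \zeta_Y\|_2 \leq \|\zeta_X - \zeta_Y\|_F = \bigl(\sum_{i,j} |(\zeta_X - \zeta_Y)_{ij}|^2\bigr)^{1/2} \leq (n^2 \delta^2)^{1/2} = n\delta$. Therefore, given $\epsilon > 0$, choosing $\delta := \epsilon / n$ yields $\|\zeta_X - \zeta_Y\|_2 \leq n\delta = \epsilon$, which is exactly the claim. (One could equally invoke $\|M\|_2 \leq n \max_{i,j}|M_{ij}|$, which is slightly lossier but also suffices with the same choice up to a constant.)

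I do not anticipate a genuine obstacle here — the result is essentially a continuity statement, and the only subtlety is being careful about which norm inequality to use and confirming that the Lipschitz constant of $e^{-u}$ on the relevant domain is indeed $1$ (distances are nonnegative, so we only need $u \geq 0$). The mild care needed is to make the quantifier order explicit: $\delta$ is chosen \emph{after} $\epsilon$ and depends on $\epsilon$ and $n$ only, not on the particular spaces, which the argument above respects since $\delta = \epsilon/n$.
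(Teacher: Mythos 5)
Your proof is correct and follows essentially the same route as the paper: an entrywise bound on $\zeta_X-\zeta_Y$ obtained from the behaviour of $u\mapsto e^{-u}$, followed by a standard passage from entrywise to spectral norm with $\delta=\epsilon/n$. The only difference is cosmetic --- you make the Lipschitz constant explicit and use $\|M\|_2\le\|M\|_F\le n\max_{i,j}|M_{ij}|$, whereas the paper appeals to continuity and bounds the spectral norm via row sums with a citation; your version is, if anything, more self-contained.
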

\begin{proof}
    As a consequence of the continuity of the exponential function, we know that there is $\delta$ such that 
    $|\zeta_X - \zeta_Y| < n^{-1} \epsilon$.
    The row sums of $\zeta_X - \zeta_Y$ are therefore upper-bounded by $\epsilon$.
    We thus have $\|\zeta_X - \zeta_Y\|_2 \leq \epsilon$~\citep[Theorem~1.1, p.~24]{Minc88}.
\end{proof}
As a consequence of \cref{lem:Similarity matrix spectral bound}, and the continuity of matrix inversion, we know that magnitude is well-behaved under small perturbations of the respective distance matrices.
Given a pre-defined threshold $\epsilon$, we can always find perturbations that preserve the magnitude difference accordingly.
Notice that this result does not make any assumptions about the Gromov--Hausdorff distance of the metric space and only leverages the distance matrices themselves.
Moreover, this result applies in case $X, Y$ are close with respect to the \emph{Hausdorff distance}.
If $d_H(X, Y) < \delta$, the elementwise condition $|D_X - D_Y| < \delta$ is satisfied \emph{a fortiori}.
This stability of single-scale magnitude then further ensures the stability of the difference between magnitude functions as defined in \ref{def:Magnitude function difference} in the same sense.
Nevertheless, from a theoretical point of view, this result could be made stronger by showing bounds in terms of distances between the metric spaces.
We leave such a result for future work, noting in passing that such strong results remain elusive at the moment~\citep{govc2021persistent}; it is known, however, that the magnitude function is at least \emph{lower semicontinous}~\citep[Theorem~2.6]{meckes2013positive}.

\subsection{Empirical Stability}

We further investigate the empirical stability of the magnitude function difference.
Given the difficulty in proving strong theoretical stability results, we verify that, in practice, the magnitude function difference remains stable when adding noise to the input space.
We thus sample points from a Laplace distribution with mean~$\mu = 0$ and variance~$2b^2$ with different levels of noise, i.e.\ $b \in \{0.0001, 0.001, 0.005, 0.01, 0.05\}$.
\Cref{fig:magnitude_stability} depicts the errors in magnitude function difference relative the the area under the magnitude function of the unperturbed data across three different datasets~(circles, Swiss Roll, Gaussian blobs), using a different number of samples~(varying between $100$ and $5000$ across $50$ repetitions).
The bound of $5000$ points has been chosen given the clear downwards trend across multiple noise levels; we expect the same trend to hold for larger sample sizes.
We observe that the magnitude function difference does not increase above the value of \num{1e-3} with increasing sample size. In fact, the difference fluctuates more for smaller number of points, but this is still within a very small range. We therefore conclude that the magnitude function difference between the original space and its noisy version does not change much, which indicates that our measure is reliable and stable across multiple experimental conditions. 

\begin{figure}[tb]
    \centering
      \includegraphics[width=1\linewidth]{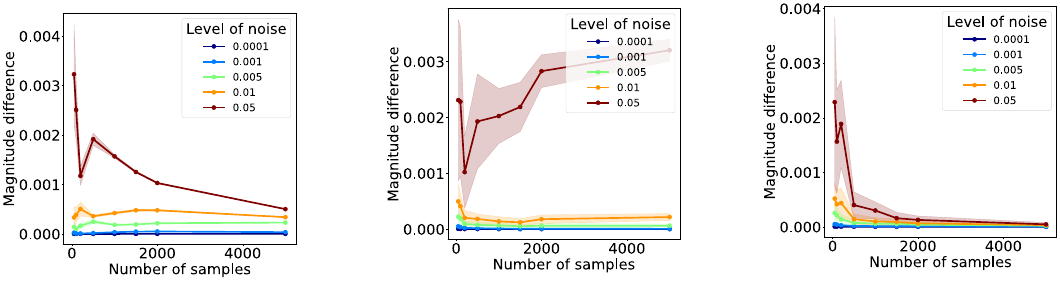}
    \caption{%
        \textbf{Empirical stability of magnitude.} Magnitude difference
        is stable across different datasets (from left to right:
        Circles, Swiss Roll, Gaussian blobs) and sample sizes. The lines
        show the mean magnitude difference relative to the magnitude
        area of the unperturbed data and the shaded area the standard
        deviation calculated across 50 repetitions.
    }
    \label{fig:magnitude_stability}
\end{figure}

\subsection{Isometry Invariance}
\label{app:isometry}

A measure of the difference (in diversity) 
between latent 
spaces should 
fulfill certain desirable properties from both
practical and theoretical perspectives. 
In the following we will show
a minimum requirement, namely that the magnitude difference between
isomorphic spaces equals zero.

\begin{definition}[Isometry]
    Let $(X, d_X)$ and $(Y, d_Y)$ be two metric spaces.
    A map $f\colon X \to Y$ is called an \emph{isometry}, or distance-preserving, if for any $a, b \in X$, we have $d_{X}(a,b) = d_{Y}(f(a),f(b))$.
    $X$ and $Y$ are called \emph{isometric} if there is a \emph{bijective isometry} from $X$ to $Y$. 
\end{definition}
\begin{restatable}[Isometry invariance]{lemma}{lemisometryinvariance}
  Given two isometric spaces $X, Y$, we have $\magnitude_{X} = \magnitude_{Y}$.
\end{restatable}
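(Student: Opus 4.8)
The plan is to unwind the definitions and show that a bijective isometry induces, up to relabelling, an identity of the relevant linear-algebraic data, so that the magnitude function is preserved pointwise in $t$. First I would fix a bijective isometry $f\colon X \to Y$ with $X = \{x_1, \dots, x_n\}$ and set $y_i := f(x_i)$, so that $Y = \{y_1, \dots, y_n\}$. The isometry condition $d_X(x_i, x_j) = d_Y(y_i, y_j)$ gives, after scaling by any $t \in \mathbb{R}_+$, the equality $d_t^X(x_i,x_j) = d_t^Y(y_i,y_j)$, hence $e^{-t d_X(x_i,x_j)} = e^{-t d_Y(y_i,y_j)}$ for all $i,j$. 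In other words, under the identification of indices induced by $f$, the similarity matrices coincide: $\zeta_{tX} = \zeta_{tY}$ as $n \times n$ matrices.

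Next I would conclude that the magnitudes agree. Since $\zeta_{tX} = \zeta_{tY}$, one matrix is invertible if and only if the other is, and when they are invertible their inverses are equal, so $\sum_{ij}(\zeta_{tX}^{-1})_{ij} = \sum_{ij}(\zeta_{tY}^{-1})_{ij}$, i.e.\ $\magnitude(tX) = \magnitude(tY)$. Because we work with a negative definite metric, both similarity matrices are in fact positive definite for every $t \in (0,\infty)$, so this holds for all $t$ and the two magnitude functions agree on their common (full) domain, giving $\magnitude_X = \magnitude_Y$. If one wants the statement for general metrics, the same identity of matrices shows the singular sets coincide and the functions agree off that finite set.

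The only mild subtlety is bookkeeping about the ordering of points: magnitude as a sum over all matrix entries is manifestly invariant under simultaneously permuting rows and columns, so the relabelling $x_i \mapsto y_i = f(x_i)$ does not affect the value; equivalently, $P \zeta_{tX} P^\top = \zeta_{tY}$ for a permutation matrix $P$ and $\ones^\top (P \zeta_{tX} P^\top)^{-1} \ones = \ones^\top \zeta_{tX}^{-1} \ones$ since $P^\top \ones = \ones$. I would note this briefly rather than belabour it. There is no real obstacle here — the result is essentially immediate once one observes that an isometry makes the similarity matrices literally equal up to reindexing — so the "hard part" is merely stating the index identification cleanly; the substance is that $\zeta$ depends on the metric space only through its distance matrix, which an isometry preserves exactly.
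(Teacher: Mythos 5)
Your proof is correct and follows essentially the same route as the paper's: an isometry forces the (scaled) similarity matrices to coincide under the index identification $x_i \mapsto f(x_i)$, hence the magnitudes agree. Your additional care about the scaling parameter $t$ and the permutation-invariance of $\ones^\top \zeta^{-1} \ones$ only makes explicit what the paper's proof leaves implicit.
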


\begin{proof}
Let $(X, d_X)$ and $(Y, d_Y)$ be metric spaces with cardinality $n$ and let $f\colon X \to Y$ denote their isometry.
Then, the similarity matrix of $X$ is $\zeta_{X}(i, j) = \exp(-d_{X}({x_i, x_j}))$.
Since $f$ is an isometry, we have, $d_{X}({x_i, x_j}) = d_{Y}(f(x_i), f(x_j))$.
Hence, $\zeta_{X}(i,j) = \exp(-d_{X}({x_i, x_j})) = \exp(-d_{Y}(f(x_i), f(x_j))) = \zeta_{Y}(i,j)$.
Since $X$ and $Y$ have the same similarity matrix, we have $\magnitude_{X} = \magnitude_{Y}$.
\end{proof}

\begin{corollary}
    The magnitude functions of two isometric spaces~$X, Y$ are equal for all $t \geq 0$.
\end{corollary}
Notice that the \emph{converse} of this statement is not true in general, i.e.\ there are non-isometric spaces whose magnitude functions are the same~\citep{leinster2013magnitude}. 
\begin{corollary}
  Let $X$ be a metric space and $Y=cX$ with $c \in \mathbb{R}_+$. Then the magnitude functions 
  of $X$ and $\nicefrac{1}{c} Y$ are equal. Also, the magnitude functions of $\nicefrac{1}{\mathrm{diam}_X} X$ and $\nicefrac{1}{\mathrm{diam}_Y} Y$ are equal, where $\mathrm{diam}_X := \max(d_X)$.
  \label{Cor:scaling}
\end{corollary}

\begin{corollary}
        Magnitude function difference equals zero for isomorphic spaces.
\end{corollary} 

\subsection{Computing Magnitude}
\label{app:computation}

A na\"ive calculation of magnitude according to \cref{def:Magnitude} requires inverting the similarity matrix~$\zeta_X$, which has a worst-case complexity of $\mathcal{O}(n^3)$ and is numerically unstable.
However, inverting $\zeta_X$ is not required in practice; instead, it suffices to solve certain \emph{linear equations} as also pointed out by \citet{huntsman2022parallel}. 
First, we notice that the calculation of magnitude can be written as $\magnitude(X) := \ones^\top \zeta_X^{-1} \ones$.
For finite metric spaces and negative definite metrics, $\zeta_X$ is a \emph{symmetric positive definite matrix}, thus affording a \emph{Cholesky decomposition}, which factorises $\zeta_X = LL^\top$, with $L$ being a \emph{lower triangular matrix}.
This operation is numerically stable and more efficient than matrix inversion~\citep{Higham09a}.
We thus have $\magnitude(X) := \ones^\top \zeta_X^{-1} \ones = \ones^\top (LL^\top)^{-1} \ones = (L^{-1} \ones)^\top (L^{-1}\ones)$.
This is equivalent to calculating $x^\top x$ with $x = L^{-1}\ones$, which we can efficiently obtain by solving $Lx = \ones$ since $L$ is lower triangular.
Likewise, we can reformulate the calculation of the \emph{magnitude weight vector} $w_X = \zeta_X^{-1}\ones$ as solving $\zeta_X w_X = \ones$, which also benefits from the Cholesky factorisation.

\subsection{Benchmarking Computational Times}
To assess the improvements in computational efficiency discussed in \cref{app:computation}, we benchmark the following computational methods in Python:
\begin{compactitem}
    \item Numpy inv: Inversion of the whole matrix $\zeta$ using \texttt{numpy.linalg.inv} as suggested by \citet{bunch2021weighting}; see also \url{https://github.com/AmFamMLTeam/metric-space-magnitude} for an implementation.
    \item Scipy solve: Solving for the magnitude  weights using \texttt{scipy.linalg.solve} and assuming $\zeta$ to be positive definite.
    \item Cholesky weights: Cholesky decomposition using \texttt{scipy.linalg.cho\_factor} to compute the magnitude weights.
    \item Cholesky: Using a Cholesky decomposition as suggested above to compute the value of magnitude directly. This is the method we implemented to compute magnitude throughout this work.
    \item Cg weights: Conjugate gradient iteration using \texttt{scipy.sparse.linalg.Cg} and an absolute tolerance of 1e-3 to solve for the magnitude weights.
    \item Krylov weights: Pre-conditioned conjugate gradient iteration using \texttt{krypy.linsys.Cg} as implemented by  \citet{salim2021q} to calculate magnitude weights.
\end{compactitem}
All methods are evaluated on simulated data of a Swiss Roll with an increasing number of points. For each space, magnitude is evaluated at ten scales evenly spaced between zero and the convergence point. The computational times and their standard deviations are recorded across five re-runs in  \cref{fig:compute}. 
Results clearly show that naive inversion of the whole similarity matrix is by far the most costly method for computing magnitude. This is followed by the two conjugate gradient methods described above, where the pre-conditioned version is somewhat faster than the implementation without pre-conditioning for larger numbers of points. However, for evaluating magnitude at only 10 scales these approaches do not necessarily lead to improved performance compared to solving for the weights simply using \texttt{scipy.linalg.solve}. Finally, we note that our proposed implementation using Cholesky decomposition is the fastest computational method achieving less than a third of the computational time of the most naive implementation for larger datasets. Indeed, these results confirm that even for thousands of points magnitude functions are efficiently computable in a matter of seconds. 
Overall, this computational performance is more than sufficient for the relevant diversity evaluation tasks discussed in this study. State-of-the-art graph datasets are typically small, the output of text generation models is often assessed on specific tasks and even image embeddings are frequently evaluated in terms of meaningful subsets, e.g. by studying intra-class diversity.  
Indeed, we ran all our experiments locally with the following hardware specifications:
\begin{compactitem}

    \item  \textbf{CPU:}	12th Gen Intel(R) Core(TM) i7-1265U, \item \textbf{RAM:}	16 GB DDR4, \item \textbf{SSD:}    512 GB NVMe SSD
   
\end{compactitem}

\begin{figure}[tbp]
    \centering
    \includegraphics[scale=0.5]{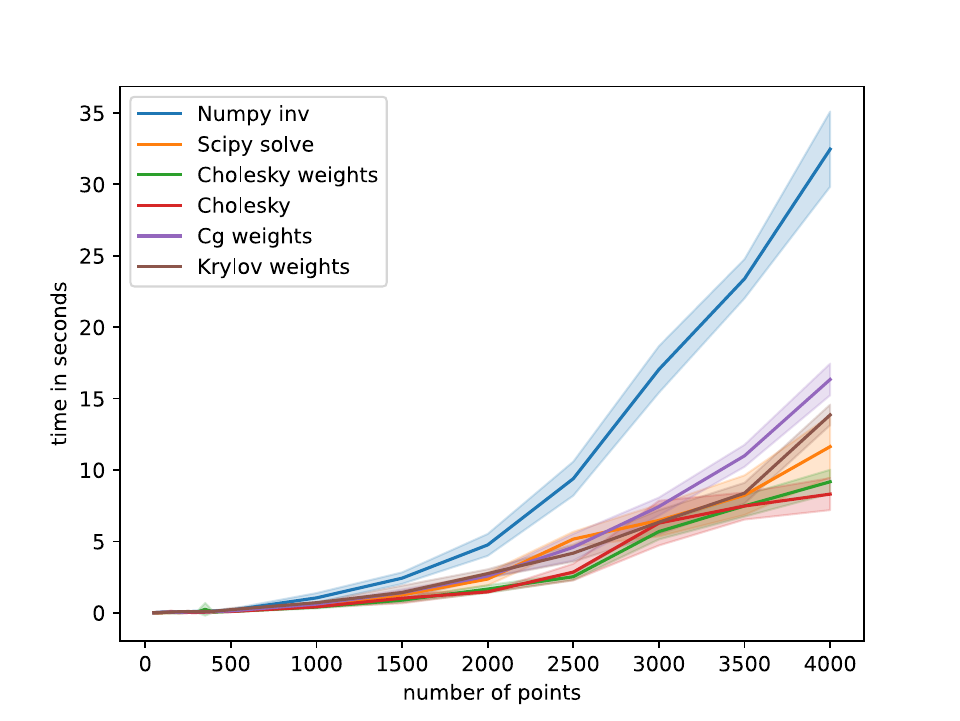}
    \caption{%
      \textbf{Benchmark of computational times in seconds for magnitude functions evaluated across $10$ scales.} We observe that Cholesky decomposition performs well, even for larger number of observations ensuring that magnitude functions can be computed in a matter of seconds. Lines show the time in seconds across five repeats, shaded areas the standard deviations. 
    }
    \label{fig:compute}
\end{figure}

\section{Additional Details for Our Methods}

\subsection{Embedding Data}
Creating latent representation or embedding $X=M(I)$, whose diversity should be evaluated, depends on the complexity of the specific model $M$ and the input data $I$ that should be represented. This step is independent of our design choices.
Given a latent representation we then choose a suitable notion of distance, for example cosine distances are a natural choice for text embeddings, which (compared to e.g. Euclidean distances) better represent similarity in meaning rather than text lengths; or Euclidean distances to understand more general latent spaces.

\subsection{Scale-Finding Procedure}\label{sec:Scale-Finding Procedure}
\label{app:scales}

Next, we give a brief illustration of the scale-finding procedure developed to automate magnitude computations. We use TOMS 748 root-finding algorithm \citep{alefeld1995algorithm} as implemented via \texttt{scipy.optimize.toms748}. Our aim is to find the scale $t_{\text{conv}}$ at which $\magnitude_X(t_{\text{conv}}) \approx |X|-\epsilon$ such that $|t_{\text{true}} - t_{\text{conv}}| \leq \text{atol} + \text{rol} \cdot |t_{\text{conv}}|$ where  $t_{\text{true}}$ is the true scale at which $\magnitude_X(t_{\text{conv}}) = |X|-\epsilon$, rtol is the relative error and atol the absolute error. We initialise the algorithm at the search interval $[a,b]$ setting $a=0$ and $b=100$ as an initial guess. We then check if $(\magnitude_X(a) - |X| + \epsilon)$ and $(\magnitude_X(b) - |X| + \epsilon)$ have opposite signs. If not we update $a=b$ and $b=100\cdot b$ repeating this at most 100 times and raising an error if they still share the same sign indicating the root-finding failed. Otherwise, we run TOMS 748 algorithm to find $t_{\text{conv}}$ as specified above using at most 100 iterations. This algorithm requires $\magnitude_X(t_{\text{conv}})$ to be continuous to perform reliably, which holds for negative definite metric spaces $X$ as proven in \cref{app:stability}.

After reaching the convergence scale as defined in \cref{def:conv_scale} we know that magnitude and hence diversity can change by at most $\epsilon$, which directly follows from the convergence behaviour of magnitude. Based on this, the default parameter $\epsilon=0.05|X|$ or $\epsilon=0.01|X|$ 
 is chosen as a sensible compromise for determining the scales of interest across which diversity changes most notably. 
To support this choice of convergence scale, we empirically investigate the impact it has on the results reported in our work. In particular, \cref{fig:corrs_hds_scales} investigates how the choice of $\epsilon$ influences the correspondence between human evaluation scores and the diversity of generated text, while \cref{fig:graph_scales_all} outlines the results of the graph generative model evaluation experiment for varying values of $\epsilon$. We note that choices of $\epsilon \leq 0.05|X|$ give stable and generally very good results for both reference-free and reference-based diversity evaluation as further detailed in \cref{app:extended_experiments}.
 
 \cref{fig:median_scale_appendix} gives some further intuition on this scale finding procedure. In particular, it demonstrates how taking the median convergence scale across four example spaces gives a suitable evaluation interval across which their magnitude can be compared as explained in more detail in \cref{sec:practical}. This simple example also illustrates why to compare latent spaces in a reference-free setting we recommend using the median or another suitable quantile of the converge scales rather than the minimum or the maximum, which are less robust and more sensitive to outliers. 

Finally, we note that the scale-finding approach can not only be used to compute \textsc{MagArea} or \textsc{MagDiff}, but also to find a suitable scale at which to evaluate magnitude at a single resolution, leading to a summary of diversity at a single threshold. In practice, we recommend choosing this single scale to be less or equal to the convergence scale of a space and that the best resolution to choose depends on the question of interest. 
Lower scales give a more coarser view summarising the diversity of large clusters while higher scale parameters show a clearer separation between individual observations.

\begin{figure}[tbp]
    \centering
    \includegraphics[scale=0.50]{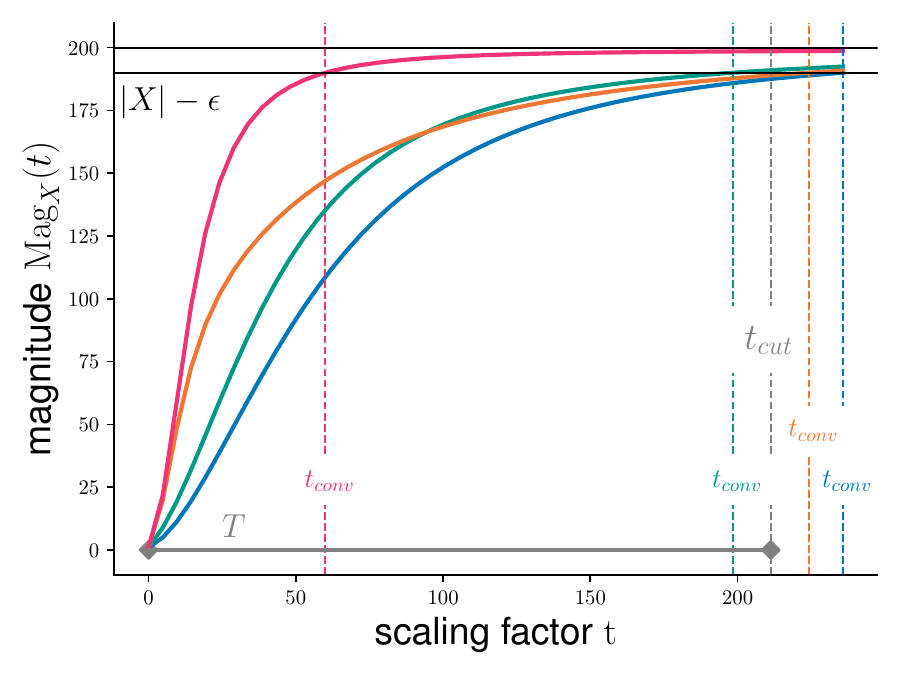}
    \caption{%
      \textbf{Illustration of the proposed scale finding procedure} for comparing the diversity of four examples from \cref{fig:magnitude_visualisation_5}. We compute the approximate convergence scale for each space as defined in \cref{def:conv_scale} at which magnitude has reached a certain value $|X|-\epsilon
      $ close to the cardinality. We then take the median of these scales to find a shared endpoint $t_{\text{cut}}$ for defining a shared evaluation interval $T=[0, t_{\text{cut}}]$.
    }
    \label{fig:median_scale_appendix}
\end{figure}

\subsection{Integration} 
In practice, we evaluate the integral from \cref{def:Magnitude function
difference} and \cref{def:area under the magnitude function} using Trapezoidal integration as implemented via \texttt{scipy.integrate.trapezoid} across a certain number, $n_{\text{ts}}$, of evenly spaced
evaluation scales in $T=[t_0, t_{\text{cut}}]$. This numerical integration method is chosen due to its simplicity and computational efficiency, but more complex approximation methods can also be employed. 

\section{Extended Discussion on Diversity Measures}

\subsection{Definitions of Intrinsic Diversity Measures}
\label{app:int_div}

The difficulty in defining diversity in representation learning has led to a few varying proposals for evaluating the intrinsic diversity of latent representations. Amongst these we consider the following three methods as baseline measures:

\paragraph{\textsc{GMStds}:} For a $X$, a D-dimensional embedding, it is directly computed as 
\begin{equation}
\textsc{GMStds}=\sqrt[D]{\prod_{i=j}^{D}\sigma_j} 
\end{equation}
where \mbox{$\sigma_j=\sqrt{\frac{1}{n}(\sum_{i=1,..,n} x_{ij} - \hat{x_j})^2}$} is the standard deviation across the j-th embedding dimension \cite{lai2020diversity}. 
Thus, \textsc{GMStds} regards an embedding as a cluster and assessing diversity by quantifying its spread. 

\paragraph{\textsc{AvgSim}:} Average mean similarity (or variations of it) is the most frequently used diversity measure in ML. It is simply computed as 
\begin{equation}
\text{\textsc{AvgSim}} = \frac{1}{\binom{n}{2}} \sum_{i,j\leq n, j>i} \zeta(i,j) 
\end{equation}
across all distinct pairs of points in $X$ assuming $\zeta$ is symmetric \citep{tevet2021evaluating}.
This approach simply summarises that in a more diverse space, observations should on average be less similar. 

\paragraph{Vendi Score (\textsc{VS}):} We also consider the Vendi Score, which is the only entropy-based diversity measure proposed in related ML literature. Let $\zeta$ be a positive semi-definite similarity matrix with $\zeta(i,i)=1$ for all $i \leq n$. Compute $\lambda_i$, the eigenvalues of $\zeta/n$. Then the Vendi Score is defined as 
\begin{equation}
    \text{\textsc{VS}} = \exp(-\sum_{i=1}^n \lambda_i \log(\lambda_i))
\end{equation}
taking $0\log(0)=0$ by convention. That is, the Vendi Score is the exponential of the Shannon entropy of the eigenvalues of $\zeta/n$ \cite{friedman2022vendi}.
It can thus be interpreted as summarising the effective number of modes in a space at a specific scale of similarity.

\subsection{Defining Reference-based Evaluation Metrics}

Here we define the metrics used in the image and graph embedding experiments.
\begin{equation}
   \mathrm{Precision} = \frac{1}{M}\sum_{j=1}^{M}1_{Y_{j}\in \mathrm{manifold}(X_1, \dots, X_N)}
\end{equation}

\begin{equation}
\mathrm{Recall} = \frac{1}{M}\sum_{i=1}^{N}1_{X_{i}\in \mathrm{manifold}(Y_1, \dots, Y_M)},
\end{equation}
where the manifold is defined as $\mathrm{manifold}(X_1, \dots, X_N) = \bigcup_{i=1}^{N}B(X_i,\mathrm{NND}_{k}(X_i))$. Here, $B(x,r)$ is the sphere in $\mathbb{R}^d$ of radius $r$ around $x$, $\mathrm{NND}_{k}(X)$ is the distance to the $k$-th nearest neighbour, and $1_{(\cdot)}$ is the indicator function.

Similarly, density and coverage are defined as follows:
\begin{equation}
\mathrm{Density} = \frac{1}{M}\sum_{j=1}^{M}\sum_{i=1}^{N}1_{Y_{j}\in B(X_i,\mathrm{NND}_{k}(X_i))},
\end{equation}

\begin{equation}
\mathrm{Coverage} = \frac{1}{N}\sum_{i=1}^{N}1_{\exists j \text{ s.t.} Y_{j}\in B(X_i,\mathrm{NND}_{k}(X_i))},
\end{equation}

Maximum Mean Discrepancy (MMD) \cite{you2018graphrnn,gretton2006kernel} is a metric based on graph statistics. MMD Linear is computing MMD with a linear kernel~\citep{OBray22a}.
We define $\mathbb{S}_r = \{x_1^r, \dots, x_m^r \} \sim P_r$ and $\mathbb{S}_g = \{x_1^g, \dots, x_n^g \} \sim P_g$, where $x_i$ is a feature vector from a corresponding graph $G_i$. Therefore, MMD is defined as follows:
\begin{equation}
\mathrm{MMD}(\mathbb{S}_r, \mathbb{S}_g) = \frac{1}{m^2}\sum_{i,j=1}^{m}(k(x_{i}^r,x_{j}^r)) + \frac{1}{n^2}\sum_{i,j=1}^{n}(k(x_{i}^g,x_{j}^g)) - \frac{2}{nm}\sum_{i=1}^{n}\sum_{j=1}^{m}(k(x_{i}^g,x_{j}^r)),
\end{equation} where $k(\cdot, \cdot)$ is a general kernel function. For the case of the metric MMD Linear, used in our graph experiments, we use a linear kernel.

\subsection{An Axiomatic Approach to Defining Intrinsic Diversity}
\label{app:axioms}

The attempt to define, prove and interpret a theoretically well-founded notion of diversity has inspired decades of heated debate in theoretical ecology. While in the context of machine learning, diversity is still very seldomly explored axiomatically, measures of biodiversity are often defined in a more well-established and unified framework built on mathematically complex ideas, such as entropy and extended notions of size. However, there is a lot of benefit to be gained by a more theoretical discussion on evaluating diversity in representation learning. 
Indeed, as pointed out by \citet{leinster2021entropy}, if a diversity 
 measure does not pass basic logical tests, it is likely to be 
 misleading and potentially useless for practical applications, which 
 can have far-reaching detrimental consequences. By discussing some of these fundamental properties we thus uncover exactly how existing diversity measures for evaluating latent representations fail at essential requirements. We further discuss how magnitude functions improve upon alternative diversity metrics.

Throughout this section let $m(X) \in \reals$ be a diversity measure of the (metric) space $X$, let $d_X$ be a distance metric on this space and $\zeta_X$ a matrix of pairwise similarities.

\textbf{Diversity measures the absolute richness in observations.} 
Metric space magnitude as a diversity measure has been introduced with the following three properties fundamental properties in mind \cite{solow1994measuring}: 

\begin{compactitem}
  \item \textbf{Monotonicity in observations.} Including a new observation with all positive distances to a metric space with a negative definite metric does not decrease diversity. Formally, for all $x_0 \notin X$ where $(X, d_X)$ is a metric space define $(Z = X \cup x_0, d_Z)$ via inclusion so that $d_Z$ is a valid metric. If $d_Z(x_0, x) > 0$ for all $x \in X$, the diversity measure $m$ is monotone in observations if $m_X \leq m_Z$. 
For magnitude, when taking $m=\magnitude$, this directly follows from Corollary 2.4. in \citet{leinster2013magnitude}.

\item \textbf{Twin property.} Diversity does not change when including a duplicate observation. Formally, for $x_0\in X$ define $Z = x_0 \cup X$. A diversity measure $m$ then respects the twin property if $m_Z = m_X$. 
Magnitude fulfils this property, which follows directly from the definition of a metric space because $X$ is a set and cannot include duplicate observations. That is, for $x_0\in X$ where $(X, d)$ is a metric space, we get that $Z = x_0 \cup X = X$ and $\magnitude_Z = \magnitude_X$. 

\item \textbf{Monotonicity in distances.} For $|Y| = |X|\geq 2$, when $f\colon (X, d_X) \rightarrow (Y, d_Y)$ is a bijective mapping so that no distance is decreasing and at least distance is increasing and $d_X$ and $d_Y$ are negative definite, diversity does not decrease. Formally, when
$d_X(x_1, x_2) \leq d_Y(f(x_1), f(x_2))$ for all $x_1, x_2 \in X$ and 
$d_X(a, b) < d_Y(f(a), f(b))$ for some $a,b \in X$, a diversity measure $m$ is monotone in distances if $m_X \leq m_Y$.
Magnitude fulfils this conjecture for all known examples of 
negative definite metric spaces \cite{leinster2013magnitude, solow1994measuring}.
\end{compactitem}

Given that these three essential properties hold for magnitude at every choice of $t$, they also hold for the area under the magnitude function for a space with a negative definite distance metric, whose magnitude function is necessarily continuous as demonstrated in  \cref{app:stability}. However, none of the alternative diversity measures defined in \cref{app:int_div} fulfil the twin property or the condition of being monotonic in observations as demonstrated in \cref{app:counter}. This can lead to an undesirable decrease in diversity when including novel observations to a space that adds information and should thus not reduce the total diversity but is similar to existing points. Indeed, given this behaviour we argue that a decrease in diversity as measured by the existing metrics can actually be misleading.

\textbf{Diversity requires basic invariances.} 
Elaborating on the idea that diversity measures should pass basic sanity checks, the following more basic properties are desirable for measuring the intrinsic diversity of a space as intended \cite{leinster2021entropy}:
\begin{compactitem}
    \item \textbf{Isometry invariance:} Diversity does not change under isometric transformations of a space. Magnitude is isometry invariant as defined and proven in \cref{app:isometry}.
     \item \textbf{Symmetry:} Diversity is invariant under permutations of the input observations. Formally, let $\sigma: X \rightarrow X$ be a permutation of $X$. Then, a diversity measure $m$ is symmetric if $m(X) = m(\sigma(X))$.  
  \item \textbf{Absence invariance:} 
    Diversity only depends on the samples and features present in the dataset. Formally, let $X' \subseteq \reals^{D'}$ be dataset with feature space of dimension $D'$, and $X \subseteq X'$ be the subset of observed samples and non-zero features $X \subseteq \reals^{D}$. Then, a diversity measure $m$ is absence-invariant if $m(X) = m(X')$. 
    That is, diversity does not change when removing elements or features that have not been observed or have zero probability. 
 \end{compactitem}
Magnitude and thus \textsc{MagArea} fulfil the aforementioned conditions by definition. Specifically, magnitude is absence-invariant, symmetric and isometry-invariant because the distance metric $d_X$ itself is absence-invariant, symmetric and isometry-invariant. These properties can be proven as in \cref{app:isometry}. 
Overall, invariances are key properties when studying the diversity of latent representations. Indeed, symmetry is so essential that all diversity measures evaluated in this study fulfil it. 
\textsc{GMStds} however, is not absence invariant as it always equals zero whenever one feature or dimension in the embedding space is constant, which limits its usefulness and makes it sensitive to the absence of information.

\textbf{Diversity measures the effective number of distinct observations.} 
Originating from using entropy to define a suitable notion of diversity for theoretical ecology, the following aspects of measuring diversity in deserve to be highlighted: 
\begin{compactitem}
    \item \textbf{Effective size:} 
    A dataset with a fixed number of points is more diverse when points are separated e.g. distributed uniformly or maximally disordered and becomes less diverse as observations cluster together. Diversity is maximised when points are completely distinct and minimised when all observations are identical. Formally, let $e: X \rightarrow X'$ be a transformation that decreases the entropy of space $X$.\footnote{We leave the precise definition of entropy up to discussion \citep{leinster2021entropy}, but give examples of what we consider to be entropy-decreasing operations. One example is scaling through a negative definite metric space $tX$ by increasing the similarity between observations i.e. by decreasing $t$. Removing informative features, mode dropping or mode collapse are further phenomena that decrease diversity.} We then require that $m(X) \leq m(e(X))$. 
    Say a diversity measure $m(X)\in \reals$ has a minimum $m_{\text{min}}$ and maximum value $m_{\text{max}}$. 
    We further require that $\lim_{t \rightarrow 0} m(tX) = m_{\text{min}}$ and $\lim_{t \rightarrow \infty} m(tX) = m_{\text{max}}$. 
    \item \textbf{Effective number:} Diversity should be measured the effective size of a space in the
    range $[1, |X|] \ni m(X)$, so diversity is expressed as the effective
    number of distinct points or clusters in a space. 
\end{compactitem}

We note that the latter condition on diversity being measured as effective number is very helpful for measuring biodiversity \cite{daly2018ecological} and to give a sensible summary of the clustering structure of a dataset, but being interpretable as a number of points is potentially not necessary in all applications. Nevertheless, we find that relating diversity to assessing entropy and requiring that diversity is aware of clusterability and uniformity in the data, is essential for a useful diversity measure as further illustrated in \cref{app:sim_toy}. Note that magnitude fulfils the conditions of measuring both an effective size and an effective number by definition as defined in \cref{sec:magnitude} and discussed in more detail by \citet{leinster2021entropy}. Based on this, \textsc{MagArea} as a multi-scale summary of magnitude also fulfils the condition of measuring the effective size of a space and can easily be converted into an effective number.

\textbf{Diversity is a multi-scale summary of (dis)similarities.} 
Finally, we note that diversity should be seen as a continuous function of the scale of similarity and behave accordingly:
\begin{compactitem}
    \item \textbf{Similarity-sensitivity:} Diversity is computed from and determined by the (dis)similarities between observations. That is, a diversity measure $m \in \reals$ of a dataset $X$ is defined as $m \coloneq f(D_X)$ or $m(X) \coloneq f(\zeta_X)$ for some function $f$, where $D_X$ is a matrix of  distances between observations in $X$ and similarly $\zeta_X$ is a matrix of similarities. 
    \item \textbf{Scale-dependence:} Further, we require that a diversity measure $m_t \in \reals$ is a continuous function of the scale of (dis)similarity $t$. Thus, diversity is not just a one-number summary, but a function of said scale. Formally, $m_t(X) \coloneq f(D_X(t))$ or $m_t(X) \coloneq f(\zeta_X(t))$.
    \item \textbf{Multi-scale:} A multi-scale measure encodes both local and global trends in the data manifold by considering multiple levels of scale or resolution simultaneously. Let $m_t$ be a scale-dependent diversity measure. A multi-scale measure, $m$, further summarises diversity across multiple scales i.e. $m = f(m_{t_1}, (m_{t_2}, ..., (m_{t_n})) \in \reals$ for $n>2$ and some summary function $f$. 

\end{compactitem}
Rather than giving a snapshot of diversity at a fixed degree of (dis)similarity, multi-scale methods summarise diversity across varying scales of (dis)similarity. We reason that this property is advantageous to capture a more complete picture on how both coarse and more nuanced dissimilarities in observations affect diversity. Being a multi-scale summary is a distinguishing characteristic of our proposed diversity measure, \textsc{MagArea}. Alternative diversity measures, such as average similarity, the Vendi Score or magnitude computed at one scale, do not fulfil this criterion as they are single resolution snapshots computed from a fixed similarity matrix.

\subsection{Counterexamples}
\label{app:counter}

In the following, we now demonstrate how the diversity measures introduced in \cref{app:int_div} fail at some of the fundamental axioms of diversity introduced in \cref{app:axioms} via simple examples. Consider the following feature matrices:
\begin{equation}
X=\begin{bmatrix}
1  \\
0  \\
\end{bmatrix}, \hspace{10pt} Q=\begin{bmatrix}
1 & 0 \\
0 & 0 \\


\end{bmatrix}, \hspace{10pt} Z=\begin{bmatrix}
1  \\
0  \\
0
\end{bmatrix} \hspace{5pt} \text{ and } \hspace{5pt} Y=\begin{bmatrix}
1  \\
0  \\
0.01
\end{bmatrix}
\end{equation}

\begin{table}[tbp]
  \centering%
  \sisetup{
    detect-all           = true,
    table-format         = 1.2(2),
    separate-uncertainty = true,
  }
  \caption{%
   \textbf{Counterexamples} demonstrating that alternative diversity measures fail to fulfil fundamental axioms of diversity, whereas magnitude passes these sanity checks. 
   \smallskip
  }
  \label{tab:counterexamples}
  \small%
  \let\b\bfseries
      \begin{tabular}{lSSSSS}
        \toprule
        Space         
        &  
        {\textsc{MagArea} ($\uparrow$)} & {VS ($\uparrow$)}  &  {\textsc{AvgSim} ($\downarrow$)}  & {\textsc{GMStds} ($\uparrow$)} 
        \\
        \midrule
        \texttt{X}  two point space   
        & 4.602       &  1.867   &0.368     &0.500 
        \\
        \texttt{Q} absence 
        &  4.602         & 1.867   &0.368    &0 
        \\
        \texttt{Z} duplicate point 
        & 4.602          & 1.77   & 0.579     &0.471 
        \\
        \texttt{Y} three point space    
        & 4.613         & 1.809   & 0.577     &0.469 
        \\
        \bottomrule
      \end{tabular}
\end{table}

First, consider the two point space as given by 
$X$ as a reference space. We then use Manhattan distances $d_X$ and the similarity matrix $\zeta_X$ as given in \cref{def:Magnitude} to compute each diversity measure. In particular, to compute \textsc{MagArea} we use the convergence scale of $X$ as a reference. The resulting diversity values are summarised in \cref{tab:counterexamples}. 

\textbf{Absence invariance.} To check for absence invariance, we include a constant feature dimension to $X$ and get  
$Q$. Clearly, $Q$ has the same diversity as $X$. Indeed, all diversity values are equal for these spaces other than \textsc{GMSTDs}. This counterexample thus shows that \textsc{GMSTDs} is not absence invariant.

\textbf{Twin property.} Next, we include a duplicate observation to $X$ and get   
$Z$ to examine the twin property. Note that including a repeated observation does not change diversity as the space still consists of only two unique observations. However, \textsc{VS}, \textsc{AvgSim} and \textsc{GMSTDs} all assess that $Z$ is less diverse than $X$ and thus do not fulfil the twin property. 

\textbf{Monotonicity in observations.} Lastly, include one new observation to $X$ and consider the three point space $Y$. While $Y$ is very similar to $X$, we can see that overall, in terms of absolute diversity, $X$ is not more diverse than $Y$. However, \textsc{MagArea} is the only measure in \cref{tab:counterexamples} that indicates that $Y$ is slightly more diverse than $X$. Indeed,
all of \textsc{VS}, \textsc{AvgSim} and \textsc{GMStds} indicate that $Y$ is less diverse than $X$ and they are thus not monotone in observations.

\textbf{New or duplicate observations.} Including a duplicate observation should lead to lower diversity than including a new unseen observation to a space. However, \textsc{GMStds} actually indicates that $Z$, the space with duplicate points, is less diverse than the three point space $Y$, which is clearly counterintuitive.

\subsection{Simulation Studies}
\subsubsection{Effective Size and Uniformity}
\label{app:sim_toy}
To gain more insights, we further compare diversity metrics on the examples from \cref{fig:magnitude_visualisation_5}a) and report them
in \cref{tab:toy}. That is, we simply ask the question, which of these simulated examples are more diverse. Note that each space has the same number of points, but the examples vary in their effective size and clustering behaviour. We know a ground truth ordering, namely that the uniform pattern in $X_1$ is the most diverse example as points are most clearly separated and more evenly spread across the entire sampling domain.  

Results then show that two of the baseline measures, \textsc{AvgSim} and \textsc{GMStds}, fail to capture notable 
differences in diversity on simple simulations as they do not detect that the random pattern in $X_1$ is more 
diverse than a clustered pattern in $X_2$. 
Here, the difference in diversity between $X_1$ and $X_2$ is driven by the difference in the disorder or clustering behaviour of their respective observations. Linking back to the axioms of diversity, these examples illustrate how diversity should be aware of an effective number of distinct clusters or points and evaluate diversity via assessing entropy. 

Our results thus demonstrate how magnitude gives a superior notion of diversity, that captures variations in effective size, clustering behaviour and uniformity. In contrast, both \textsc{AvgSim} and \textsc{GMStds} are unaware of these differences. In practice, this could further lead to misleading assessments of diversity for e.g. generative model evaluation. For example, a clustered sentence embedding, corresponding to a few distinct groups of sentences that share the same meaning, might be wrongly deemed to have the same diversity as a more varied set of generated sentences, which notably differ in meaning and show a more uniform distribution in the embedding space. 

To further contrast diversity measures against uniformity criteria, we also compute the L2-star discrepancy of each sample, \textsc{L2Star}. Discrepancy measures in general assess how much an empirical distribution deviates from a uniform distribution on the unit hypercube \citep{wang2008low, zhou2013mixture}. Lower discrepancy implies higher agreement with uniformity, which in our context corresponds to higher diversity. However, we observe that that \textsc{L2Star} does not clearly distinguish a difference between $X_1$ and $X_2$, but rather assesses that both examples are close to uniformity. Results indicate that this classical discrepancy measure gives a global summary of evenness, which assesses $X_2$ appears uniform on a large-scale, but does not pick up on local difference in effective size and small-scale clustering behaviour. This illustrates the necessity of evaluating both local and global trends in diversity via a multi-scale summary of diversity as given by \textsc{MagArea}.

\begin{table}[tbp]
\centering
\sisetup{
  detect-all           = true,
  separate-uncertainty = true,
}
\caption{\textbf{\textsc{MagArea} shows the correct order in diversity} when comparing the simulated examples in \cref{fig:magnitude_visualisation_5}a) from the main text. In contrast, two baseline diversity measures, \textsc{AvgSim} and \textsc{GMStds}, as well as the discrepancy measure \textsc{L2Star} fail to distinguish that the random point pattern, $X_1$, 
is more diverse than the clustered point pattern, $X_2$. 
}
\smallskip
\small
\label{tab:toy}
\let\b\bfseries
\begin{tabular}{lllllll}
\toprule 
    & \textsc{MagArea} ($\uparrow$) 
    & \textsc{VS} ($\uparrow$) & \textsc{AvgSim} ($\downarrow$) & \textsc{GMStds} ($\uparrow$) & \textsc{L2Star} ($\downarrow$) \\
\midrule
$X_1$ Poisson Process & 133 
& 14.6 & 0.39 & 0.59 & 0.004 \\
$X_2$ Hawkes Process  & 99  
& 13.1 & 0.39 & 0.59 & 0.004 \\
$X_3$ Two Gaussians   & 69  
& 5.7  & 0.51 & 0.53 & 0.041\\
$X_4$ One Gaussian    & 48  
& 3.1  & 0.79 & 0.14 & 0.260\\
\bottomrule
\end{tabular}
\end{table}

\subsubsection{Investigating the Twin Property for Diversity Measures}

To link our investigation to the theoretical axioms of diversity, we examine the twin property. This
requirement asserts that diversity should not change when including duplicate observations into a given
dataset. When evaluating generative models, diversity measures that satisfy the twin property are
advantageous because they penalise models that just repeat existing observations again, as opposed to
providing genuinely `novel' outputs.
Results of this case study are reported in \cref{fig:twin}, showing how the popular baseline diversity 
measures, \textsc{AvgSim}, \textsc{VS} and \textsc{GMStds} as well as the discrepancy measure \textsc{L2Star}, the L2-star discrepancy, all fail to fulfil the twin property, instead exhibiting highly-inconsistent behaviour. Our proposed
method meanwhile is the only diversity measure that respects the twin property and remains consistent,
demonstrating one of its practical advantages. 

\begin{figure}[tbp]
    \centering
    \includegraphics[clip, trim=0cm 1.2cm 1.7cm 0.1cm, width=0.9\textwidth]{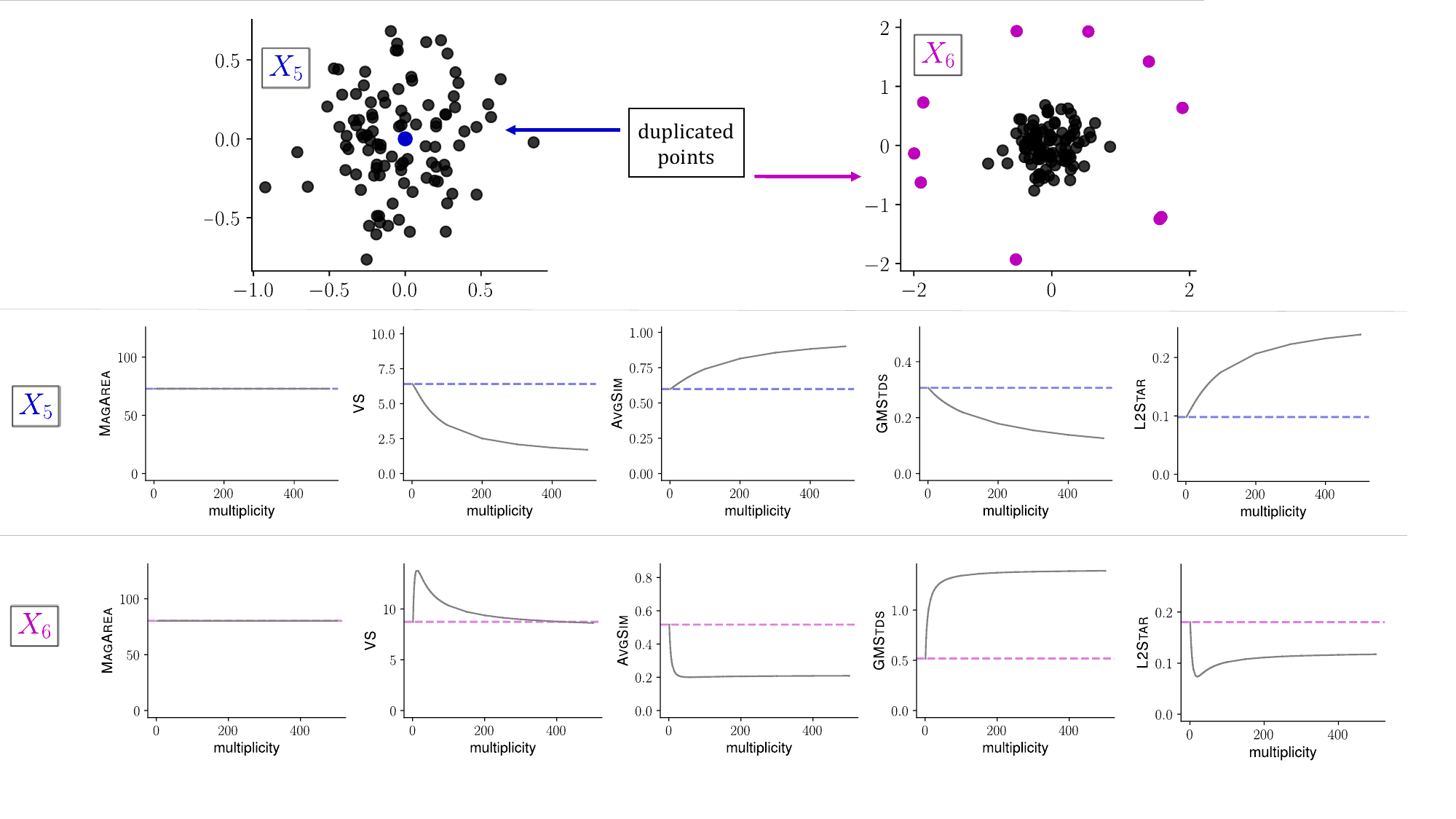}
    \caption{%
    \textbf{Our method, \textsc{MagArea}, is the only diversity measure that
    satisfies the twin property, one of the fundamental axioms of diversity. \textsc{MagArea} correctly assesses that diversity does not change when including duplicate observations.} All baseline diversity scores fail to fulfil this property and show inconsistent changes. $VS$ even switches trends from increasing initially to decreasing after a certain number of replicates has been reached for $X_6$. 
    We investigate two simulations: 
    $X_5$ shows a Gaussian distribution with 100 points, whose centre gets duplicated. $X_6$ shows a Gaussian blob with 100 observations as well as 10 outliers, which get duplicated with the specified multiplicity. Changes in the diversity scores as increasing numbers of duplicate copies are added are shown below. Dashed horizontal lines show the ground truth value of each diversity measure prior to including duplicate points. 
    }
    \label{fig:twin}
\end{figure}

\section{Additional Details for Our Experiments}
\label{app:extended_experiments}
In the following we give further details and elaborate on the
experimental setup and datasets used for our experiments as well as
showcase extended results.

\subsection{Curvature Experiments}
\label{sec:curvature_exp_app}

Here we provide more details about the curvature experiments, which builds on the approach by \citet{turkes2022effectiveness};
see \url{https://github.com/renata-turkes/turkevs2022on} for an implementation.
We generate a unit disks $D_\kappa$ of surfaces of constant curvature $\kappa$, with 3 cases: the first one is when $\kappa = 0$ (we then have the Euclidean plane), $\kappa < 0$ (we have a space of negative curvature, the Poincaré disk model of the hyperbolic plane), $\kappa > 0$ (sphere with radius $1/\sqrt{\kappa}$). We vary the curvature $\kappa$ to be in the interval $[-2,2]$. For each value of $\kappa$, we construct point clouds by sampling 500 points from $D_\kappa$. We generate 201 surfaces with equally spaced curvature in the interval $[-2,2]$. 
Then, we compute magnitude for each space using Euclidean distance and $30$ evenly spaced intervals until the scale $t_{\mathrm{cut}} = 73$. 

From the resulting values of \textsc{MagArea} plotted against the curvature values in \cref{fig:functions_curvature}, we can intuitively explain the relationship between diversity i.e. magnitude and curvature. For unit disks of positive curvature, the higher the curvature the lower the value of \textsc{MagArea}. This indicates that points move closer and closer the more curved the surface is decreasing the diversity in Euclidean space.
For surfaces with negative curvature we see the opposite trend. The more negatively curved the Poincaré disk the lower the value of \textsc{MagArea}. This is because Euclidean distances between points and thus diversity are decreasing.

For the results reported in \cref{tab:Magnitude curvature} we further apply 5-fold cross-validation and aim to predict the curvature values from the diversity score. We first train a quantile regression model on the \textsc{MagArea} after applying polynomial feature transformation of degree 2 to the training data suspecting a quadratic-looking relationship between \textsc{MagArea} and curvature after exploratory analysis.  Further, we compare this to piecwise linear regression with two breakpoints under the assumption that the relationship between \textsc{MagArea} and curvature as plotted in \cref{fig:functions_curvature} rather depicts a piecewise linear relationship clearly separating spaces of positive and negative curvature. Both regression models were compared in the final results in \cref{tab:Magnitude curvature} to investigate multiple proposals on how to interpolate between the \textsc{MagArea} scores for surfaces of negative and positive curvature.

We further report six alternative models from
\citet{turkes2022effectiveness}, which are using features from persistent homology (PH) summarising persistence diagrams (PDs). See \citet{bubenik2020persistent} for a more detailed explanation on PH and its relationship to curvature. Specifically, in  \cref{tab:Magnitude curvature} we reproduce the following models from Figure 4. and Table 3. of \citet{turkes2022effectiveness}:
\begin{compactitem}
    \item SVR (all PH features) referred to as 0-dim PH simple by \citet{turkes2022effectiveness}, which uses the lifespans of the persistence diagram computed on the samples;
    \item SVR (selected PH features) denoted 0-dim PH simple 10 by \citet{turkes2022effectiveness}, which uses the 10 longest lifespans; and
    \item SVR (PH vectorisation) corresponding to 0-dim PH by \citet{turkes2022effectiveness}, which selects the best PD vectorisation amongst a number of options, namely persistence images (PI) or persistence landscapes (PL).
\end{compactitem}

All the PH-based methods use support vector regression (SVR) with a RBF kernel. Hyperparameter tuning for these models is conducted  
as reported by \citet{turkes2022effectiveness} using grid search with a choice of C parameters in $\{0.001, 1, 100\}$. We further reproduce $1$ method based on pairwise distance matrices:
\begin{compactitem}
    \item SVR (distance matrices) denoted as ML by \citet{turkes2022effectiveness}.
\end{compactitem}
Finally, we restate the performance scores of these two methods directly from \citet{turkes2022effectiveness}: 
\begin{compactitem}
    \item MLP (shallow) denoted as NN shallow  by \citet{turkes2022effectiveness}; and
    \item MLP (deep) denoted as  NN deep by \citet{turkes2022effectiveness}.
\end{compactitem} 
We also note that the other models achieve different performance scores on our dataset than reported by \citet{turkes2022effectiveness} due to a slight difference in dataset and cross-validation splits. We use a smaller subset of samples than \citet{turkes2022effectiveness} each having a unique curvature value as described above, and ensure that all models are evaluated on the same splits of data across $5$-fold CV for fair comparison. Finally, we summarise the MSE achieved by each model in  \cref{tab:Magnitude curvature}. Illustrating this, \cref{fig:functions_curvature} further shows examples of both magnitude functions for negative and positive curvature as well as the clear piecewise-linear trend between \textsc{MagArea} and curvature.

\subsection{Measuring the Intrinsic Diversity of Text Embeddings}\label{apx:Text Experiments}

We analyse data from \citet{tevet2021evaluating}, consisting of 1K sets of 10 sentences each generated for unique input prompts for 3 sentence generation tasks. The code by \citet{tevet2021evaluating} is available at \url{https://github.com/GuyTevet/diversity-eval} under an MIT licence and data can be downloaded from \url{http://diversity-eval.s3-us-west-2.amazonaws.com/data.zip}.
These tasks are story completion~(storyGen) and dialogue response generation~(respGen), both using MASS model fine-tuned on each dataset, and 3-word prompt completion~(promptGen) using GPT-2-large without fine tuning.
From the 1K response sets per task, 10 have been generated using the same decoding parameter, the softmax-temperature $\mathrm{dec}$,
sampled evenly across the range $[0.2, 1.2]$, which
controls the trade-off between quality and diversity
by skewing models towards avoiding low-probability tokens as $\mathrm{dec}$ decreases.
This leads to potentially higher quality or fidelity but lower diversity or creativity in generated text. 
Further, for $200$ response sets per task, mean human evaluation scores of text diversity were collected. 
Human workers rated the level of
diversity of the response set on a scale from $5$~(highest) to $1$~(lowest). 
The mean of these scores, absolute HDS (\textsc{absHDS}), measures the human perception of text diversity.

We then create embeddings for each set of sentences utilise the \href{ https://sbert.net/index.html}{sentence-transformer} library by \citet{reimers-2019-sentence-bert}.
In particular, we apply the following five embedding models:
\begin{compactitem}
    \item \href{https://huggingface.co/sentence-transformers/all-distilroberta-v1}{all-distilroberta-v1}: general-purpose model, embedding dimension 768;
    \item \href{https://huggingface.co/sentence-transformers/all-MiniLM-L6-v2}{all-MiniLM-L6-v2}: general-purpose model, embedding dimension 384;
    \item \href{https://huggingface.co/sentence-transformers/all-mpnet-base-v2}{all-mpnet-base-v2}: general-purpose model, embedding dimension 768;
    \item \href{https://huggingface.co/sentence-transformers/bert-large-nli-stsb-mean-tokens}{bert-large-nli-stsb-mean-tokens}: general-purpose model, embedding dimension 1024; and
    \item \href{https://huggingface.co/sentence-transformers/roberta-base-nli-mean-tokens}{roberta-base-nli-mean-tokens}: general-purpose model, embedding dimension 768.
\end{compactitem}

For each text embedding we compute magnitude functions and our diversity measure, \textsc{MagArea}, across $20$ evenly sampled scales in $[0, t_{\text{cut}}]$ where $t_{\text{cut}}$ is the median of the convergence scales across all embeddings setting $\epsilon=0.01|X|$. 
%
We then run prediction tasks as described in \cref{sec:text_diversity} repeating the same procedure for predicting both human scores and decoding parameters. \cref{fig:pred_results_appendix} details the results for predicting human scores. \cref{fig:ranks} then shows summaries of the ranks achieved by each metric across the two experiments. For additional context on the choice of convergence scales, \cref{fig:corrs_hds_scales} shows that changing the value of $\epsilon$ for computing \textsc{MagArea} has a small effect on its correspondence with human evaluation scores indicating that our measure is relatively robust to this choice. Results from \cref{fig:dec_results} are further expanded is expanded in \cref{tab:means_dec} which shows the mean $R^2$ scores and median ranks across datasets and \cref{tab:diffs_dec} summarising the difference in performance scores between \textsc{MagArea} and other diversity measures highlighting that entropy-based diversity metrics give superior notions of diversity. 
Illustrating the relationship between magnitude and diversity, \cref{fig:mpnet} and \cref{fig:mpnet_hds} plot both magnitude functions and the values of \textsc{MagArea} against both human scores and decoding parameters for one of the embedding models (plots for other embeddings follow similar trends). Indeed, these visualisations then demonstrate that \textsc{MagArea} can be used as a descriptor of diversity in generated text achieving high rank correlation with known ground truth values of diversity.

\begin{figure*}[tbh]
    \centering
    \includegraphics[width=1\linewidth]{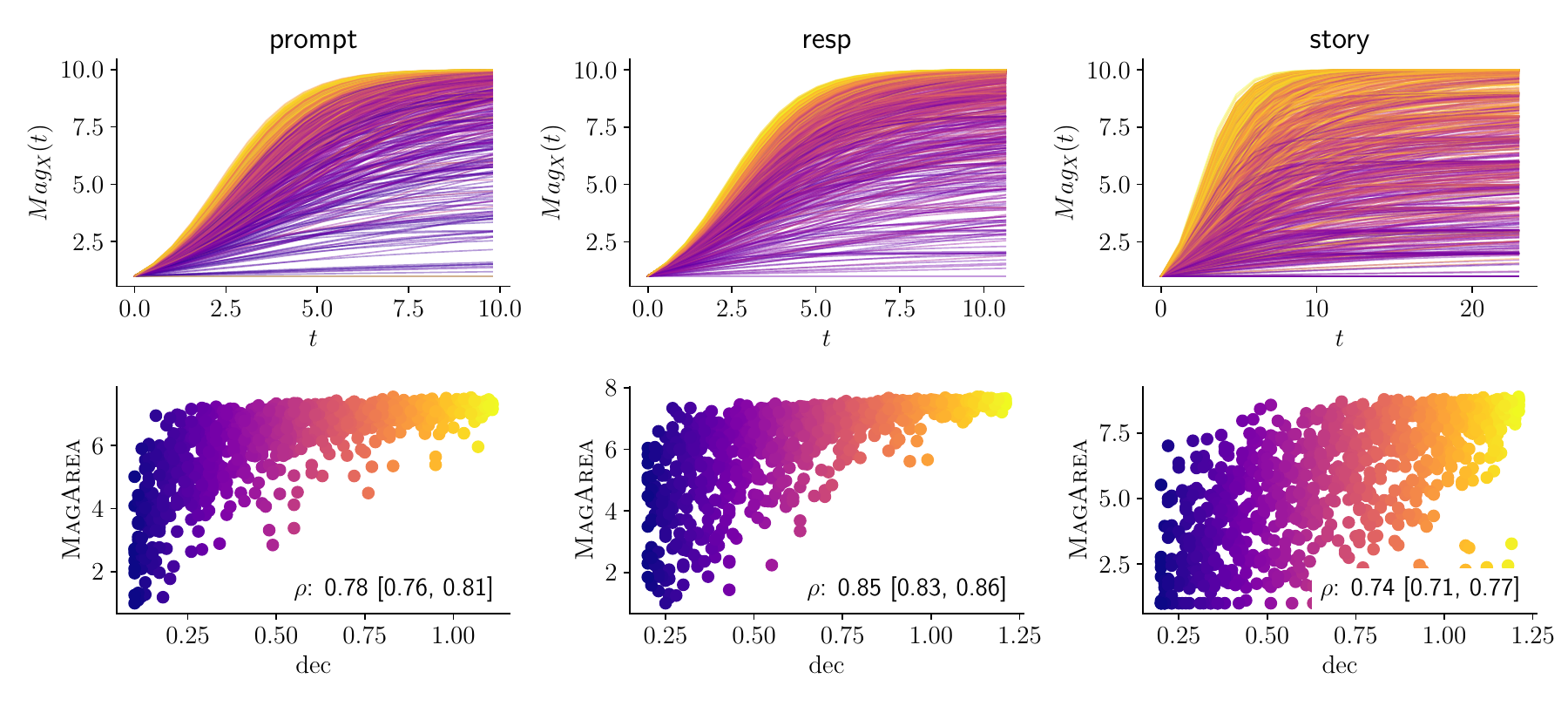}
    \caption{\textbf{\textsc{MagArea} and magnitude functions} computed using all-mpnet-base-v2 plotted against the decoding parameters. The value $\rho$ shows the mean rank correlation between \textsc{MagArea} and the softmax-temperature as well as 95\% bootstrap intervals computed by resampling 1000 times.
    }
    \label{fig:mpnet_all}
\end{figure*}

\begin{figure*}[htbp]
    \centering
    \includegraphics[width=1\linewidth]{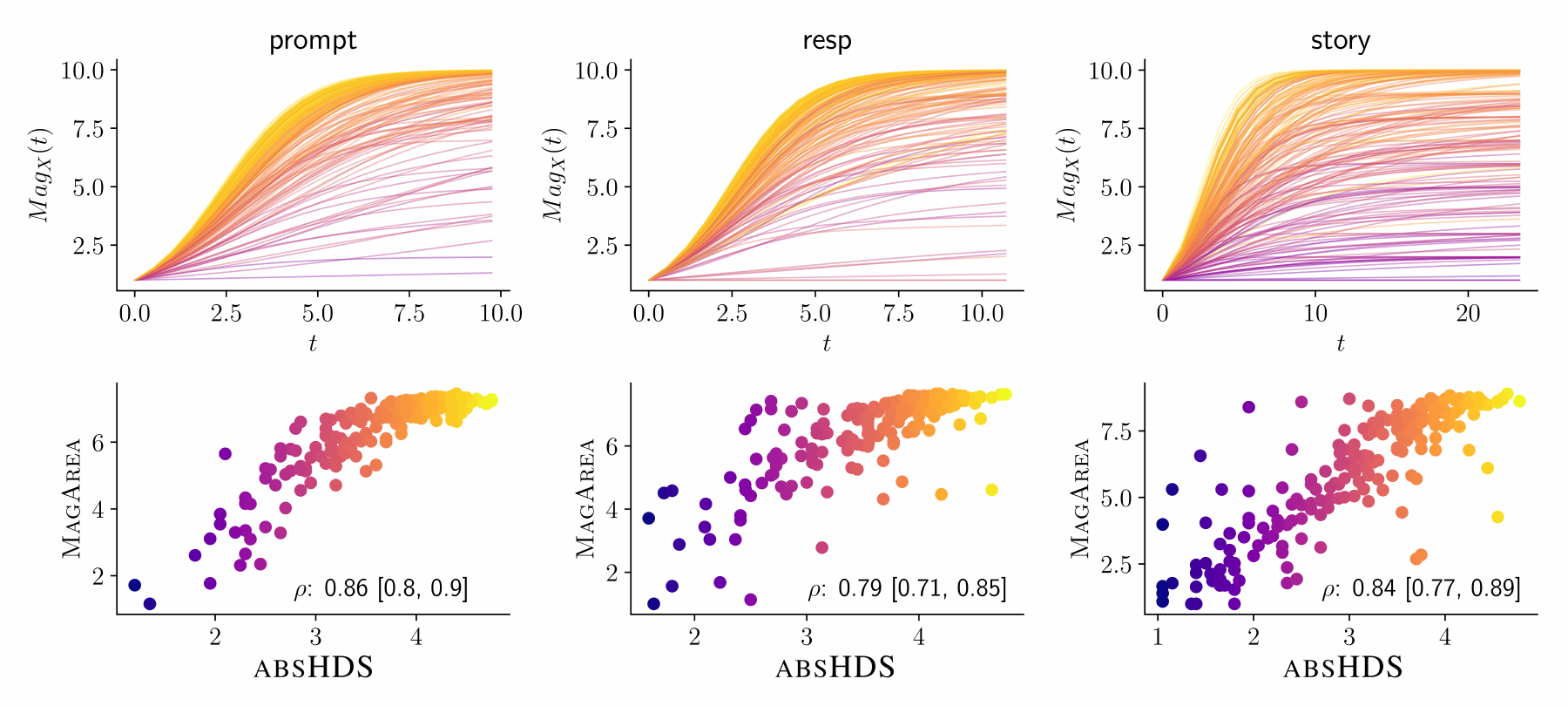}
    \caption{\textbf{\textsc{MagArea} and magnitude functions} computed using all-mpnet-base-v2 embeddings plotted against the human scores. The value $\rho$ shows the mean rank correlation between \textsc{MagArea} and \textsc{absHDS} as well as 95\% bootstrap intervals computed by resampling 1000 times.
    }
    \label{fig:mpnet_hds}
\end{figure*}

\begin{figure*}[htbp]
    \centering
    \includegraphics[width=1\linewidth]{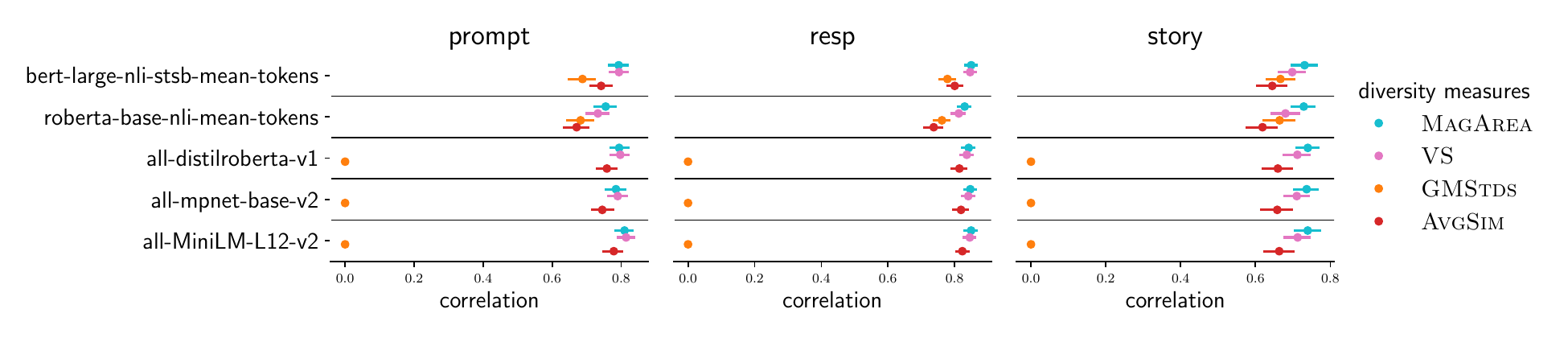}
    \caption{\textbf{\textsc{MagArea} shows higher rank correlation with the ground truth  softmax temperature than alternative diversity measures} across 3 tasks and 5 embedding models. Baseline measures, \textsc{AvgSim} and \textsc{GMStds}, show  noticeably worse rank correlation. Points show the mean Spearman correlation and lines represent 95\% bootstrap intervals computed across 1000 resamples.
    }
    \label{fig:corrs}
\end{figure*}

\begin{figure*}[htbp]
    \centering
      \includegraphics[width=1\linewidth]{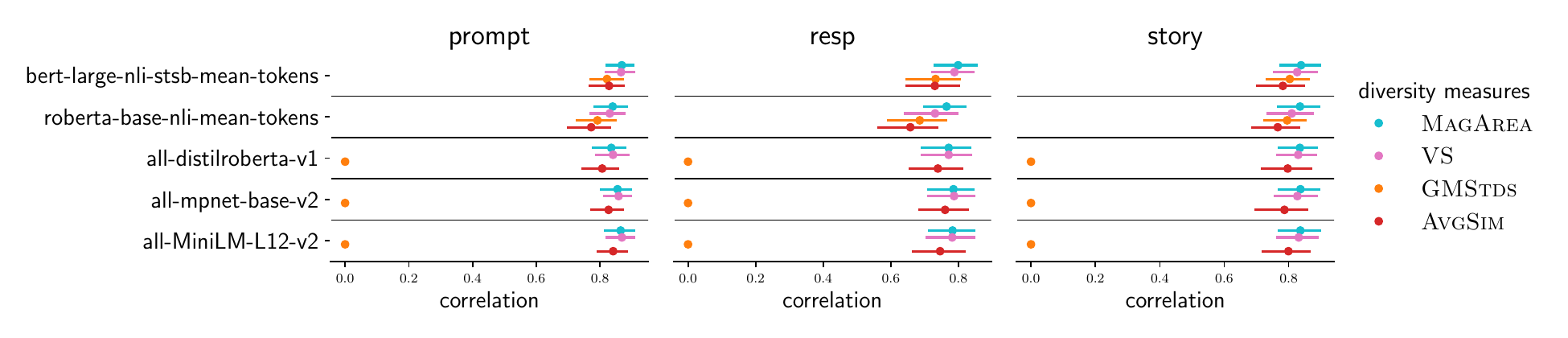} 
    \caption{\textbf{\textsc{MagArea} shows higher rank correlation with human evaluation scores than alternative diversity measures} across 3 tasks and 5 embedding models. Baseline measures, \textsc{AvgSim} and \textsc{GMStds}, show  noticeably worse rank correlation. Points show the mean Spearman correlation and lines represent 95\% bootstrap intervals computed across 1000 resamples.
    }
    \label{fig:corrs_human}
\end{figure*}

\begin{figure*}[htbp]
    \centering
      \includegraphics[width=1\linewidth]{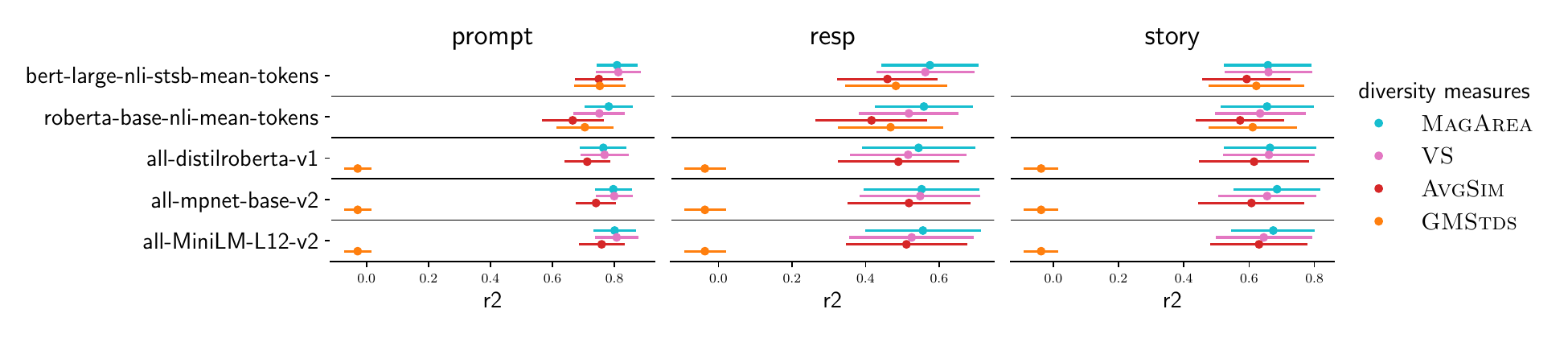}
    \caption{\textbf{\textsc{MagArea} predicts 
    human evaluation scores}, across 3 tasks and 5 embedding models. Baseline measures, \textsc{AvgSim} and \textsc{GMStds}, perform worse in terms of the mean $R^2$ scores. Points show the mean of the $R^2$ scores, while lines represent the standard deviations across $5$-fold cross-validation~(repeated $10$ times).
    }
    \label{fig:pred_results_appendix}
\end{figure*}

\begin{figure}[htbp]%
    \centering
    \includegraphics[scale=0.5]{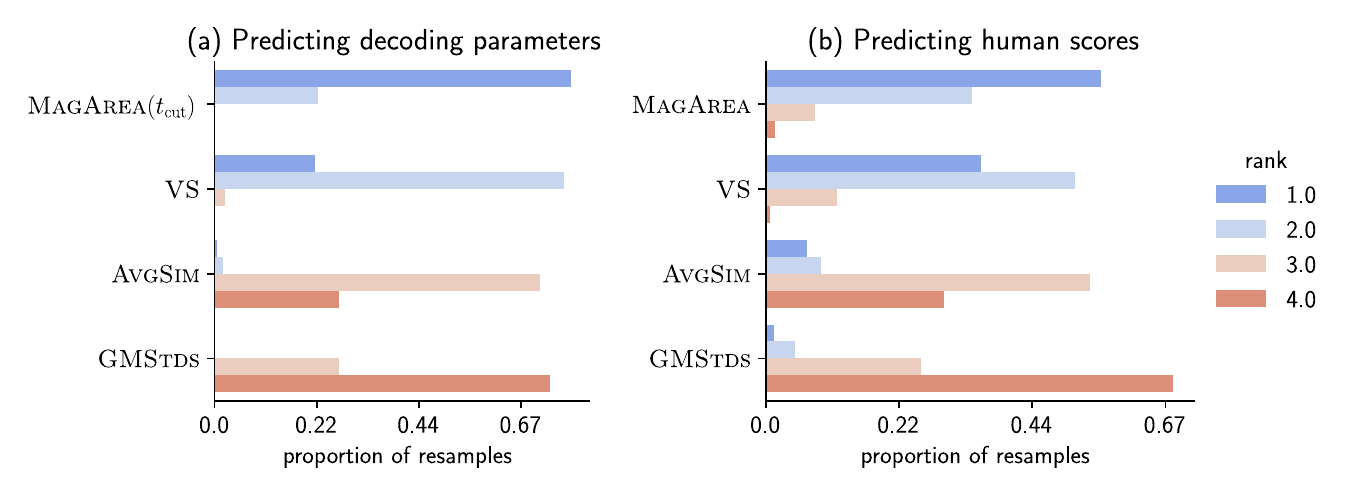} %
    \caption{\textbf{The area under the magnitude function outranks baseline diversity measures} at (a) predicting decoding parameters and (b) predicting human-evaluated diversity scores across all experiments and cross-validation resamples.\smallskip}
    \label{fig:ranks}
\end{figure}

\begin{figure*}[htbp]
    \centering
    \includegraphics[width=0.35\linewidth]{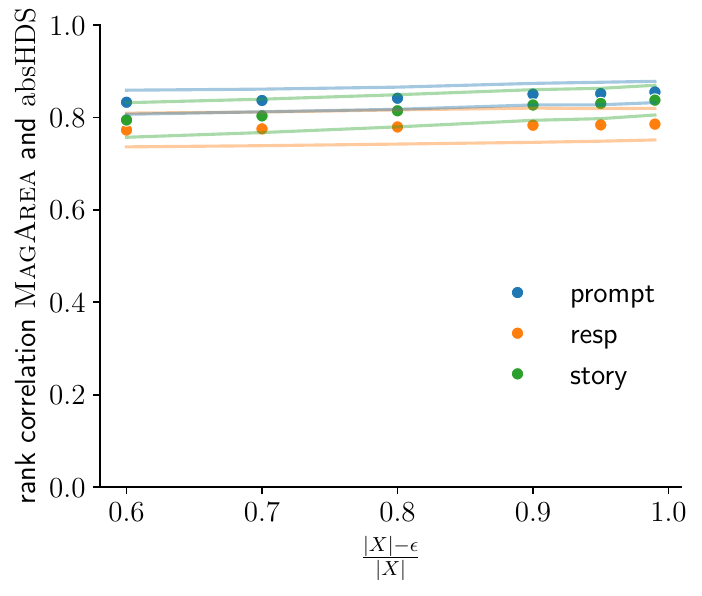}
    \includegraphics[width=0.35\linewidth]{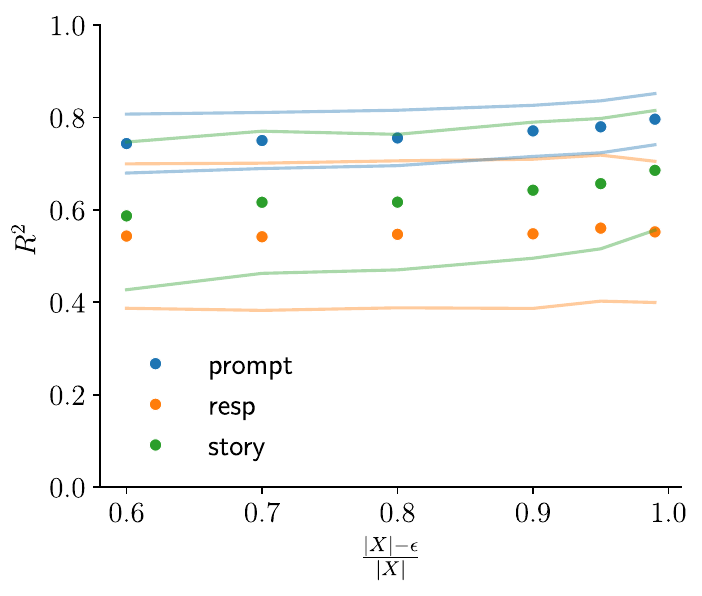}
    \caption{\textbf{Impact of the choice of convergence scales on predicting human evaluation scores}. Both the rank correlation between \textsc{MagArea} and $\mathrm{absHDS}$ as well as the $R^2$ scores achieved by \textsc{MagArea} for predicting $\mathrm{absHDS}$ remains consistent across different choices of convergence scales. Here we take the median convergence scale across embeddings as a cutoff scale, but change the $\epsilon$ parameter from \cref{def:conv_scale}. In the left plot, points show the mean rank correlation across 1000 repeated bootstrap resampling and lines the corresponding standard deviations. The right plot, points show the mean performance scores with standard deviations from 20 repeated 5-fold CV.
    }
    \label{fig:corrs_hds_scales}
\end{figure*}

\begin{table}[htbp]
  \centering
  \sisetup{
    detect-all           = true,
    separate-uncertainty = true,
  }
  \caption{\textbf{The mean performance of each diversity measure} in terms of $R^2$ scores for predicting the decoding parameter. We also report 95\% percentile intervals of these scores as well as standard deviations.\smallskip}
  \label{tab:means_dec}
  \small
  \let\b\bfseries
  \resizebox{0.9\textwidth}{!}{
  \begin{tabular}{lSSSSSl}
    \toprule
    Model  & {Median Rank} & {Mean $R^2$} & {Standard Deviation} & {Lower 95\% PI} & {Upper 95\% PI} & {Task} \\ 
    \midrule
    \textsc{MagArea} & 1 & 0.62 & 0.05 & 0.50 & 0.70 & Prompt \\ 
    \textsc{VS} & 2& 0.61 & 0.06 & 0.49 & 0.71 & Prompt \\ 
    \textsc{AvgSim} & 3& 0.55 & 0.06 & 0.42 & 0.66 & Prompt \\ 
    \textsc{GMStds} &4 & 0.19 & 0.24 & -0.02 & 0.55 & Prompt \\ 
    \textsc{MagArea} &1 & 0.70 & 0.04 & 0.62 & 0.77 & Resp \\ 
    \textsc{VS}& 2 & 0.69 & 0.04 & 0.60 & 0.76 & Resp \\ 
    \textsc{AvgSim} & 3& 0.64 & 0.07 & 0.50 & 0.74 & Resp \\ 
    \textsc{GMStds} & 4& 0.23 & 0.30 & -0.02 & 0.66 & Resp \\ 
    \textsc{MagArea}& 1 & 0.53 & 0.05 & 0.44 & 0.62 & Story \\ 
    \textsc{VS} &2 & 0.49 & 0.06 & 0.38 & 0.58 & Story \\ 
    \textsc{AvgSim}&3 & 0.41 & 0.06 & 0.29 & 0.51 & Story \\ 
    \textsc{GMStds} & 4& 0.17 & 0.22 & -0.03 & 0.50 & Story \\ 
    \bottomrule
  \end{tabular}}
\end{table}

\begin{table}
  \centering
  \sisetup{
    detect-all           = true,
    table-format         = 1.2(2),
    separate-uncertainty = true,
  }
  \caption{\textbf{The difference between each diversity measure and \textsc{MagArea}} in terms of the difference in $R^2$ scores when predicting the decoding parameter. We also report 95\% percentile intervals of these differences and standard deviations.\smallskip}
  \label{tab:diffs_dec}
  \small
  \let\b\bfseries
  \resizebox{0.9\textwidth}{!}{
  \begin{tabular}{lSSSSl}
    \toprule
    Measure & {Mean Difference in $R^2$ Scores} & {Standard Deviation} & {Lower 95\% PI} & {Upper 95\% PI} & {Dataset} \\ 
    \midrule
    \textsc{VS}      & 0.00 & 0.02 & -0.02 & 0.06 & Prompt \\ 
    \textsc{AvgSim}  & 0.07 & 0.03 & 0.03 & 0.15 & Prompt \\ 
    GMStds  & 0.42 & 0.26 & 0.05 & 0.71 & Prompt \\ 
    \textsc{VS}      & 0.01 & 0.01 & -0.01 & 0.04 & Resp \\ 
    \textsc{AvgSim}  & 0.07 & 0.05 & 0.00 & 0.17 & Resp \\ 
    GMStds  & 0.47 & 0.30 & 0.07 & 0.77 & Resp \\ 
    \textsc{VS}      & 0.04 & 0.02 & 0.01 & 0.09 & Story \\ 
    \textsc{AvgSim}  & 0.12 & 0.03 & 0.06 & 0.18 & Story \\ 
    GMStds  & 0.36 & 0.23 & 0.06 & 0.62 & Story \\ 
    \bottomrule
  \end{tabular}}
\end{table}

\clearpage

\subsection{Characterising Text Embedding Spaces}
\label{apx:Text Experiments2}
For further experiments, we analyse $16384$ embedded documents of from four different HuggingFace datasets as processed by \citet{wayland2024mapping}. In particular, we use the following datasets: 
\begin{compactitem}
    \item \href{https://huggingface.co/datasets/gfissore/arxiv-abstracts-2021}{arXiv}: abstracts of all arXiv articles up to the end of 2021;
    \item \href{https://huggingface.co/datasets/EdinburghNLP/xsum}{bbc}: summaries of BBC news articles;
    \item \href{https://huggingface.co/datasets/cnn_dailymail}{cnn}: summaries of news articles from CNN and DailyMail; and
    \item \href{https://huggingface.co/datasets/big_patent}{patents}: abstracts of U.S. patent applications.
\end{compactitem}
The first $2^{14}$ samples from the corresponding training datasets were embedded using the sentence-transformer library by \citet{reimers-2019-sentence-bert}. 
The following pre-trained models were used:
\begin{compactitem}
    \item \href{https://huggingface.co/sentence-transformers/all-distilroberta-v1}{all-distilroberta-v1}: general-purpose model, embedding dimension 768;
    \item \href{https://huggingface.co/sentence-transformers/all-MiniLM-L6-v2}{all-MiniLM-L6-v2}: general-purpose model, embedding dimension 384;
    \item \href{https://huggingface.co/sentence-transformers/all-mpnet-base-v2}{all-mpnet-base-v2}: general-purpose model, embedding dimension 768; and
    \item \href{https://huggingface.co/sentence-transformers/multi-qa-distilbert-cos-v1}{multi-qa-distilbert-cos-v1}: QA-specialized model, embedding dimension 768, maximum sequence length 512 word pieces.
\end{compactitem}
Further, embeddings from two large language models, \href{https://platform.openai.com/docs/guides/embeddings}{ada-002} (embedding dimension: 1 536) and \href{https://docs.mistral.ai/capabilities/embeddings/}{mistral-embed} (embedding
dimension: 1 024) were obtained through queries via the corresponding APIs of their providers (OpenAI and MistralAI, respectively).

We then use PCA to reduce each embedding space to $384$ dimensions to obtain a comparable dimensionality. This is done to mitigate some of the influence the difference in dimensionalities can have on the results of the analysis. 
Further, we sample $300$ points at random from each space, repeating this procedure $200$ times, which yields a dataset of $1000$ embeddings generated by different models for each dataset. 
We further compute \textsc{MagArea} across $20$ scales up to the median convergence point across all embeddings per dataset again using cosine distances and setting $\epsilon=0.05|X|$. Similarly, we take the pairwise magnitude differences \textsc{MagDiff} between all subsample's magnitude functions. This is compare it to alternative diversity measures, \textsc{VS}, \textsc{GMStds} and \textsc{AvgSim} computed as defined in Appendix \ref{app:int_div}. 
For each dataset we then use a simple $5$-NN classifier to classify the embedding model based on the one number summaries such as \textsc{MagArea} and report the mean and standard deviation of the accuracy across $5$-fold cross-validation with $20$ repetitions. We compare this to the more expressive descriptor \textsc{MagDiff}, which we similarly use as a pre-computed distance matrix for $5$-NN classification. \cref{fig:mag_diff} illustrates the pairwise magnitude differences between all subsamples for each dataset. Table \ref{tab:LLM} then shows the results of this classification task. 
For further sensitivity analysis, \cref{tab:knn} shows analogous results that were computed across varying choices of $k$ neighbours for $k$-NN classification. Results demonstrate that classification accuracy remains consistent across varying parameter choices. 

\begin{table}[tbh]
\centering%
\sisetup{
detect-all = true,
table-format = 1.2(2),
separate-uncertainty = true,
}
\caption{%
\textbf{Classification performance remains consistent across varying choices of $k$ for $k$-NN classification.} 
We show the accuracy~($\uparrow$) of different diversity scores for distinguishing between six embedding models of the \texttt{bbc} dataset, using PCA pre-processing and a $k$-NN classifier across varying values of $k$. These results are analogous to \cref{tab:LLM} in the main text.%
}
\smallskip
\label{tab:knn}
\setlength{\tabcolsep}{6pt}
\let\b\bfseries
\begin{tabular}{lSSSS}
\toprule
k & \textsc{MagDiff} & \textsc{AvgSim}& {VS} & \textsc{GMStds}  \\
\midrule
1 & \b0.93\pm0.01 & 0.84\pm0.02 & 0.72\pm0.03 & 0.66\pm0.03 \\
3 & \b0.94\pm0.01 & 0.84\pm0.02 & 0.72\pm0.02 & 0.66\pm0.03 \\
5 & \b0.95\pm0.01 & 0.84\pm0.02 & 0.72\pm0.03 & 0.66\pm0.03 \\
7 & \b0.95\pm0.01 & 0.84\pm0.02 & 0.73\pm0.02 & 0.66\pm0.03 \\
9 & \b0.95\pm0.01 & 0.84\pm0.02 & 0.74\pm0.02 & 0.66\pm0.03 \\
11 & \b0.95\pm0.01 & 0.84\pm0.02 & 0.74\pm0.02 & 0.66\pm0.03\\
\bottomrule
\end{tabular}
\end{table}


\begin{figure}[th]
\centering
\includegraphics[clip, trim=0cm 0cm 7cm 0cm, width=1\textwidth]{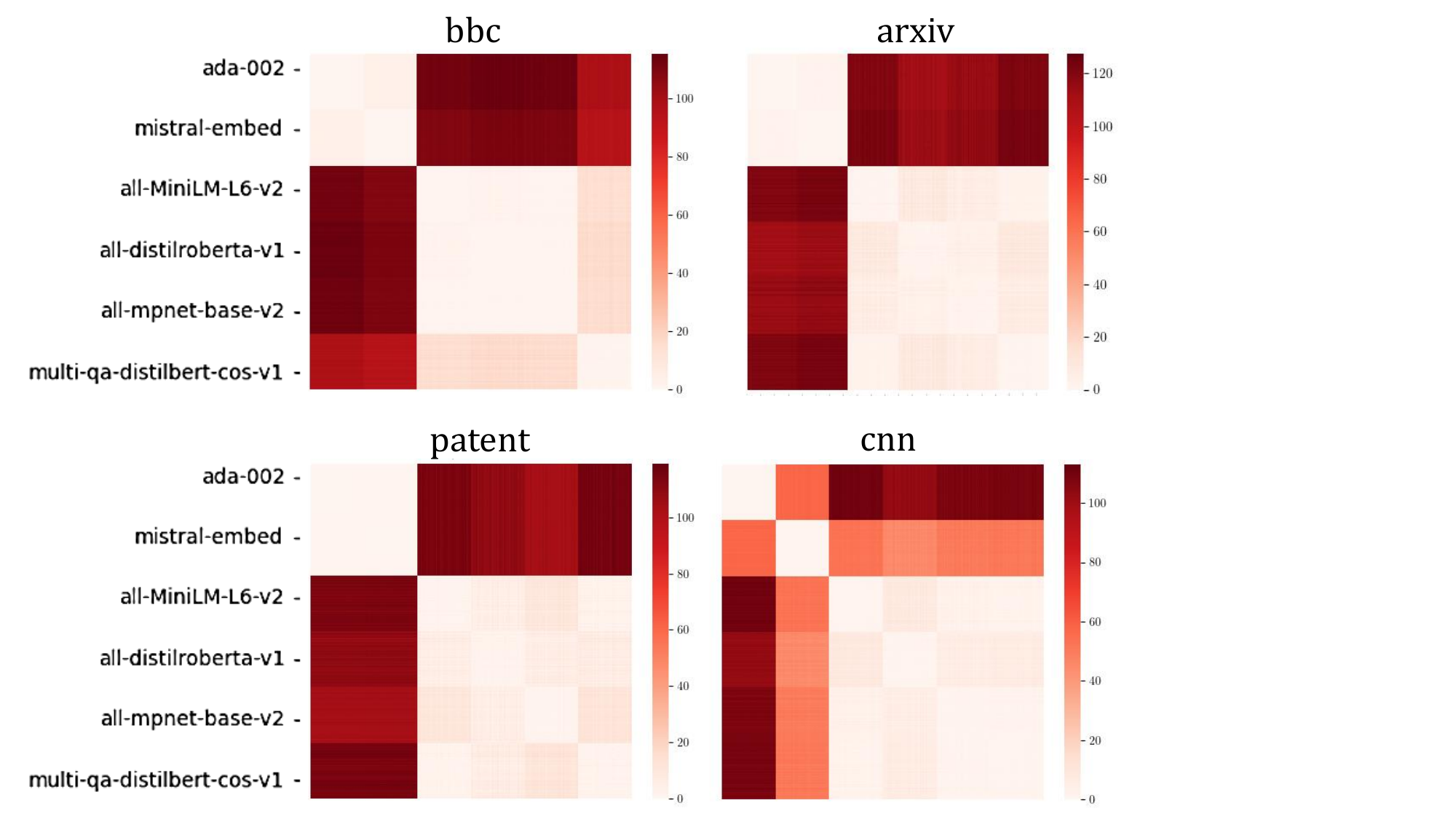}
    \captionof{figure}{\textbf{Pairwise \textsc{MagDiff} for subsamples from six different embedding models and four document datasets.} For each dataset, magnitude is computed until the median convergence scale. The plot shows the absolute \textsc{MagDiff} normalised by this cut-off scale to show the average absolute difference in magnitude across the evaluation interval. We see a clear block-wise structure between the pairwise magnitude differences of each subsample highlighting that the magnitude functions of these samples are clearly distinct between different embedding models.}%
    \label{fig:mag_diff}%
\end{figure}

\newpage

\subsection{Image Experiments}

We adapt code by \citet{friedman2022vendi} to download and process the \href{https://www.cs.toronto.edu/~kriz/cifar.html}{CIFAR\-10} test dataset using \href{https://pytorch.org/vision/0.15/_modules/torchvision/datasets/cifar.html#CIFAR10}{torchvision}. Specifically, we use utility functions from \url{https://github.com/vertaix/Vendi-Score} available under an MIT licence. 
Further, these data are embedded using \href{https://pytorch.org/hub/pytorch_vision_inception_v3/}{Inception\_v3}~\citep{szegedy2016rethinking}. 
We then simulate sequential and simultaneous mode dropping for this embeddings.
We compute magnitude for $10$ scales sampled uniformly from $t_0=0$ until the $95\%$ convergence scale of the reference embedding where each class is still evenly represented. 
The main results in \cref{fig:image_dropping} then report the relative changes in \textsc{MagDiff} across the evaluation interval. 
In \cref{fig:image_dropping2} we further report the precision and density for the same simulated mode dropping scenarios as explained in the image embedding experiments. We also show the relative decrease in magnitude computed at three fixed scales at 33\%, 50\% and 100\% of the convergence scale of the reference space chosen to explore a variation of resolutions. Finally, \cref{fig:image_dropping3} summarises analogous results for different variations of the Vendi Score, which has also been proposed to measure mode dropping. We observe that while \textsc{VS} shows steady trends when computed using a linear kernel or the Laplacian kernel on normalised embeddings. However, \textsc{VS} does not perform well at when simply using the Laplacian kernel without preprocessing, which demonstrates that finding the right scale of similarity is also important for \textsc{VS} and it is possible to misspecify the degree of similarity between observations leading to unreliable performance.

\begin{figure}[tbh]
    \centering
      \includegraphics[scale=0.6]{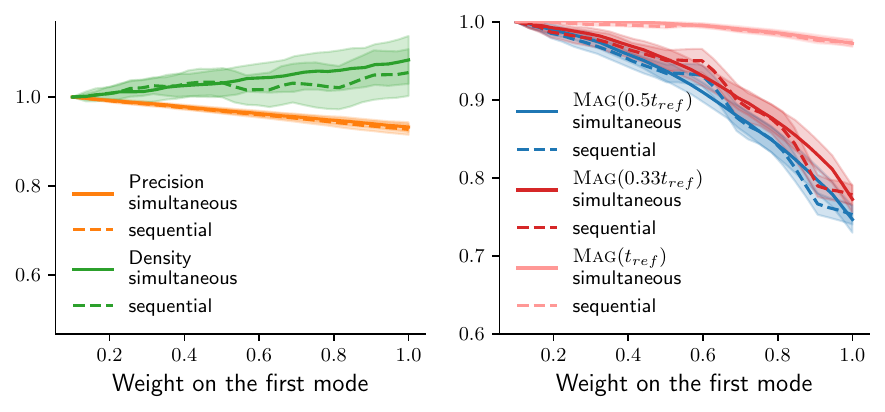}
    \caption{%
      Line plots of the proportion of points on the first mode
      against recall, coverage, relative difference in magnitude at
      $t=0.5$ and
    magnitude function difference relative to the reference 
    across simultaneous sampling vs.\ sequential sampling. Lines show the mean values of each normalised metric across $20$ resamples, shaded areas the standard deviations.}
    \label{fig:image_dropping2}
\end{figure}

\begin{figure}[tbh]
    \centering
      \includegraphics[scale=0.6]{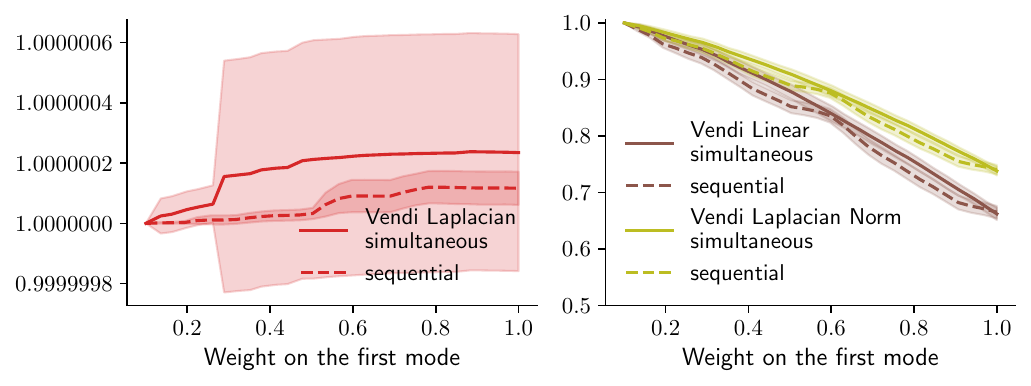}
    \caption{%
      Line plots of the proportion of points on the first mode
      against the relative change in different variations of the Vendi Score, \textsc{VS}, across two different mode dropping scenarios. \textsc{VS} is computed using a Laplacian kernel without (left) and with normalisation (right) as well as a linear kernel (right).  Lines show the mean values of each normalised metric across $20$ resamples, shaded areas the standard deviations.}
    \label{fig:image_dropping3}
\end{figure}
\newpage
\smallskip
\newpage
\subsection{Graph Embedding Experiments}\label{apx:Graph Embedding Experiments}

To assess our new diversity measure's utility for graph generative model evaluation we reproduce the benchmark by \citet{thompson2022evaluation} and include our proposed reference-based diversity measure, \textsc{MagDiff}, to the diversity evaluation benchmark. 
The code for reproducing this graph evaluation benchmark is available at \url{https://github.com/uoguelph-mlrg/GGM-metrics} under an MIT licence.
Specifically, we conduct this experiment on five graph datasets:
\begin{compactitem}
    \item \textbf{Lobster:} A dataset consisting of 100 stochastic graphs generated so that each node is at most 2 hops removed from a backbone path and the number of vertices varies between 10 and 100.\citep{dai2020scalable, thompson2022evaluation}
    \item \textbf{Grid:} A dataset of 100 two-dimensional graphs consisting of 100 to 400 vertices \citep{dai2020scalable, liao2019efficient, you2018graphrnn, thompson2022evaluation}.
    \item \textbf{Proteins:} A dataset of 918 protein networks. Each vertex is an amino acid and edges connect amino acids that are less than 6 Angstroms away from each other \citep{dobson2003distinguishing}. Only graphs with between 100 to 500 vertices are selected \citep{dai2020scalable, liao2019efficient, you2018graphrnn, thompson2022evaluation}.
    \item  \textbf{Ego:} A dataset of 757 graphs that are 3-hop networks with 50 to 399 vertices \citep{you2018graphrnn, thompson2022evaluation}. These graphs were extracted from the CiteSeer citation network where nodes represent documents \citep{sen2008collective}.
    \item \textbf{Community:} A dataset with 500 two-community graphs with between 60 to 160 vertices, where each community has been generated using the Erdős-Rényi model \citep{erdHos1960evolution} setting $n$ equal to half the number of vertices and $p=0.3$. Additional edges amounting to 5\% of the number of vertices have been added to each graph with uniform probability \citep{you2018graphrnn, thompson2022evaluation}.
\end{compactitem}

Further, this experiment uses a Graph Isomorphism Network~\citep[GIN]{xu2018powerful} architectures as an embedding model and following the procedure by \citep{thompson2022evaluation} we vary the following hyperparameters for these models: We vary the number of layers between $[2, 3, \dots, 7]$ and vary the hidden dimensions in the interval $[5, 30]$ with an increment of 5 resulting in a total of 36 architectures. 
We repeat the experiments for 5 different random seeds. 
The experimental setup used to evaluate the evaluation metrics for both mode collapse and mode dropping then is as follows: First $P_r \approx P_g$, so that $P_r$, the real distribution, is identical to $P_g$, the generated distribution. Then the perturbation parameter $p \in [0,1]$ is introduced, which transforms the generated graph datasets step-wise and increases the dissimilarity (and hence diversity) between the reference and generated datasets. Therefore, we use it as a proxy to measure the difference in diversity between $P_r$ and $P_g$. 
To evaluate this decrease in diversity, we compute magnitude for the corresponding graph embeddings across 
$40$ evenly-spaced scales until the convergence scale of the reference choosing $\epsilon=0.05|X|$. For precision, recall, density and coverage we take the parameter $k = 5$, as proposed previously by \cite{naeem2020reliable}, to ensure a fair comparison. 
We then normalise all metrics such that their value is $0$ when $P_r = P_g$~(which is exactly when the degree of permutation is $0$). For this, we follow the normalisation strategy by \citep{thompson2022evaluation} and normalise \textsc{MagDiff} by the cardinality of each embedding. Next, we vary the parameter $p$ and compute each evaluation metric. We report the Spearman correlation coefficient between each metric and the degree of the perturbation $p$. Hence, the value of a metric which captures the decrease in diversity accurately should increase with the increase of $p$, and rank correlation of $1$ corresponds to an ideal metric. 
Results for the whole experiment across all datasets are presented in \cref{fig:graph_mode_collapse_mode_dropping_all}. The violin and boxplots reported in this figure then summarise the distribution of each evaluation measures rank correlation to the degree of perturbation across the 5 random seeds and the aforementioned hyperparameter choices influencing the embedding models. Finally, \cref{fig:graph_scales_all} investigates the influence the choice of convergence scale has on the results of these experiments and we observe that low values of $\epsilon$ lead to better agreement with the true degree of perturbation. Further, the trends in the value of \textsc{MagDiff} are stable across choices of $\epsilon \leq 0.05|X|$ as chosen throughout this study. 

\begin{figure*}[p]
    \centering
    \subcaptionbox{Results for all datasets}{%
        \includegraphics[width=1\textwidth]{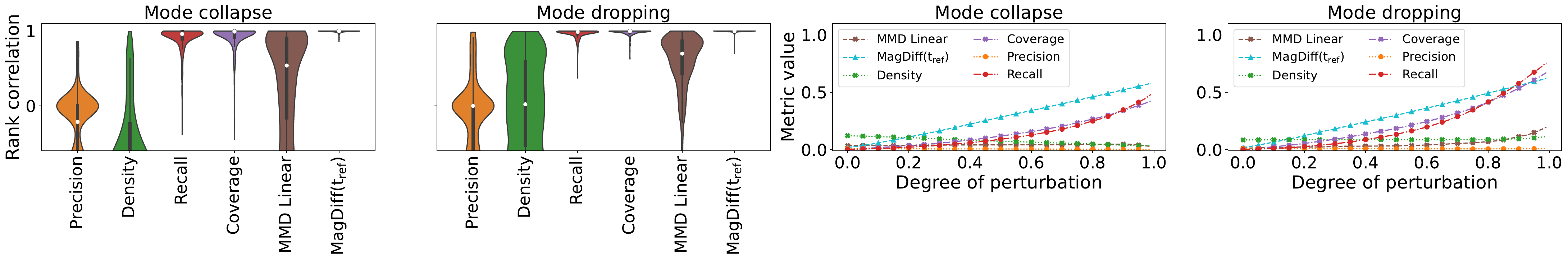}
    }
    \subcaptionbox{Proteins dataset}{%
        \includegraphics[width=1\textwidth]{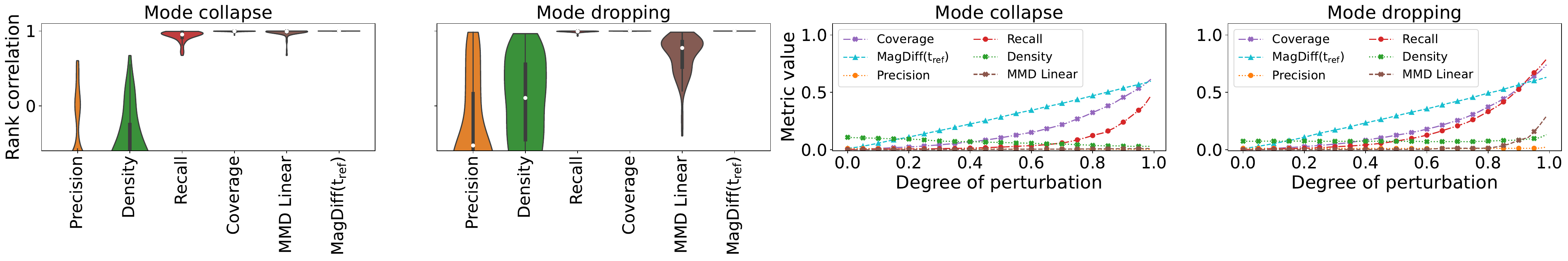}
    }
    \subcaptionbox{Grid dataset}{%
        \includegraphics[width=1\textwidth]{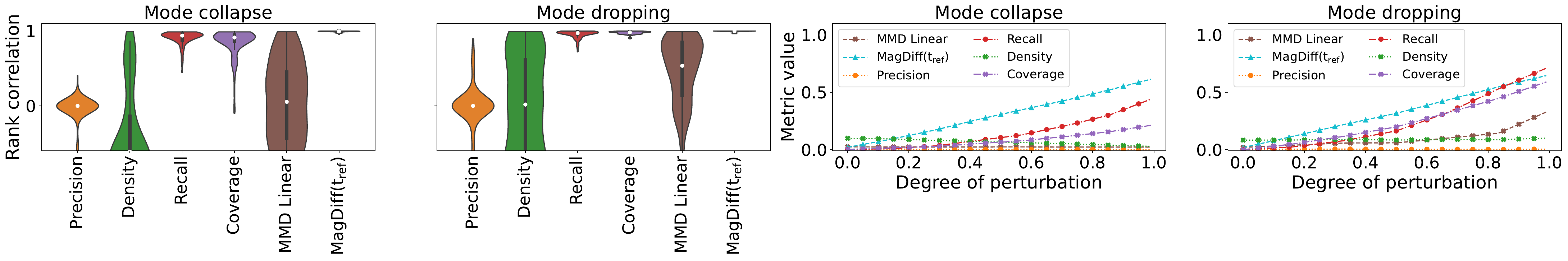}
    }
    \subcaptionbox{Community dataset}{%
        \includegraphics[width=1\textwidth]{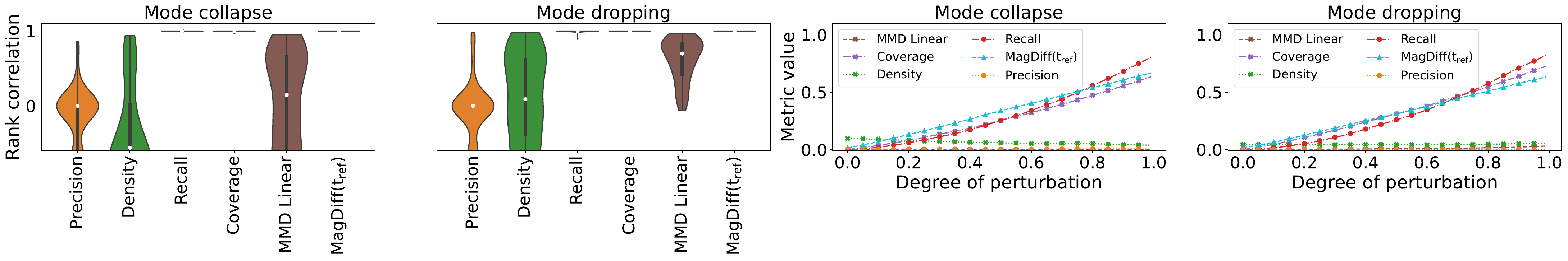}%
    }
    \subcaptionbox{Ego dataset}{%
        \includegraphics[width=1\textwidth]{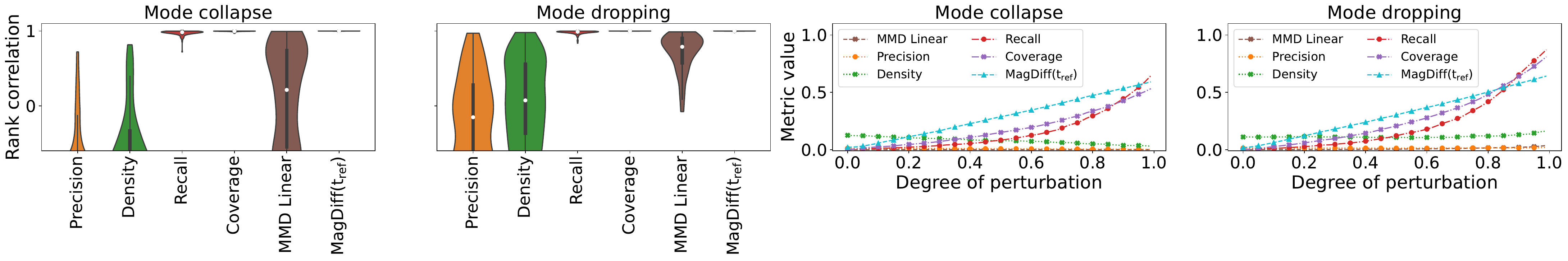}%
    }
    \caption{
        \textbf{Results for the mode collapse and mode dropping experiments}. The patterns for each of the datasets is similar to the results on the Lobster graphs, which we show in the paper.
    }
    \label{fig:graph_mode_collapse_mode_dropping_all}
\end{figure*}

\begin{figure*}[p]
    \centering
    \subcaptionbox{Results for all datasets}{%
        \includegraphics[width=1\textwidth]{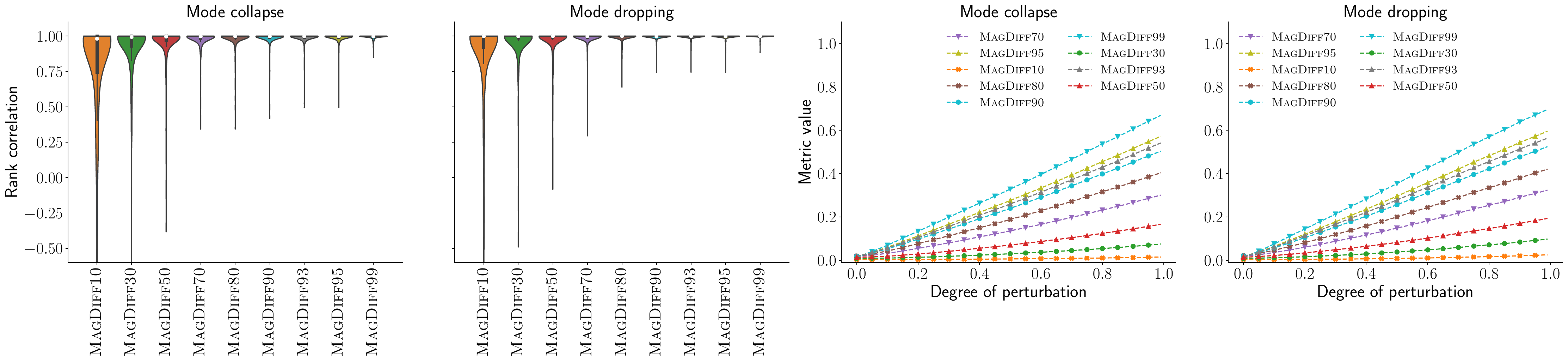}
    }
    \subcaptionbox{Proteins dataset}{%
        \includegraphics[width=1\textwidth]{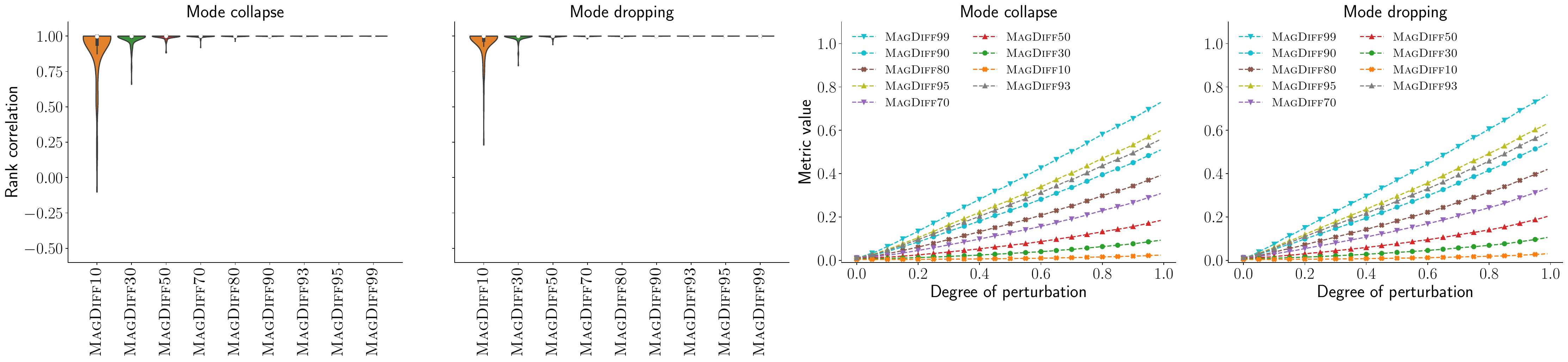}
    }
    \subcaptionbox{Grid dataset}{%
        \includegraphics[width=1\textwidth]{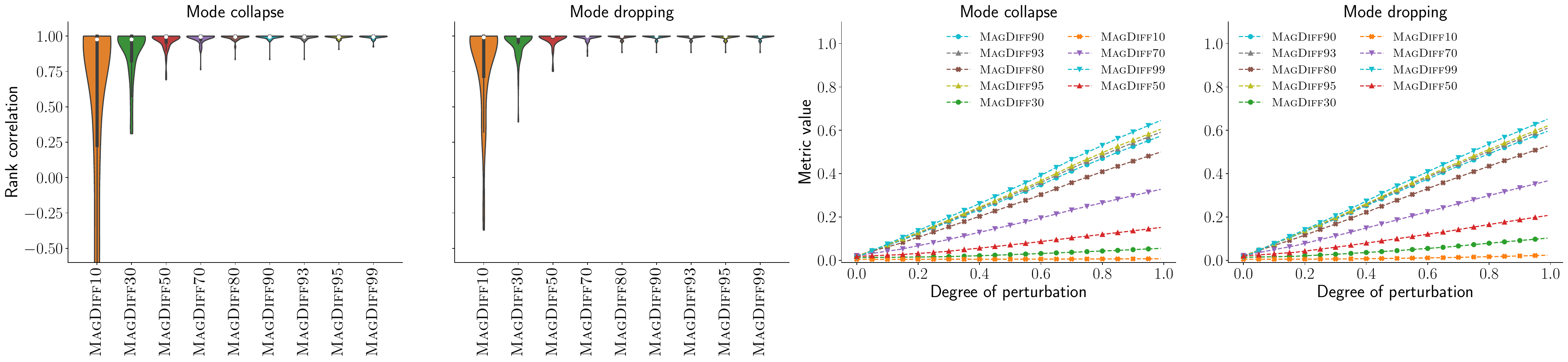}
    }
    \subcaptionbox{Community dataset}{%
        \includegraphics[width=1\textwidth]{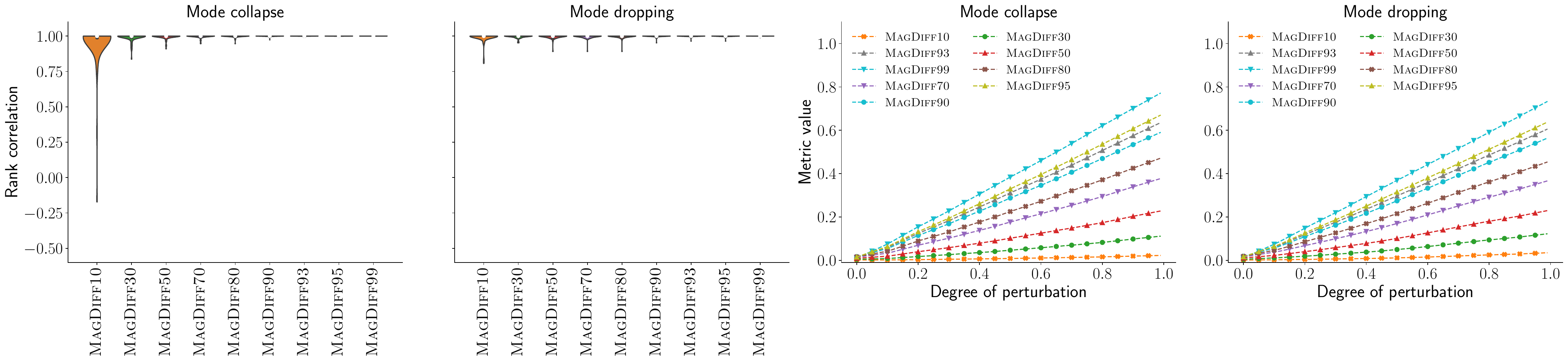}%
    }
    \subcaptionbox{Ego dataset}{%
        \includegraphics[width=1\textwidth]{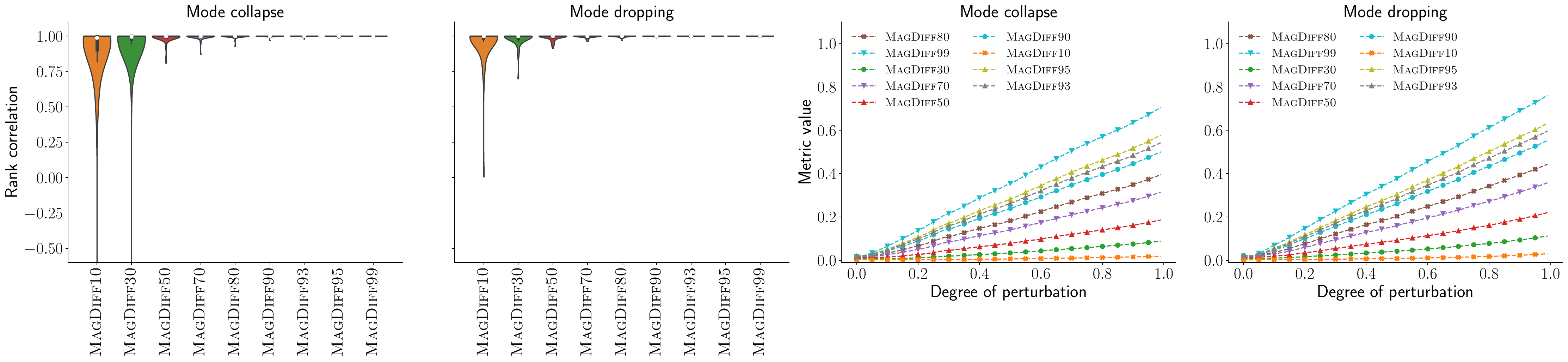}%
    }
    \caption{
        \textbf{Rank correlation between \textsc{MagDiff} and the degree of perturbation for different choices of convergence scale.} Here we vary the choice of $\epsilon$ influencing the reference scale that is chosen to compute \textsc{MagDiff$(\nicefrac{|X|-\epsilon}{|X|})$} for the mode collapse and mode dropping experiments. We clearly observe that low values of $\epsilon$ as given by \textsc{MagDiff95} or \textsc{MagDiff99} lead to higher rank correlation and better agreement with the true decrease in diversity.
    }
    \label{fig:graph_scales_all}
\end{figure*}

\clearpage

\section*{NeurIPS Paper Checklist}

\begin{enumerate}

\item {\bf Claims}
    \item[] Question: Do the main claims made in the abstract and introduction accurately reflect the paper's contributions and scope?
    \item[] Answer: \answerYes{} 
    \item[] Justification: Yes, the main claims made throughout the paper are supported by both theoretical discussion (in \cref{sec:desid} and \cref{app:axioms}) and experimental results (\cref{sec:experiments}) as motivated throughout the abstract and introduction. 
    \item[] Guidelines:
    \begin{itemize}
        \item The answer NA means that the abstract and introduction do not include the claims made in the paper.
        \item The abstract and/or introduction should clearly state the claims made, including the contributions made in the paper and important assumptions and limitations. A No or NA answer to this question will not be perceived well by the reviewers. 
        \item The claims made should match theoretical and experimental results, and reflect how much the results can be expected to generalize to other settings. 
        \item It is fine to include aspirational goals as motivation as long as it is clear that these goals are not attained by the paper. 
    \end{itemize}

\item {\bf Limitations}
    \item[] Question: Does the paper discuss the limitations of the work performed by the authors?
    \item[] Answer: \answerYes{} 
    \item[] Justification: Yes, we discuss the main limitations of our approach discussing both computational performance as well as our assumptions on the utility of using embeddings for diversity evaluation in \cref{sec:limitations}. Properties and theoretical assumptions of our proposed method are further investigated in \cref{app:axioms}. 
    \item[] Guidelines:
    \begin{itemize}
        \item The answer NA means that the paper has no limitation while the answer No means that the paper has limitations, but those are not discussed in the paper. 
        \item The authors are encouraged to create a separate "Limitations" section in their paper.
        \item The paper should point out any strong assumptions and how robust the results are to violations of these assumptions (e.g., independence assumptions, noiseless settings, model well-specification, asymptotic approximations only holding locally). The authors should reflect on how these assumptions might be violated in practice and what the implications would be.
        \item The authors should reflect on the scope of the claims made, e.g., if the approach was only tested on a few datasets or with a few runs. In general, empirical results often depend on implicit assumptions, which should be articulated.
        \item The authors should reflect on the factors that influence the performance of the approach. For example, a facial recognition algorithm may perform poorly when image resolution is low or images are taken in low lighting. Or a speech-to-text system might not be used reliably to provide closed captions for online lectures because it fails to handle technical jargon.
        \item The authors should discuss the computational efficiency of the proposed algorithms and how they scale with dataset size.
        \item If applicable, the authors should discuss possible limitations of their approach to address problems of privacy and fairness.
        \item While the authors might fear that complete honesty about limitations might be used by reviewers as grounds for rejection, a worse outcome might be that reviewers discover limitations that aren't acknowledged in the paper. The authors should use their best judgment and recognize that individual actions in favor of transparency play an important role in developing norms that preserve the integrity of the community. Reviewers will be specifically instructed to not penalize honesty concerning limitations.
    \end{itemize}

\item {\bf Theory Assumptions and Proofs}
    \item[] Question: For each theoretical result, does the paper provide the full set of assumptions and a complete (and correct) proof?
    \item[] Answer: \answerYes{} 
    \item[] Justification: Yes, all theoretical contributions have been carefully checked and referenced as for example reported in \cref{sec:Methods,app:extended_theory}.
    \item[] Guidelines:
    \begin{itemize}
        \item The answer NA means that the paper does not include theoretical results. 
        \item All the theorems, formulas, and proofs in the paper should be numbered and cross-referenced.
        \item All assumptions should be clearly stated or referenced in the statement of any theorems.
        \item The proofs can either appear in the main paper or the supplemental material, but if they appear in the supplemental material, the authors are encouraged to provide a short proof sketch to provide intuition. 
        \item Inversely, any informal proof provided in the core of the paper should be complemented by formal proofs provided in appendix or supplemental material.
        \item Theorems and Lemmas that the proof relies upon should be properly referenced. 
    \end{itemize}

    \item {\bf Experimental Result Reproducibility}
    \item[] Question: Does the paper fully disclose all the information needed to reproduce the main experimental results of the paper to the extent that it affects the main claims and/or conclusions of the paper (regardless of whether the code and data are provided or not)?
    \item[] Answer: \answerYes{} 
    \item[] Justification: Yes, all of our contributions are reproducible. Our submission includes the necessary code to compute the algorithms and novel magnitude-based diversity measures introduced and detailed in this work. All our experimental results rely on publicly available datasets and pre-trained embedding models and can be reproduced by following the instructions detailed in our submission e.g. in \cref{app:extended_experiments}. 
    \item[] Guidelines:
    \begin{itemize}
        \item The answer NA means that the paper does not include experiments.
        \item If the paper includes experiments, a No answer to this question will not be perceived well by the reviewers: Making the paper reproducible is important, regardless of whether the code and data are provided or not.
        \item If the contribution is a dataset and/or model, the authors should describe the steps taken to make their results reproducible or verifiable. 
        \item Depending on the contribution, reproducibility can be accomplished in various ways. For example, if the contribution is a novel architecture, describing the architecture fully might suffice, or if the contribution is a specific model and empirical evaluation, it may be necessary to either make it possible for others to replicate the model with the same dataset, or provide access to the model. In general. releasing code and data is often one good way to accomplish this, but reproducibility can also be provided via detailed instructions for how to replicate the results, access to a hosted model (e.g., in the case of a large language model), releasing of a model checkpoint, or other means that are appropriate to the research performed.
        \item While NeurIPS does not require releasing code, the conference does require all submissions to provide some reasonable avenue for reproducibility, which may depend on the nature of the contribution. For example
        \begin{enumerate}
            \item If the contribution is primarily a new algorithm, the paper should make it clear how to reproduce that algorithm.
            \item If the contribution is primarily a new model architecture, the paper should describe the architecture clearly and fully.
            \item If the contribution is a new model (e.g., a large language model), then there should either be a way to access this model for reproducing the results or a way to reproduce the model (e.g., with an open-source dataset or instructions for how to construct the dataset).
            \item We recognize that reproducibility may be tricky in some cases, in which case authors are welcome to describe the particular way they provide for reproducibility. In the case of closed-source models, it may be that access to the model is limited in some way (e.g., to registered users), but it should be possible for other researchers to have some path to reproducing or verifying the results.
        \end{enumerate}
    \end{itemize}

\item {\bf Open access to data and code}
    \item[] Question: Does the paper provide open access to the data and code, with sufficient instructions to faithfully reproduce the main experimental results, as described in supplemental material?
    \item[] Answer: 
    \answerYes{}
    \item[] Justification: Our submission includes the necessary code to compute the algorithms and novel magnitude-based diversity measures introduced and detailed in this work. A reproducibility package is available at \url{https://github.com/aidos-lab/magnitude-diversity} and code for computing our algorithms can be found at \url{https://github.com/aidos-lab/magnipy}. We further detail how to access the data for each experiment. Merely the experiment on graph generative model evaluation is omitted, but it can easily be reproduced by including our measure into an existing benchmark (see \cref{app:extended_experiments}). 
    \item[] Guidelines:
    \begin{itemize}
        \item The answer NA means that paper does not include experiments requiring code.
        \item Please see the NeurIPS code and data submission guidelines (\url{https://nips.cc/public/guides/CodeSubmissionPolicy}) for more details.
        \item While we encourage the release of code and data, we understand that this might not be possible, so “No” is an acceptable answer. Papers cannot be rejected simply for not including code, unless this is central to the contribution (e.g., for a new open-source benchmark).
        \item The instructions should contain the exact command and environment needed to run to reproduce the results. See the NeurIPS code and data submission guidelines (\url{https://nips.cc/public/guides/CodeSubmissionPolicy}) for more details.
        \item The authors should provide instructions on data access and preparation, including how to access the raw data, preprocessed data, intermediate data, and generated data, etc.
        \item The authors should provide scripts to reproduce all experimental results for the new proposed method and baselines. If only a subset of experiments are reproducible, they should state which ones are omitted from the script and why.
        \item At submission time, to preserve anonymity, the authors should release anonymized versions (if applicable).
        \item Providing as much information as possible in supplemental material (appended to the paper) is recommended, but including URLs to data and code is permitted.
    \end{itemize}

\item {\bf Experimental Setting/Details}
    \item[] Question: Does the paper specify all the training and test details (e.g., data splits, hyperparameters, how they were chosen, type of optimizer, etc.) necessary to understand the results?
    \item[] Answer: \answerYes{} 
    \item[] Justification: Yes, all necessary experimental settings are reported in the main text and expanded on in the supplementary materials (\cref{app:extended_experiments}) and supplementary code, which we also published at \url{https://github.com/aidos-lab/magnitude-diversity}. 
    \item[] Guidelines:
    \begin{itemize}
        \item The answer NA means that the paper does not include experiments.
        \item The experimental setting should be presented in the core of the paper to a level of detail that is necessary to appreciate the results and make sense of them.
        \item The full details can be provided either with the code, in appendix, or as supplemental material.
    \end{itemize}

\item {\bf Experiment Statistical Significance}
    \item[] Question: Does the paper report error bars suitably and correctly defined or other appropriate information about the statistical significance of the experiments?
    \item[] Answer: \answerYes{} 
    \item[] Justification: Yes, all our main claims are supported by uncertainty estimates and it is clearly stated how these have been computed. 
    \item[] Guidelines:
    \begin{itemize}
        \item The answer NA means that the paper does not include experiments.
        \item The authors should answer "Yes" if the results are accompanied by error bars, confidence intervals, or statistical significance tests, at least for the experiments that support the main claims of the paper.
        \item The factors of variability that the error bars are capturing should be clearly stated (for example, train/test split, initialization, random drawing of some parameter, or overall run with given experimental conditions).
        \item The method for calculating the error bars should be explained (closed form formula, call to a library function, bootstrap, etc.)
        \item The assumptions made should be given (e.g., Normally distributed errors).
        \item It should be clear whether the error bar is the standard deviation or the standard error of the mean.
        \item It is OK to report 1-sigma error bars, but one should state it. The authors should preferably report a 2-sigma error bar than state that they have a 96\% CI, if the hypothesis of Normality of errors is not verified.
        \item For asymmetric distributions, the authors should be careful not to show in tables or figures symmetric error bars that would yield results that are out of range (e.g. negative error rates).
        \item If error bars are reported in tables or plots, The authors should explain in the text how they were calculated and reference the corresponding figures or tables in the text.
    \end{itemize}

\item {\bf Experiments Compute Resources}
    \item[] Question: For each experiment, does the paper provide sufficient information on the computer resources (type of compute workers, memory, time of execution) needed to reproduce the experiments?
    \item[] Answer: \answerYes{} 
    \item[] Justification: All our experiments have been run locally on a personal laptop highlighting the fact that they are readily reproducible. We further benchmark the computational times of our proposed method highlighting it can be computed in a matter of seconds (see \cref{app:computation}).
    \item[] Guidelines:
    \begin{itemize}
        \item The answer NA means that the paper does not include experiments.
        \item The paper should indicate the type of compute workers CPU or GPU, internal cluster, or cloud provider, including relevant memory and storage.
        \item The paper should provide the amount of compute required for each of the individual experimental runs as well as estimate the total compute. 
        \item The paper should disclose whether the full research project required more compute than the experiments reported in the paper (e.g., preliminary or failed experiments that didn't make it into the paper). 
    \end{itemize}
    
\item {\bf Code Of Ethics}
    \item[] Question: Does the research conducted in the paper conform, in every respect, with the NeurIPS Code of Ethics \url{https://neurips.cc/public/EthicsGuidelines}?
    \item[] Answer: \answerYes{} 
    \item[] Justification: Yes, our work complies with the ethics guidelines.
    \item[] Guidelines:
    \begin{itemize}
        \item The answer NA means that the authors have not reviewed the NeurIPS Code of Ethics.
        \item If the authors answer No, they should explain the special circumstances that require a deviation from the Code of Ethics.
        \item The authors should make sure to preserve anonymity (e.g., if there is a special consideration due to laws or regulations in their jurisdiction).
    \end{itemize}

\item {\bf Broader Impacts}
    \item[] Question: Does the paper discuss both potential positive societal impacts and negative societal impacts of the work performed?
    \item[] Answer: \answerYes{} 
    \item[] Justification: Our work has little potential for negative impact and includes impact statement.
    \item[] Guidelines:
    \begin{itemize}
        \item The answer NA means that there is no societal impact of the work performed.
        \item If the authors answer NA or No, they should explain why their work has no societal impact or why the paper does not address societal impact.
        \item Examples of negative societal impacts include potential malicious or unintended uses (e.g., disinformation, generating fake profiles, surveillance), fairness considerations (e.g., deployment of technologies that could make decisions that unfairly impact specific groups), privacy considerations, and security considerations.
        \item The conference expects that many papers will be foundational research and not tied to particular applications, let alone deployments. However, if there is a direct path to any negative applications, the authors should point it out. For example, it is legitimate to point out that an improvement in the quality of generative models could be used to generate deepfakes for disinformation. On the other hand, it is not needed to point out that a generic algorithm for optimizing neural networks could enable people to train models that generate Deepfakes faster.
        \item The authors should consider possible harms that could arise when the technology is being used as intended and functioning correctly, harms that could arise when the technology is being used as intended but gives incorrect results, and harms following from (intentional or unintentional) misuse of the technology.
        \item If there are negative societal impacts, the authors could also discuss possible mitigation strategies (e.g., gated release of models, providing defenses in addition to attacks, mechanisms for monitoring misuse, mechanisms to monitor how a system learns from feedback over time, improving the efficiency and accessibility of ML).
    \end{itemize}
    
\item {\bf Safeguards}
    \item[] Question: Does the paper describe safeguards that have been put in place for responsible release of data or models that have a high risk for misuse (e.g., pretrained language models, image generators, or scraped datasets)?
    \item[] Answer: \answerNA{} 
    \item[] Justification: Our contributions pose no such safety risks.
    \item[] Guidelines:
    \begin{itemize}
        \item The answer NA means that the paper poses no such risks.
        \item Released models that have a high risk for misuse or dual-use should be released with necessary safeguards to allow for controlled use of the model, for example by requiring that users adhere to usage guidelines or restrictions to access the model or implementing safety filters. 
        \item Datasets that have been scraped from the Internet could pose safety risks. The authors should describe how they avoided releasing unsafe images.
        \item We recognize that providing effective safeguards is challenging, and many papers do not require this, but we encourage authors to take this into account and make a best faith effort.
    \end{itemize}

\item {\bf Licenses for existing assets}
    \item[] Question: Are the creators or original owners of assets (e.g., code, data, models), used in the paper, properly credited and are the license and terms of use explicitly mentioned and properly respected?
    \item[] Answer: \answerYes{} 
    \item[] Justification: We clearly cite each existing assets our results build upon. We publish our own code and implemented methods in our Python package \textsc{magnipy} under a BSD 3-Clause licence and further respect all licences for existing assets such as datasets or benchmarks used for our experiments. The code is available at \url{https://github.com/aidos-lab/magnipy}.
    \item[] Guidelines:
    \begin{itemize}
        \item The answer NA means that the paper does not use existing assets.
        \item The authors should cite the original paper that produced the code package or dataset.
        \item The authors should state which version of the asset is used and, if possible, include a URL.
        \item The name of the license (e.g., CC-BY 4.0) should be included for each asset.
        \item For scraped data from a particular source (e.g., website), the copyright and terms of service of that source should be provided.
        \item If assets are released, the license, copyright information, and terms of use in the package should be provided. For popular datasets, \url{paperswithcode.com/datasets} has curated licenses for some datasets. Their licensing guide can help determine the license of a dataset.
        \item For existing datasets that are re-packaged, both the original license and the license of the derived asset (if it has changed) should be provided.
        \item If this information is not available online, the authors are encouraged to reach out to the asset's creators.
    \end{itemize}

\item {\bf New Assets}
    \item[] Question: Are new assets introduced in the paper well documented and is the documentation provided alongside the assets?
    \item[] Answer: \answerYes{} 
    \item[] Justification: We publish our proposed algorithms as a reproducible Python package, named \textsc{magnipy}, under a BSD 3-Clause licence including detailed documentation. The code is available at \url{https://github.com/aidos-lab/magnipy}.
    \item[] Guidelines:
    \begin{itemize}
        \item The answer NA means that the paper does not release new assets.
        \item Researchers should communicate the details of the dataset/code/model as part of their submissions via structured templates. This includes details about training, license, limitations, etc. 
        \item The paper should discuss whether and how consent was obtained from people whose asset is used.
        \item At submission time, remember to anonymize your assets (if applicable). You can either create an anonymized URL or include an anonymized zip file.
    \end{itemize}

\item {\bf Crowdsourcing and Research with Human Subjects}
    \item[] Question: For crowdsourcing experiments and research with human subjects, does the paper include the full text of instructions given to participants and screenshots, if applicable, as well as details about compensation (if any)? 
    \item[] Answer: \answerNA{} 
    \item[] Justification: Neither crowd-sourcing nor research with human subjects has been employed.
    \item[] Guidelines:
    \begin{itemize}
        \item The answer NA means that the paper does not involve crowdsourcing nor research with human subjects.
        \item Including this information in the supplemental material is fine, but if the main contribution of the paper involves human subjects, then as much detail as possible should be included in the main paper. 
        \item According to the NeurIPS Code of Ethics, workers involved in data collection, curation, or other labor should be paid at least the minimum wage in the country of the data collector. 
    \end{itemize}

\item {\bf Institutional Review Board (IRB) Approvals or Equivalent for Research with Human Subjects}
    \item[] Question: Does the paper describe potential risks incurred by study participants, whether such risks were disclosed to the subjects, and whether Institutional Review Board (IRB) approvals (or an equivalent approval/review based on the requirements of your country or institution) were obtained?
    \item[] Answer: \answerNA{} 
    \item[] Justification: Neither crowd-sourcing nor research with human subjects has been employed.
    \item[] Guidelines:
    \begin{itemize}
        \item The answer NA means that the paper does not involve crowdsourcing nor research with human subjects.
        \item Depending on the country in which research is conducted, IRB approval (or equivalent) may be required for any human subjects research. If you obtained IRB approval, you should clearly state this in the paper. 
        \item We recognize that the procedures for this may vary significantly between institutions and locations, and we expect authors to adhere to the NeurIPS Code of Ethics and the guidelines for their institution. 
        \item For initial submissions, do not include any information that would break anonymity (if applicable), such as the institution conducting the review.
    \end{itemize}

\end{enumerate}

\end{document}